\def\eqref#1{equation~\ref{#1}}
\def\1{\bm{1}}
\DeclareMathAlphabet{\mathsfit}{\encodingdefault}{\sfdefault}{m}{sl}
\SetMathAlphabet{\mathsfit}{bold}{\encodingdefault}{\sfdefault}{bx}{n}
\newcommand{\E}{\mathbb{E}}
\newcommand{\R}{\mathbb{R}}
\DeclareMathOperator*{\argmax}{arg\,max}
\DeclareMathOperator*{\argmin}{arg\,min}
\definecolor{DarkBlue}{rgb}{0.1,0.1,0.5}
\definecolor{DarkGreen}{rgb}{0.1,0.5,0.1}
\definecolor{DarkBlue}{rgb}{0.3,0.3,0.70}
\definecolor{azure}{rgb}{0.0, 0.5, 1.0}
\definecolor{darkcerulean}{rgb}{0.03, 0.27, 0.49}
\definecolor{denim}{rgb}{0.08, 0.38, 0.74}
\definecolor{DarkGreen}{rgb}{0.3,0.7,0.3}
\definecolor{lighter-gray}{gray}{0.95}
\definecolor{AlgHighlight}{HTML}{228B22}
\newtheoremstyle{thmstyle}
{0.5em} 
{0.15em} 
{} 
{} 
{\bfseries} 
{.} 
{.5em} 
{} 
\theoremstyle{thmstyle} 
\newtheorem{thm}{Theorem}
\newtheorem{lem}{Lemma}
\newtheorem{propn}{Proposition}[section]
\newtheorem{claim}{Claim}[section]
\newtheorem{cor}{Corollary}
\newtheorem*{nonumtheorem}{Theorem}
\theoremstyle{definition}
\newtheorem{defn}{Definition}
\theoremstyle{remark}
\renewcommand{\1}{ \mathds{1}}
\newcommand{\A}{\mathcal{A}}
\renewcommand{\E}{\mathbb{E}}
\newcommand{\I}{\mathcal{I}}
\newcommand{\J}{\mathcal{J}}
\newcommand{\N}{\mathbb{N}}
\renewcommand{\O}{\mathcal{O}}
\renewcommand{\P}{\mathcal{P}}
\newcommand{\prob}{\mathbb{P}}
\renewcommand{\R}{\mathbb{R}}
\renewcommand{\S}{\mathcal{S}}
\newcommand{\Regret}{{\rm Regret}}
\newcommand{\CE}{\textsc{ConvExplore}}
\newcommand{\UE}{\textsc{UnifExplore}}
\newcommand{\SRM}{\textsc{SRM-ALG}}
\renewcommand{\I}{\mathcal{A}}
\renewcommand{\S}{\mathcal{C}}
\newcommand{\C}{\mathcal{C}}
\newcommand{\tr}{\top}
\newcommand{\CommentLines}[1]{}
\newcolumntype{x}[1]{>{\centering\let\newline\\\arraybackslash\hspace{0pt}}m{#1}}
\title{Causal Contextual Bandits with Adaptive Context}
\author{Rahul Madhavan,\\
        \addr IISc Bangalore
        \And
        Aurghya Maiti,\\
        \addr Columbia University
        \And
        Gaurav Sinha,\\
        \addr Microsoft Research
        \And
        Siddharth Barman,\\
        \addr IISc Bangalore}
\begin{document}

\maketitle

\begin{abstract}
  We study a variant of causal contextual bandits where the context is chosen based on an initial intervention chosen by the learner. At the beginning of each round, the learner selects an initial action, depending on which a stochastic context is revealed by the environment. Following this, the learner then selects a final action and receives a reward. Given $T$ rounds of interactions with the environment, the objective of the learner is to learn a policy (of selecting the initial and the final action) with maximum expected reward. In this paper we study the specific situation where every action corresponds to intervening on a node in some known causal graph. We extend prior work from the deterministic context setting to obtain simple regret minimization guarantees. This is achieved through an instance-dependent causal parameter, $\lambda$, which characterizes our upper bound. Furthermore, we prove that our simple regret is essentially tight for a large class of instances. A key feature of our work is that we use convex optimization to address the bandit exploration problem. We also conduct experiments to validate our theoretical results, and release our code at the \href{https://github.com/adaptiveContextualCausalBandits/aCCB}{project GitHub Repository}.
\end{abstract}

\section{Introduction}
	
	

Recent years have seen an active interest in causal bandits from the research community \citep{Lattimore,sen2017identifying,sen2017contextual,LeeBarenBoim2018,yabe2018causal,  LeeBareinboim2019,lu2020regret,Gaurav2020,lu2021causal,lu2022efficient,maiti2022causal,varici2022causal,subramanian2022causal,xiong2023combinatorial}. 
In this setting, one assumes an environment comprising of causal variables that are random variables that influence each other as per a given causal (directed, and acyclic) graph. 
Specifically, the edges in the causal DAG represent causal relationships between variables in the environment. 
If one of these variables is designated as a reward variable, then the goal of a learner then is to maximize their reward by \emph{intervening} on certain variables (i.e., by fixing the values of certain variables). The rest of the variables, that are not intervened upon, take values as per their conditional distributions, given their parents in the causal graph. In this work, as is common in literature, we assume that the variables take values in $\{0,1\}$. 
Of particular interest are causal settings wherein the learner is allowed to perform \emph{atomic interventions}. Here, at most one causal variable can be set to a particular value, while other variables take values in accordance with their underlying distributions. 


It is relevant to note that when a learner performs an intervention in a causal graph, they get to observe the values of multiple other  variables in the causal graph. Hence,  the collective dependence of the reward on the variables is observed through each intervention. That is, from such an observation, the learner may be able to make inferences about the (expected) reward under other values for the causal variables \citep{PetersBook}. In essence, with a single intervention, the learner is allowed to intervene on a variable (in the causal graph), allowed to observe all other variables, and further, is privy to the effects of such an intervention. Indeed, such an observation in a causal graph is richer than a usual sample from a stochastic process. Hence, a standard goal in causal bandits is to understand the power and limitations of interventions. This goal manifests in the form of developing algorithms that identify intervention(s) that lead to high rewards, while using as few observations/interventions as possible. We use the term \emph{intervention complexity} (rather than sample complexity) for our algorithm, to emphasize that interventions are richer than samples.
\begin{figure*}
\centering
\label{figure: advertiser motivation}
\vspace{-0.3in}
\includegraphics[width=0.98\linewidth]{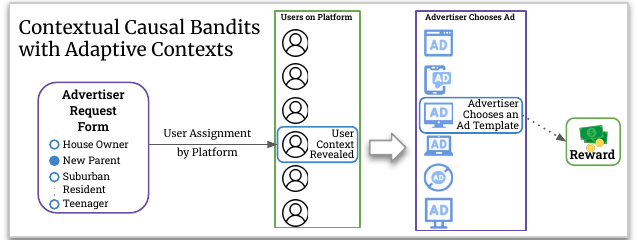}
\caption{Flowchart illustrating the decision-making process of an advertiser posting ads on a platform like Amazon, and the subsequent interaction with the platform.}
\vspace{-0.2in}
\end{figure*}

 In the learning literature, there are several objectives that an algorithm designer might consider. Cumulative regret, simple regret, and average regret have prominently been studied in literature \citep{LattimoreBook,slivkins2019introduction}.
 In this work we focus on minimizing simple regret, wherein the algorithm is given a time budget, up to which it may explore, at which time it has to output a near-optimal policy. 

Addressing causal bandits, the notable work of \cite{Lattimore} obtains an intervention-complexity bound for minimizing simple regret with a focus on atomic interventions and parallel causal graphs. \citet{maiti2022causal} extend this work to obtain intervention-complexity bounds for simple regret in causal graphs with unobserved variables. The work by \citet{lu2022efficient} extends this setting to causal Markov decision processes (MDPs), while addressing the cumulative regret objective. Combinatorial causal bandits have been studied by \citet{Feng_Chen_2023} and \cite{xiong2023combinatorial}. 


Causal contextual bandits have been studied by \citet{subramanian2022causal} where the contexts may be chosen by the learner (rather than be provided by the environment). Here we generalize \citet{subramanian2022causal} to a setting where the context is provided by the environment, adaptively, in response to an initial choice of the learner.



\textbf{Motivating Example:} Consider an advertiser looking to post ads on a web-page, say Amazon. They may make requests for a certain type of user demographic to Amazon. Based on this initial request, the platform may actually choose one particular user to show the ad to. At this time, certain details about the user are revealed to the advertiser. For example, the platform may reveal some of the user demographics, as well as certain details about their device. Based on these details, the advertiser may choose one particular ad to show the user. In case the user clicks the ad, the advertiser receives a reward. The goal of the learner is to find optimal choices for initial user preference, as well as ad-content such that user clicks are maximized. We illustrate this example through \hyperref[figure: advertiser motivation]{Figure \ref{figure: advertiser motivation}} where we indicate the choices available for template and content interventions.

\CommentLines{
}

	\subsection{Our Contributions}
	
	We develop an algorithm to identify near-optimal interventions in causal bandits with adaptive context, and show that the simple regret of such an algorithm is indeed tight for several instances. 
    We highlight the main contributions of our work below.

    \textbf{1.} We develop and analyze an algorithm for minimizing simple regret for causal bandits with adaptive context in an intervention efficient manner. We provide an upper-bound on intervention complexity in Theorem \ref{theorem:main}.
    
    \textbf{2.} Interestingly, the intervention complexity of our algorithm depends on an instance dependent structural parameter---referred to as $\lambda$ (see equation (\ref{eqn:lambda}))--- which may be much lower than $nk$, where $n$ is the number of interventions and $k$ is the number of contexts. 

    \textbf{3.} Notably, our algorithm uses a convex program to identify optimal interventions. Unlike prior work that uses optimization to design exploration (for example see \citet{yabe2018causal}), we show (in Appendix Section \ref{section: nature of optimization problems}) that the optimization problem we design is convex, and is thus computationally efficient. Using convex optimization to design efficient exploration is in fact a distinguishing feature of our work.

    \textbf{4.} We provide lower bound guarantees showing that our regret guarantee is tight (up to a log factor) for a large family of instances (see Section \ref{section: lower bounds} and Appendix Section \ref{appendix section: analysis of lower bounds}).

    \textbf{5.} We demonstrate using experiments (see Section \ref{section: experiments}) that our algorithm performs exceeding well as compared to other baselines. We note that this is because $\lambda \ll nk$ for $n$ causal variables and $k$ contexts.

    In conclusion, we provide a novel \hyperref[alg:best policy generator]{convex-optimization based algorithm} for Causal MDP exploration. We analyze the algorithm to come up with an \hyperref[eqn:lambda]{instance dependent parameter $\lambda$}. Further, we prove that our algorithm is sample efficient (see \hyperref[theorem:main]{Theorems \ref{theorem:main}} and \hyperref[theorem: Lower Bound for our algorithm]{\ref{theorem: Lower Bound for our algorithm}}).

\vspace{-0.05in}
\subsection{Additional Related Work}
\label{sec: related work}

\vspace{-0.05in}
\begin{table}[!h]
    \footnotesize
    \renewcommand{\arraystretch}{1.1}
    \begin{center}
    \begin{tabular}{|x{9.7cm} | x{4.5cm} |}
    \hline
    \textbf{Description} & \textbf{Reference}\\
    \hline
    Simple regret for bandits with parallel causal graphs & \citet{Lattimore}\\
    \arrayrulecolor{lighter-gray}
    \hline
    Simple regret for atomic soft interventions &\cite{sen2017identifying}\\
    \hline
    Simple regret for non-atomic interventions in causal bandits &\cite{yabe2018causal}\\
    \hline
    Cumulative regret for general causal graphs & \citet{lu2020regret}\\
    \hline
    Simple regret in the presence of unobserved confounders & \citet{maiti2022causal}\\
    \hline
    Cumulative regret for unknown causal graph structure & \citet{lu2021causal}\\
    \hline
    Cumulative regret for causal contextual bandits with latent confounders& \citet{sen2017contextual}\\
    \hline
    Simple and cumulative regret for budgeted causal bandits & \citet{Gaurav2020}\\
    \hline
    Cumulative regret for Linear SEMs & \citet{varici2022causal}\\
    \hline
        Cumulative regret for combinatorial causal bandits & \citet{Feng_Chen_2023}\\
    \hline
    Cumulative regret for Causal MDPs & \citet{lu2022efficient}\\
    \hline
    Best-intervention for combinatorial causal bandits & \citet{xiong2023combinatorial}\\
    \hline
    Additive Causal Bandits with Unknown Graph & \citet{malek2023additive}\\
    \hline
    Structural Causal Bandits with Unobserved Confounders &
    \cite{wei2024approximate}\\
    \hline
    Confounded Budgeted Causal Bandits &
    \cite{jamshidi2024confounded}\\
    \hline
    Cumulative Regret for Causal Bandits with Lipschitz SEMs &
    \cite{yan2024causal} \\
    \hline
    Simple regret for causal contextual bandits & \citet{subramanian2022causal}\\
    \hline
    Simple regret for causal contextual bandits with adaptive context & \textbf{Our work} \\
    \arrayrulecolor{black}
    \hline\hline
    \end{tabular}
    \renewcommand{\arraystretch}{1}
    \caption{Summary of prior work in causal bandits}
    \label{table: summary of prior work in causal bandits}
    \end{center}
\end{table}

\vspace{-0.08in}
Ever since the introduction of the causal bandit framework by \citet{Lattimore}, we have seen multiple works address causal bandits in various degrees of generality and using different modelling assumptions. \cite{sen2017identifying} addressed the issue of soft atomic interventions using an importance sampling based approach. Soft interventions in the linear structural equation model (SEM) setting was addressed recently by \citet{varici2022causal}. \citet{yabe2018causal} proposed an optimization based approach for non-atomic interventions. This work was extended by \citet{xiong2023combinatorial} to provide instance dependent regret bounds. They also provide guarantees for binary generalized linear models (BGLMs). The question of unknown causal graph structure was addressed by \citet{lu2021causal}, whereas \citet{Gaurav2020} study the case where interventions are more expensive than observations.

\citet{maiti2022causal} addressed simple regret for graphs containing hidden confounding causal variables, while cumulative regret in general causal graphs was addressed by \citet{lu2020regret}. A notable work by \citet{lu2022efficient} formulates the framework for causal MDPs, and they provide cumulative regret guarantees in this setting. Causal contextual bandits were addressed by \citet{subramanian2022causal,sen2017contextual}, and we extend these works to adaptive contexts. 

We summarize the main works in this thread in \hyperref[table: summary of prior work in causal bandits]{Table \ref{table: summary of prior work in causal bandits}} and provide a more detailed set of related works in \hyperref[appendix sec: related work appendix]{Appendix \ref{appendix sec: related work appendix}}.
\section{Notations and Preliminaries}
\vspace{-0.1in}
\label{section: notation and preliminaries}
\begin{figure*}    \centering     
    \subfigure[Illustrative figure for causal contextual bandit with adaptive context.]{
    \label{figure: two-layer MDP}
    \includegraphics[width=0.47\textwidth]{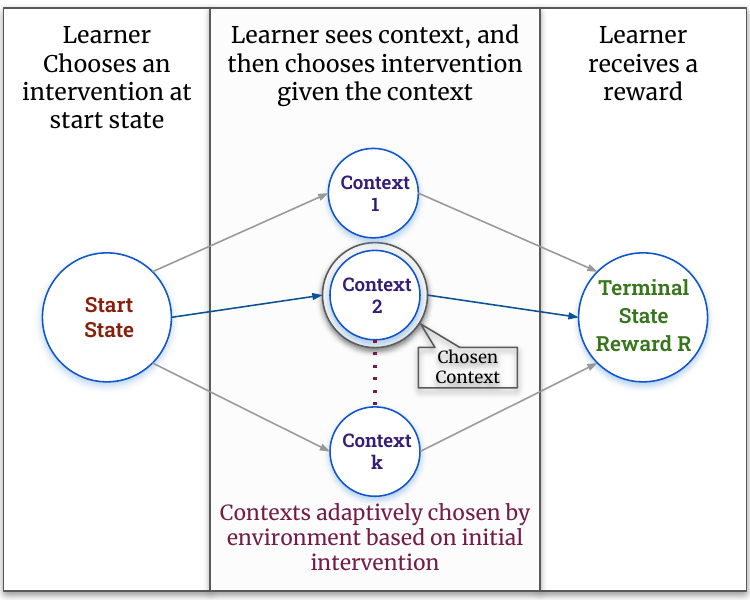}
    }
    \hfill
    \subfigure[Illustrative Figure for Causal Graph at start state and at some intermediate context\texorpdfstring{ $i\in[k]$.}{.}]{
    \label{figure: causal graphs}
    \includegraphics[width=0.47\textwidth]{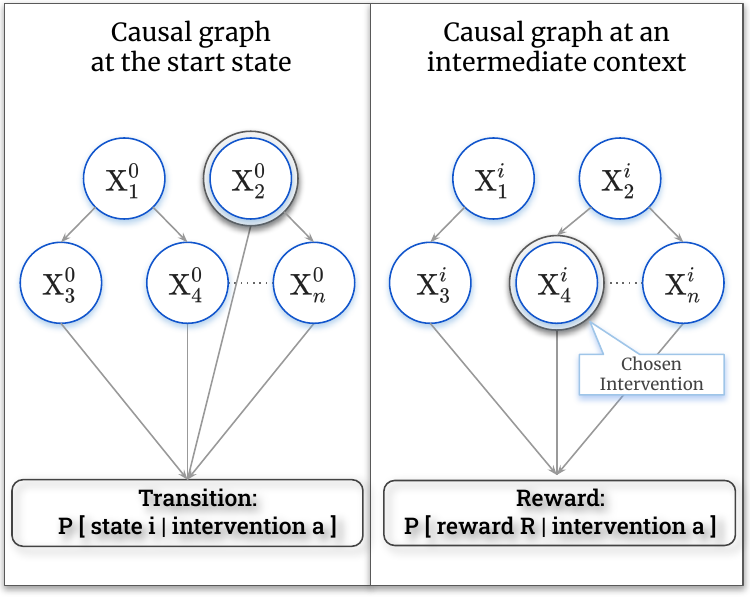}
    }
    \caption{The transition to a particular context (chosen context in the figure on the left) is decided by the environment, whereas the interventions at the start state and an intermediate context  (chosen interventions in the figure on the right) are chosen by the learner.}
\end{figure*}

	We model the causal contextual bandit with adaptive context as a contextual bandit problem with a causal graph corresponding to each context. The actions at each context are given by interventions on the causal graph. Additionally, we have a causal graph at the start state, and the context is stochastically dependent on the intervention on the causal graph at the start state. For ease of notation, we will call the start state of the learner as context $0$. 
    The agent starts at context $0$, chooses an intervention, then transitions to one of $k$ contexts $[k] =\{1,\dots,k\}$, chooses another intervention, and then receives a reward; see \hyperref[figure: two-layer MDP]{Figure \ref{figure: two-layer MDP}}.

    \textbf{Assumptions on the Causal Graph:} \label{assumptions: model assumptions} Formally, let $\C$ be the set of contexts $\{0,1,\dots,k\}$. Then, at each context, there is a Causal Bayesian Network (CBN) represented by a causal graph; see \hyperref[figure: causal graphs]{Figure \ref{figure: causal graphs}}.
    In particular, at each context $i \in \C$, the causal graph is composed of $n$   variables $\{X^i_1,\dots,X^i_n\}$. Each $X^i_j$ takes values from $\{0,1\}$, with an associated conditional probability (of being equal to 0 or 1), given the other variables in the causal graph.     
    We make the following mild assumptions on the causal graph at each context. \vspace{-0.05in}
    \begin{enumerate}
        \item The distribution of any node $X_i$ conditioned on it's parents in the causal graph is a Bernoulli random variable with a fixed parameter.
        \item The causal graph at each context is semi-Markovian. This is equivalent to making the following assumptions on the graph. No hidden variable in the graph has a parent. Further, every hidden variable has at most two children, both observable.
        \item We transform the causal graph for each context as follows: For every hidden variable with two children, we introduce bidirected edges between them. If no path of bidirected edges exists between an intervenable node and its child, the graph is identifiable -- a necessary and sufficient condition for estimating the graph's associated distribution.\citep{tian2002testable}.
    \end{enumerate}

\begin{table*}[!t]
\renewcommand{\arraystretch}{1.25}
    \small
        \caption{Summary of notations for our paper} \label{table: causal notations table}
        \begin{center} 
		\begin{tabular}{|x{2.4cm}|x{10cm}|}
		    \hline
			\hline
			\textbf{Notation} & \textbf{Explanation}\\
			\hline
			Context $0$ & Start state\\
			Context $[k]$ & Intermediate contexts $\{1,\dots,k\}$\\
			\hline
			$X^i_j$ & Causal Variables: $X^i_j\in \{0,1\}\enspace \text{ for all } i \in [k],\enspace j\in[n]$\\
			\hline
			$do(\cdot)$&An atomic intervention of the form $do()$, $do(X_j^i=0)$ or $do(X_j^i=1)$\\
			\hline
			$\I_i$ & Set of atomic interventions at context $i$ \\
			$N$ & $N:= |\I_i| = 2n+1\enspace \text{ for all } i \in [k]$\\
			\hline
			$R_i$ & Reward on transition from context $i$\\
			\hline
			$m_i$& Causal observational threshold at context $i \in \{0,\dots,k\}$\\
			$M$ & diagonal matrix of $m_i$ values\\
            \hline
            $P\in\R^{N\times k}$& Transition probabilities matrix: $\left[ P_{(a,i)} = \prob\{ i \enspace \mid \enspace a \} \right]_{a \in \I_0, i\in[k] }$\\
            \hline
            
            $p_+$ &  Transition threshold $p_+ = \min\{P_{(a,i)} \mid  P_{(a,i)} > 0\}$\\
            \hline
            $\pi:\S \to \I$ & Policy, a map from contexts to interventions.\newline i.e. $\pi(i)\in\I_i$ for $i\in \{0\}\cup [k]$\\
            \hline
            $\E\left[R_i  \mid  \pi(i) \right]$ & Expectation of the reward at context $i$ given intervention $\pi(i)$ \\ 
			\hline
			\hline
		\end{tabular}
	\end{center}
 \renewcommand{\arraystretch}{1}
	\end{table*}

    \textbf{Interventions:} Furthermore, we are allowed atomic interventions, i.e., we can select \emph{at most} one variable and set it to either $0$ or $1$. We will use $\I_i$ to denote the set of atomic interventions available at context $i\in \{0, \ldots,k\}$; in particular, $\I_i = \left\{do() \right\} \cup \left\{do(X^i_j=0), do(X^i_j=1)\right\}$ for  $\ j \in [n] $. We note that $do()$ is an empty intervention that allows all the variables to take values from their underlying conditional distributions. Also, $do(X^i_j=0)$ and $do(X^i_j=1)$ set the value of variable $X^i_j$ to $0$ and $1$, respectively, while leaving all the other variables to independently draw values from their respective distributions. Note that for all $ i \in [k]$, we have $\lvert \I_i \rvert = 2n+1$. Write $N:= 2n+1$.
	
	\textbf{Reward:} The environment provides the learner with a $\{0,1\}$ reward upon choosing an intervention at context $i\in[k]$, which we denote as $R_i$.
    Note that $R_i$ is a stochastic function of variables $X^i_1,\dots, X^i_n$. In particular, for all $j \in [n]$ and each realization $X^i_j = x_j \in \{0,1\}$, the reward $R_i$ is distributed as $\prob \{R_i=1 \mid X^i_1 = x_1, \ldots$ $,X^i_n = x_n \}$. 
    
    Given such conditional probabilities, we will write $\E[R_i \mid a]$ to denote the expected value of reward $R_i$ when intervention $a \in \I_i$ is performed at context $i \in [k]$. Here the expectation is over the parents of the variable $R_i$ in the causal graph, with the intervened variable set at the required value. Note that these parents (of $R_i$) may in turn have conditional distributions given their parents. The leaf nodes of the causal graph are considered to have unconditional Bernoulli distributions.
    For instance, $\E[R_i \mid do(X^i_j=1)]$ is the expected reward when variable $X^i_j$ is set to $1$, and all the other variables independently draw values from their respective (conditional) distributions. Indeed, the goal of this work is to develop an algorithm that maximizes the expected reward at context $0$.
	

    \textbf{Causal Observational Threshold:} We denote by $m_i$, the causal observational threshold\footnote{\cite{maiti2022causal} extend the causal observational threshold from \cite{Lattimore} to the general setting of causal graphs with unobserved confounders} from \cite{maiti2022causal} at context $i$. This is computed as follows. Let $\widehat{q}_j^i = \min_{\text{Parents}(X_j^i),x\in\{0,1\}} \prob\{X_j^i = x \mid \text{Parents}(X_j^i)\}$.
    Further, let $S^i_\tau = \{\widehat{q}_j^i: (\widehat{q}_j^i)^c < 1/\tau\}$ be sets parameterized by $\tau$ for every $\tau \in [2,2n]$, where $c$ indicates the c-component size.
    Then $m_i = \min \{ \tau \text{ such that } |S^i_\tau | \leq \tau\}$.
    The existence of such a threshold at each context is guaranteed by the \hyperref[assumptions: model assumptions]{assumptions} we made on the CBNs. In addition, let $\smash{ M\in\N^{k\times k} }$ denote the diagonal matrix of $m_1, \ldots, m_k$.


 \CommentLines{

    }
	

	\textbf{Transitions at Context 0:} 
    At context $0$, the transition to the intermediate contexts $[k]$ stochastically depends on the random variables $\{X^0_1,\dots,X^0_n\}$. Here, $\prob \{i \enspace\mid\enspace a\}$ denotes the probability of transitioning into context $i \in [k]$ with atomic intervention $a \in \I_0$; recall that $\I_0$ includes the do-nothing intervention. We will collectively denote these transition probabilities as matrix $P:= \left[ P_{(a,i)} = \prob \{ i \enspace\mid\enspace a \} \right]_{a \in \I_0 , i\in[k]}$. Furthermore, write the transition threshold $p_+$ to denote the minimum non-zero value in $P$. Note that matrix $P \in \R^{\lvert \I_0 \rvert \times k}$ is fixed, but unknown. 
	
	
	\textbf{Policy:} A map $\pi:\{0,\dots,k\} \to \I$, between contexts and interventions (performed by the algorithm), will be referred to as a policy. Specifically, $\pi(i) \in \I_i$ is the intervention at context $i\in\{0,1, \ldots, k\}$. Note that, for any policy ${\pi}$, the expected reward, which we denote as $\mu(\pi)$, is equal to $\sum_{i=1}^k \E \left[R_i \ \mid \ {\pi}(i) \right] \cdot \prob \{ i \ \mid \ \pi(0) \}$. 
	Maximizing expected reward, at each intermediate context $i \in [k]$, we obtain the overall optimal policy $\pi^*$ as follows. For $i \in [k]$: 
	
	\begin{align}
	    \pi^*(i) &= \argmax_{a \in \I_i} \ \E \left[ R_i \mid a\right]\vspace{-0.05in}\\
	    \vspace{-0.05in}
	    \pi^*(0) &= \argmax_{b \in\I_0} ( \sum_{i=1}^k \  \E \left[R_i\mid \pi^*(i) \right] \cdot \prob \{ i\mid b \} )
	\end{align}

	\vspace{-0.05in}
	Our goal then is to find a policy $\pi$ with (expected) reward as close to that of $\pi^*$ as possible.

    \textbf{Simple Regret:} 
	Conforming to the standard \emph{simple-regret} framework, the algorithm is given a time budget $T$, i.e., the learner can go through the following process $T$ times --- (a) start at context $0$. (b) Choose an intervention $a \in \I_0$. (c) Transition to context $i \in [k]$. (d) Choose an intervention $a \in \I_i$. (e) Receive reward $R_i$. At the end of these $T$ steps, the goal of the learner is to compute a policy. Let the policy returned by the learner be $\widehat{\pi}$. Then the simple regret is defined as the expected value: $\E[\mu(\pi^*) - \mu(\widehat{\pi}]$. Our algorithm seeks to minimize such a simple regret.

\vspace{-0.05in}
\section{Main Algorithm and its Analysis}
\vspace{-0.05in}
	We now provide the details relating to our main Algorithm, viz. $\CE$. 

    \vspace{0.05in}
    \begin{figure*}[!thbp]
    \begin{center}
    \begin{minipage}[t]{1\textwidth}
    \begin{algorithm}[H]
    \small
		\caption{$\CE$: Convex Exploration Algorithm}
		\label{alg:best policy generator}
		\begin{algorithmic}[1] 
			\State \textbf{Input:} Total rounds $T$
			\State \hyperref[alg:estimateTransitionProbabilities]{Estimate the transition probabilities} $\widehat P$ from the start state to the intermediate contexts for time $T/3$, by  performing interventions at context 0 in a round robin manner.
            \State \hyperref[alg:estimateCausalParameters]{Estimate the causal observational threshold matrix} $\hat M$ for time $T/3$, by  performing interventions at context 0 as per frequency vector $\tilde{f}$ where $\tilde{f} \gets \argmax\limits_{\text{fq.~vector } f } \enspace \min\limits_{\text{contexts [k]}}  \widehat{P}^\tr  f$.
            \State \hyperref[alg:estimateRewards]{Estimate the reward matrix} $\widehat{\mathcal{R}}$ for time $T/3$, by  performing interventions \footnote{Computation of $\widehat{f}^*$ is efficient as we show that the problem is \hyperref[lem:optimization problem is convex]{Convex}.} at context 0 as per frequency vector $\widehat{f}^*$  where $\widehat{f}^* \gets \argmin\limits_{\text{fq.~vector } f } \, \max\limits_{\text{interventions } \mathcal{I}_0} \widehat{P}\hat{M}^{1/2}\left(\widehat{P}^\tr f \right)^{\circ-\frac{1}{2}}$. \label{step:fstar}
   
			\State \textcolor{AlgHighlight}{\textbf{Estimate the optimal action at each intermediate context}} $\widehat{\pi}(i) \enspace \forall i \in [k]$ based on $\widehat{\mathcal{R}}$. Let the estimate of optimal reward be $\widehat{\mathcal{R}}(\widehat{\pi}(i))$.
			\State \textcolor{AlgHighlight}{\textbf{Estimate the optimal action at the start context}} $\widehat{\pi}(0)$, based on the transition probabilities $\widehat{P}$ and the optimal reward estimates $\widehat{\mathcal{R}}(\widehat{\pi}(i))$.
			\State \textbf{return} $\widehat{\pi} = \{\widehat{\pi}(0),\widehat{\pi}(1),\dots,\widehat{\pi}(k)\}$
			\footnotetext[2]{We show detailed Algorithms for estimation of transition probabilities $P$ (line 2), estimation of causal observational threshold $M$ (line 3), and estimation of rewards $\mathcal{R}$ (line 4) in Appendix \ref{appendixsection:Algorithms in detail}}.
		\end{algorithmic}
	\end{algorithm}
	\end{minipage}
	\end{center}
    \end{figure*}

    The algorithm can be described by five main steps. In the first step, we estimate the transitions to intermediate contexts. In the second step, we estimate the causal observational thresholds at these contexts. In the third step, we estimate the rewards upon doing interventions at these contexts. With good reward estimates and transition probability estimates, the computation of a good policy at the intermediate contexts (step 4) and at the start state (step 5) is straightforward. 
    This Algorithm relies on three subroutines which are detailed in Section \ref{appendixsection:Algorithms in detail} of the Appendix. The key aspect of this algorithm is in designing the exploration of interventions (at the start state and at the intermediate contexts) to be regret-optimal -- i.e. trading off exploration time between different interventions such that the policy eventually obtained has near-optimal reward.

	Our algorithm (\hyperref[alg:best policy generator]{\CE}) uses subroutines to estimate the transition probabilities, the causal parameters,  and the rewards. From these, it outputs the best available interventions as its policy $\widehat{\pi}$. Given time budget $T$, the algorithm uses the first $T/3$ rounds to estimate the transition probabilities (i.e., the matrix $P$) in \hyperref[alg:estimateTransitionProbabilities]{Algorithm \ref{alg:estimateTransitionProbabilities}}. The subsequent $T/3$ rounds are utilized in \hyperref[alg:estimateCausalParameters]{Algorithm \ref{alg:estimateCausalParameters}} to estimate  causal parameters $m_i$s. Finally, the remaining budget is used in \hyperref[alg:estimateRewards]{Algorithm \ref{alg:estimateRewards}} to estimate the intervention-dependent reward $R_i$s, for all intermediate contexts $i\in[k]$.

    To judiciously explore the interventions at context $0$, \hyperref[alg:best policy generator]{\CE} computes frequency vectors $f\in\R^{\lvert  \I_0 \rvert}$. In such vectors, the $a$th component $f_a \geq 0$ denotes the fraction of time that each intervention $a \in \I_0$ is performed by the algorithm, i.e., given time budget $T'$, the intervention $a$ will be performed $f_a T'$ times. Note that, by definition, $\sum_a f_a = 1$ and the frequency vectors are computed by solving convex programs over the estimates. The algorithm and its subroutines throughout consider empirical estimates, i.e., find the estimates by direct counting. Here, let $\widehat{P}$ denote the computed estimate of the matrix $P$ and $\smash{\hat{M}}$ be the estimate of the diagonal matrix $M$.  We obtain a regret upper bound via an optimal frequency vector $\widehat{f}^*$ (see Step \ref{step:fstar} in \hyperref[alg:best policy generator]{\CE}).

	Recall that for any vector $x$ (with non-negative components), the Hadamard exponentiation ${\circ-0.5}$ leads to the vector $y = x^{\circ-0.5}$ wherein $y_i = 1/\sqrt{x_i}$ for each component $i$.  
	We next define a key parameter $\lambda$ that specifies the regret bound in Theorem \ref{theorem:main} (below). 
	At a high-level, parameter $\lambda$ captures the ``exploration efficacy'' in the MDP, that takes into account the transition probabilities $P$ and the exploration requirements $M$ at the intermediate layer. Identification of this parameter is a relevant technical contribution of our work; see Section \ref{appendix section: proof of theorem 1} for a detailed derivation of $\lambda$.
	
	\begin{align}
		\lambda := \min_{\text{fq.~vector} f} \ \left\lVert PM^{0.5}  \left(P^\tr  f \right)^{\circ-0.5} \right\rVert_\infty^2 \label{eqn:lambda}
	\end{align}
	
	Furthermore, we will write $f^*$ to denote the optimal frequency vector in equation (\ref{eqn:lambda}). Hence, with vector $\nu := P{M}^{0.5}  (P^\tr  f^* )^{\circ-0.5}$, we have $\lambda = \max_a  \nu_a^2$. 
	Note that  Step \ref{step:fstar} in \hyperref[alg:best policy generator]{\CE} addresses an analogous optimization problem, albeit with the estimates $\smash {\widehat{P}}$ and $ \hat{M}$. Also, we show in \hyperref[lem:optimization problem is convex]{Lemma \ref{lem:optimization problem is convex}} (see Section \ref{section: nature of optimization problems} in the supplementary material) that this optimization problem is convex and, hence, Step \ref{step:fstar} admits an efficient implementation.
	
	To understand the behaviour of $\lambda$,
	we first note that whenever the $m_i$ values at the contexts $i\in[k]$ are low, the $\lambda$ value is low. Specifically, the $m_i$ values can go as low as $2$ (when the $q^i_j$s are all $\frac{1}{2}$), removing the dependence of $\lambda$ on $n$. The upper-bound on $\lambda$ is $nk$. We see this by first upper-bounding each $m_i$ by $n$. Then, note that whenever $\max_{a\in\A}P\{i|a\} \geq 1/k$, then $\exists f$ such that $P^\top f = u$ where $u=\{\frac{1}{k},\dots,\frac{1}{k}\}$. Now we can compute that $||P\cdot u^{\circ -0.5}||^2_{\infty} = k$, and thereby $\lambda < nk$; See footnote\footnotemark[2].
	
	\footnotetext[2]{$\lambda$ is upperbounded by kn, but is typically significantly smaller (as m may be much smaller than n).}

	The following theorem that upper bounds the regret of \hyperref[alg:best policy generator]{\CE} is the main result of the current work. The result requires the algorithm's time budget to be at least 
	$\label{eqn: T0}	    T_0 := \widetilde{O}\left(N\max(m_i)/p_+^3\right)$

	\begin{thm} \label{theorem:main}
		Given number of rounds $T \geq T_0$ and $\lambda$ as in equation (\ref{eqn:lambda}), \hyperref[alg:best policy generator]{\CE} achieves regret 
		$$\Regret_T \in \O\left(\sqrt{\max\left\{\frac{\lambda}{T},\frac{m_0}{Tp_+}\right\}\log\left(NT\right)}\right)$$
		
	\end{thm}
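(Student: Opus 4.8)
The plan is to decompose the simple regret into the loss from selecting suboptimal interventions at the intermediate contexts $[k]$ and the loss from selecting a suboptimal intervention at context~$0$, and then to bound the first using the convex program that defines $\lambda$ and the second using the transition estimates. \textbf{Regret decomposition.} Let $\widehat{\mathcal R}_i(a)$ be the phase-3 empirical estimate of $\E[R_i\mid a]$, and set $\epsilon_i:=\max_{a\in\I_i}|\widehat{\mathcal R}_i(a)-\E[R_i\mid a]|$. Since $\widehat\pi(i)$ is the empirically best intervention at context~$i$, the usual ``selecting the empirical best'' inequality gives $\E[R_i\mid\pi^*(i)]-\E[R_i\mid\widehat\pi(i)]\le 2\epsilon_i$ and $|\widehat{\mathcal R}_i(\widehat\pi(i))-\E[R_i\mid\pi^*(i)]|\le\epsilon_i$. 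Adding and subtracting $\sum_i\E[R_i\mid\pi^*(i)]\,P_{\widehat\pi(0),i}$, using $\widehat\pi(0)=\argmax_{b\in\I_0}\sum_i\widehat{\mathcal R}_i(\widehat\pi(i))\,\widehat P_{b,i}$ together with one more empirical-best argument at context~$0$, and bounding $\E[R_i\mid\cdot]\le 1$, one obtains an absolute constant $C$ with
$$\mu(\pi^*)-\mu(\widehat\pi)\ \le\ C\left(\max_{b\in\I_0}\sum_{i\in[k]}P_{b,i}\,\epsilon_i\ +\ \max_{b\in\I_0}\sum_{i\in[k]}\bigl|\widehat P_{b,i}-P_{b,i}\bigr|\right),$$
where we also used $\widehat P\approx P$ to replace $\widehat P_{b,i}$ by $P_{b,i}$ in the first sum up to a lower-order term.

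\textbf{The reward term.} In phase~3, \CE{} performs the interventions at context~$0$ with frequency vector $\widehat f^*$ for $T/3$ rounds (the reward subroutine carrying out the required causal-observational exploration within each visited context), so context~$i$ is visited $n_i$ times, and $n_i$ concentrates around $(T/3)(P^\tr\widehat f^*)_i$; the hypothesis $T\ge T_0$ and the fact that the minimiser $\widehat f^*$ keeps every coordinate $(\widehat P^\tr\widehat f^*)_i$ bounded away from $0$ (otherwise the objective, which contains $(\widehat P^\tr f)_i^{-1/2}$, would be infinite) ensure that all $n_i$ are large with high probability. Applying the causal-observational-threshold concentration bound of \cite{maiti2022causal} (extending \cite{Lattimore}) at each visited context, $\epsilon_i=\widetilde{O}(\sqrt{m_i/n_i})$, hence
$$\max_{b\in\I_0}\sum_{i\in[k]}P_{b,i}\,\epsilon_i\ =\ \widetilde{O}\!\left(\tfrac{1}{\sqrt{T}}\ \left\lVert P M^{1/2}\bigl(\widehat P^\tr\widehat f^*\bigr)^{\circ-1/2}\right\rVert_\infty\right).$$
To finish, I would show the norm on the right is $O(\sqrt\lambda)$. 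Since $\widehat f^*$ minimises the empirical version of the program in~(\ref{eqn:lambda}), its empirical objective value is at most that of the true minimiser $f^*$ of~(\ref{eqn:lambda}); a perturbation argument with $\widehat P\approx P$, $\widehat M\approx M$ (valid because $T\ge T_0$) then shows both that this empirical value is $O(\sqrt\lambda)$ and that replacing $\widehat P,\widehat M$ by $P,M$ inside the norm changes it only by a constant factor. Combining, $\max_{b}\sum_i P_{b,i}\epsilon_i=\widetilde{O}(\sqrt{\lambda/T})$, which yields the $\sqrt{(\lambda/T)\log(NT)}$ contribution.

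\textbf{The transition term and wrap-up.} Phase~1 estimates $\widehat P$ from $T/3$ rounds; exploiting the causal graph at context~$0$, the subroutine (Algorithm~\ref{alg:estimateTransitionProbabilities}) achieves, once $T\ge T_0$ (which makes the supports $\{i:P_{b,i}\ge p_+\}$ correctly identifiable and puts the causal estimator in its valid regime), the variance-aware bound $|\widehat P_{b,i}-P_{b,i}|=\widetilde{O}(\sqrt{m_0 P_{b,i}/T})$ for all $b\in\I_0,\ i\in[k]$ with high probability, up to an additive $\widetilde{O}(m_0/(p_+T))$ lower-order term. Since each row of $P$ has at most $1/p_+$ nonzero entries (each at least $p_+$, summing to $1$), Cauchy--Schwarz over those entries gives $\sum_i|\widehat P_{b,i}-P_{b,i}|=\widetilde{O}(\sqrt{(m_0/T)\cdot(1/p_+)\sum_i P_{b,i}})=\widetilde{O}(\sqrt{m_0/(Tp_+)})$. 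Substituting the two terms into the decomposition, union-bounding over the $O(Nk)$ estimation events with failure probability $1/\mathrm{poly}(N,T)$ (which produces the $\log(NT)$ factor), and noting that on the failure event the regret is at most $1$ and hence contributes negligibly to the expectation, gives $\Regret_T=\widetilde{O}(\sqrt{\lambda/T}+\sqrt{m_0/(Tp_+)})=\O(\sqrt{\max\{\lambda/T,\ m_0/(Tp_+)\}\log(NT)})$.

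\textbf{Main obstacle.} The technically delicate step is the perturbation analysis inside the reward term: one must show that solving the convex program with the empirical $\widehat P,\widehat M$ instead of the true $P,M$ loses only a constant factor in the effective exploration budget $\lambda$. Because the objective contains the Hadamard inverse square root $(\widehat P^\tr f)^{\circ-1/2}$, this needs a uniform lower bound on $(\widehat P^\tr f)_i$ across the relevant frequency vectors, obtained through the transition threshold $p_+$, and it is precisely this requirement that fixes the burn-in $T_0=\widetilde{O}(N\max_i m_i/p_+^3)$. Everything else reduces to routine Bernstein/Chernoff bookkeeping together with the cited causal-observational-threshold lemma, and to checking (as in Lemma~\ref{lem:optimization problem is convex}) that Step~\ref{step:fstar} is a convex program, so that $\widehat f^*$ is actually computable.
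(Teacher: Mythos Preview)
Your proposal is correct and follows essentially the same approach as the paper: the paper likewise decomposes the suboptimality into a reward-estimation term bounded via the $\lambda$ expression (using exactly your perturbation argument, $\widehat\lambda\le 8\lambda$ from $\widehat P\in[\tfrac{2}{3}P,\tfrac{4}{3}P]$ and $\widehat M\le 2M$) and a transition-estimation term of order $\sqrt{m_0/(Tp_+)}$, wrapped in a good-event/bad-event split with $\Pr[\text{bad}]\le 5k/T$. The only cosmetic difference is that the paper obtains the $\ell_1$ transition bound directly from Devroye's multinomial inequality (using support size $\le 1/p_+$) rather than your entrywise Bernstein plus Cauchy--Schwarz, but both yield the same $\widetilde O(\sqrt{m_0/(Tp_+)})$.
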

	Observe that $m_0/Tp_+$ is independent of the number of contexts and interventions. Therefore $\lambda$ dominates when number of interventions at an  intermediate context is large.
	

\section{Analysis of the Lower Bound}
    \label{section: lower bounds}
    
    Since $\CE$ solves an optimization problem, it is a priori unclear that a better algorithm may not provide a regret guarantee better than Theorem \ref{theorem:main}. In this section, we show that  for a large class of instances, it is indeed the case that the regret guarantee we provide is optimal. 
    We provide a lower bound on regret for a family of instances. For any number of contexts $k$, we show that there exist transition matrices $P$ and reward distributions ($\E[R_i \mid a]$) such that regret achieved by \CE $\ $(Theorem \ref{theorem:main}) is tight, up to log factors. 
    
    \begin{thm}
    \label{theorem: Lower Bound for our algorithm}
    For any $q^i_j$ corresponding to causal variables at contexts $i \in [k]$,
    there exists a transition matrix $P$, and probabilities $q^0_j$ corresponding to causal variables $\{X^0_j\}_{j\in[n]}$, and reward distributions, such that the simple regret achieved by \textit{any} algorithm is 
    $$\Regret_T\in \Omega\left(\sqrt{\frac{\lambda}{T}}\right)$$
    
    \end{thm}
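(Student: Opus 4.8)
The regret guarantee of $\CE$ in Theorem~\ref{theorem:main} is governed by $\lambda$, which is itself an optimization over frequency vectors balancing transition probabilities $P$ against the observational thresholds $M$. To match this with a lower bound, I would first fix the contexts' causal parameters $q^i_j$ (hence $M$) as given, then engineer a transition matrix $P$ for which the minimax quantity defining $\lambda$ is ``saturated'' — i.e., where no frequency vector can spread exploration budget efficiently across all the informative contexts. A natural choice is to make $P$ essentially block-like or identity-like (after padding with do-nothing), so that intervention $a$ at context $0$ leads almost deterministically to context $a$; then $P^\tr f$ is (a permutation of) $f$ itself, and $\lambda$ reduces to $\min_f \max_i m_i / f_i$, which by a water-filling argument equals $\big(\sum_i \sqrt{m_i}\big)^2$ up to the $p_+$-perturbation. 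This makes the structural meaning of $\lambda$ transparent and sets up the counting argument.

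\textbf{Second, I would embed a reward-estimation hard instance at the intermediate contexts.} For each context $i$, set all rewards to be $1/2$ except that in exactly one instance (chosen by an adversary / nature) one intervention at one ``planted'' context $i^\star$ has reward $1/2 + \Delta_i$ for a carefully tuned gap $\Delta_i$. The learner, to have small simple regret, must identify whether context $i^\star$'s best intervention beats $1/2$; distinguishing a $\mathrm{Bernoulli}(1/2)$ from $\mathrm{Bernoulli}(1/2+\Delta_i)$ requires $\Omega(1/\Delta_i^2)$ samples \emph{of that context}, and because of the observational threshold structure one effectively needs $\Omega(m_i/\Delta_i^2)$ interventions routed through context $i$ to resolve it (this is exactly the role $m_i$ plays in the upper bound, used here in reverse). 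Since routing through context $i$ requires playing the corresponding intervention at context $0$, and the $T$ rounds must be divided among the $k$ contexts, a pigeonhole/averaging argument over which context is planted forces some context to receive at most a $1/k$-ish share — more precisely, the optimal allocation is the water-filling one, giving $\sum_i m_i/\Delta_i^2 \le T$ as the feasibility constraint. Optimizing the resulting regret $\sum_i \Delta_i \cdot (\text{prob. of transitioning to } i)$ against this constraint via Lagrange multipliers (or Cauchy–Schwarz) yields the $\sqrt{\lambda/T}$ bound.

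\textbf{Third, I would make the information-theoretic step rigorous via a standard two-point (Le Cam) or multiple-hypothesis (Fano / Assouad) argument.} Concretely, for each candidate planted context $i$ build an instance $\mathcal{E}_i$; the KL divergence between the transcripts under $\mathcal{E}_i$ and the all-$1/2$ null instance $\mathcal{E}_0$ is bounded by $(\text{expected \# interventions through context } i \text{ resolving the gap}) \cdot \Delta_i^2$, up to the $m_i$ factor coming from the c-component / confounding structure of the CBN at context $i$. If the learner's expected regret were $o(\sqrt{\lambda/T})$ on every instance, then on each $\mathcal{E}_i$ it must allocate enough budget to context $i$ to drive that KL to $\Omega(1)$; summing the budget constraints over a suitably chosen subfamily and using $\sum_i (\text{budget}_i) \le T$ produces a contradiction with the tuned $\Delta_i$'s.

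\textbf{The main obstacle} I anticipate is the $m_i$ factor: showing that $\Omega(m_i)$ (not just $\Omega(1)$) interventions per ``effective sample'' are genuinely necessary at a context requires re-deriving, in the lower-bound direction, the confounded-causal-bandit hardness of \cite{maiti2022causal} — i.e., exhibiting CBNs at the intermediate contexts where no single atomic intervention reveals the reward gap and the learner is forced to ``pay'' the c-component price. One clean route is to reuse their lower-bound construction verbatim as the gadget at each context $i$, so that the per-context hardness is $\Omega(m_i/\Delta_i^2)$ by citation, and then the only new work is the outer layer: propagating this through the stochastic transitions $P$ and the budget-splitting/water-filling optimization to recover exactly the expression $\lambda = \min_f \|PM^{0.5}(P^\tr f)^{\circ -0.5}\|_\infty^2$. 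Verifying that the water-filling optimum of the lower-bound allocation problem coincides (up to constants and the $\log$ gap already conceded in the theorem statement) with this $\lambda$ is the second delicate point, and is where the choice of a nearly-deterministic $P$ pays off, since it linearizes $P^\tr f$.
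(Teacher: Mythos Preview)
Your high-level plan --- deterministic (identity-like) $P$, a planted $1/2 + \text{gap}$ reward at one context/intervention, and a two-point information-theoretic argument --- is exactly the route the paper takes. However, there is one concrete error and two places where the paper's execution differs from (and is simpler than) what you sketch.

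\textbf{The $\lambda$ computation is wrong.} With the identity-like $P$ you correctly reduce $\lambda$ to $\min_{f\in\Delta}\max_i m_i/f_i$, but this equals $\sum_i m_i$, not $(\sum_i\sqrt{m_i})^2$: equalizing $m_i/f_i = c$ forces $f_i = m_i/c$, hence $c=\sum_i m_i$. The quantity $(\sum_i\sqrt{m_i})^2$ is what you get from minimizing $\sum_i m_i/f_i$ (an $\ell_1$-type objective), whereas $\lambda$ is defined via the $\ell_\infty$ norm. Since $(\sum_i\sqrt{m_i})^2 \ge \sum_i m_i$ strictly unless all $m_i$ are equal, pushing your Lagrange/Cauchy--Schwarz step with the wrong target would aim at a lower bound that exceeds the true $\lambda$ and hence contradicts the upper bound; the argument would not close. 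The paper computes $\lambda=\sum_\ell m_\ell$ explicitly (their Proposition~\ref{propn: Value of Lambda for chosen transition probability matrix}).

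\textbf{The $m_i$ factor does not require the Maiti et al.\ gadget.} You flag this as the main obstacle and propose to import their per-context lower bound. The paper bypasses this entirely with a pigeonhole: given the algorithm's visitation fractions $r_i$ under the null instance, it defines $\J_i\subseteq\I_{m_i}$ as those interventions explicitly performed at most $2Tr_i/m_i$ times, shows $|\J_i|\ge m_i/2$, and observes that any $a\in\J_i$ is \emph{observed} (performed or seen under $do()$) at most $3Tr_i/m_i$ times in expectation. This directly feeds the KL bound $\mathrm{KL}\le 6\beta^2\cdot 3Tr_i/m_i$ with no black-box citation.

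\textbf{A single gap $\beta$ suffices.} Rather than per-context gaps $\Delta_i$ with a constrained optimization, the paper sets one $\beta=\min\{1/3,\sqrt{\sum_\ell m_\ell/(18T)}\}$ and uses the elementary fact $\min_{\rho\in\Delta}\max_\ell m_\ell/\rho_\ell \ge \sum_\ell m_\ell$ to find a context $s$ with $18Tr_s\beta^2/m_s\le 1$; Bretagnolle--Huber then gives constant failure probability on the instance planted at $(s,\alpha_s)$ with regret $\beta=\Theta(\sqrt{\lambda/T})$. Your variable-$\Delta_i$ route would also work once the $\lambda$ value is corrected, but it is unnecessary machinery here.
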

\vspace{-0.05in}
    We provide the details of the proof of Theorem \ref{theorem: Lower Bound for our algorithm} in Section \ref{appendix section: analysis of lower bounds} in the supplementary material.

    \CommentLines{
	 }

    \vspace{-0.12in}
	\section{Experiments}
	\label{section: experiments}
	
    \vspace{-0.1in}

	We first list a few baseline algorithms that we compare \hyperref[alg:best policy generator]{\CE} with. This is followed by a complete description of our experimental setup. Finally, we present and discuss our main results. 
	
	\label{section: UE Description}
	\textbf{Uniform Exploration:} This algorithm uniformly explores the interventions in the instance. It first performs all the atomic interventions $a \in \I_0$ at the start state $0$ in a round robin manner. On transitioning to any context $i\in [k]$, it performs atomic interventions $b\in\I_i$ in a round robin manner. $\UE$ achieves a regret upperbounded by $\tilde{\O}(\sqrt{nk/T})$, which is also the optimal lower bound for non-causal algorithms. Hence it serves as a good comparison as it achieves an optimal non-causal simple regret. 
    We plot the comparison with this non-causal regret optimal exploration in Figure \ref{figure: experimental results}. We plot the regret with respect to (A) the number of rounds of exploration and (B) with the $\lambda$ values of our instance. Notice that at extremely high $\lambda$ values $\CE$ does not perform well, as such an instance does not particularly benefit from the causal structure. Even so, with further tuning of constants in our Algorithm, we should achieve a performance similar to $\UE$.
	
\begin{figure*}[!bht]
\centering
\includegraphics[width=0.98\linewidth]{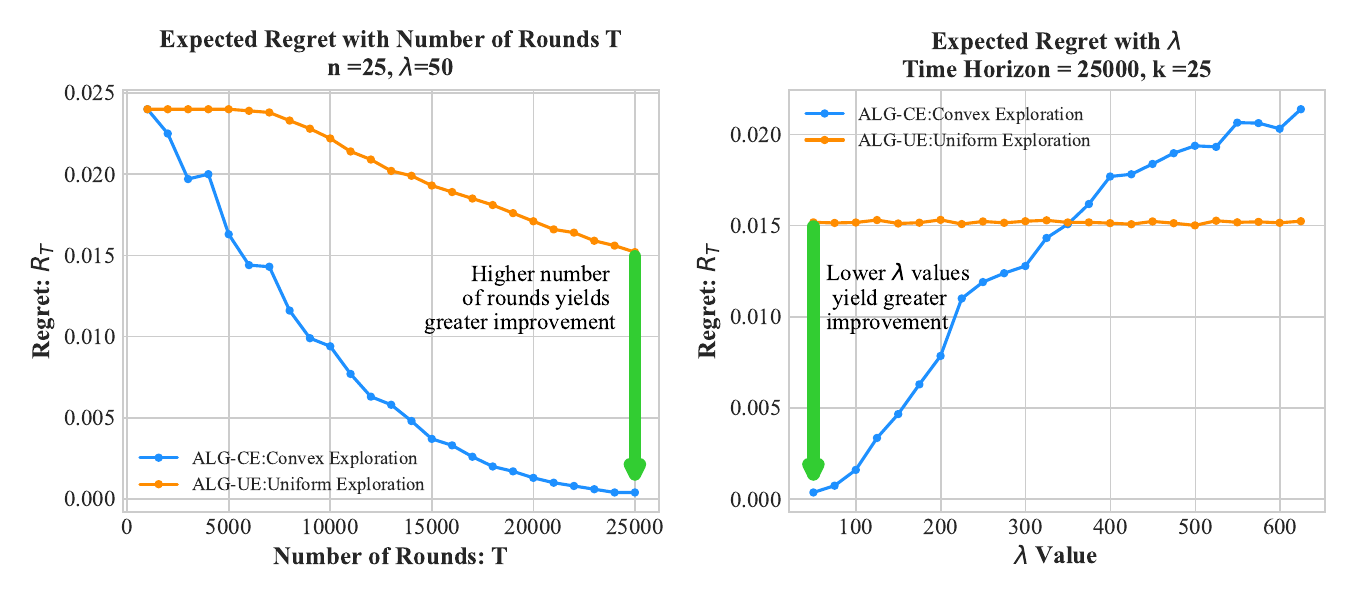} 
\caption{We plot the Simple Regret under $\CE$ and $\UE$. The figure on the left (\ref{figure: experimental results}a) plots      expected simple regret vs time, for the setup $n=25$,  $k=25$, $\lambda=50$, $\varepsilon = 0.3$ and $m=2$ for all contexts. The figure on the right (\ref{figure: experimental results}b) plots expected simple regret with $\lambda$. It was performed with the parameters: $T=25000$, $k=25$, $m_0 = 2$ and $\varepsilon = 0.3$.}
\label{figure: experimental results}
\end{figure*}

 \textbf{Other Baselines:} We now consider several other baselines for comparison, that have been used in literature. Primary amongst these are: (1) UCB at the start state, as well as the intermediate contexts (2) Thompson sampling at the start state, as well as the intermediate contexts (3) Round-robin at the start state, and UCB at the intermediate contexts (4) Round-robin at the start state, and Thomson sampling at the intermediate contexts and (5) $\UE$ which is round-robin at both the start state and at the intermediate contexts.

	\textbf{Setup:} We consider $k=25$ intermediate contexts and a causal graphs with $n=25$ variables ($2n+1 = 51$ interventions) at each context. The rewards are distributed Bernoulli($0.5+\varepsilon$) for intervention $X^1_1 = 1$ and Bernoulli($0.5$) otherwise where $\varepsilon=0.3$ in the experiments. We set $m_i = m \enspace \forall i\in[k]$. As in experiments in prior work, we set $q^i_j=0$ for $j\leq m_i$ and $0.5$ otherwise.
	Let $k=n$ here. At state $0$,
	on taking action $a = do()$, we transition uniformly to one of the intermediate contexts. On taking action $do(X^0_i=1)$, we transition with probability $2/k$ to context $i$ and probability $1/k - 1/(k(k-1))$ to any of the other $k-1$ contexts. 
	
	
	We perform two experiments in this setting. In the first one, we run \hyperref[alg:best policy generator]{\CE} and \hyperref[section: UE Description]{\UE} for  time horizon $T\in\{1000,\dots, 25000\}$. In the second experiment, we run \hyperref[alg:best policy generator]{\CE} and \hyperref[section: UE Description]{\UE} for a fixed time horizon $T=25000$ with $\lambda$ varying in the set $\{50, 75, \dots, 625\}$. To vary $\lambda$, we vary $m_i$ for the intermediate contexts in the set $\{2,3,\ldots,25\}$. We average the regret over $10000$ runs for each setting. We use CVXPY (\cite{CVXOPT}) to solve the convex program at \hyperref[step:fstar]{Step \ref{step:fstar}} in \hyperref[alg:best policy generator]{\CE}.
 We release our code in entirety in our anonymized \hyperlink{https://github.com/adaptiveContextualCausalBandits/aCCB}{GitHub project repository}, for the community to use and improve.

\begin{figure*}
\centering
\includegraphics[width=1.05\linewidth]{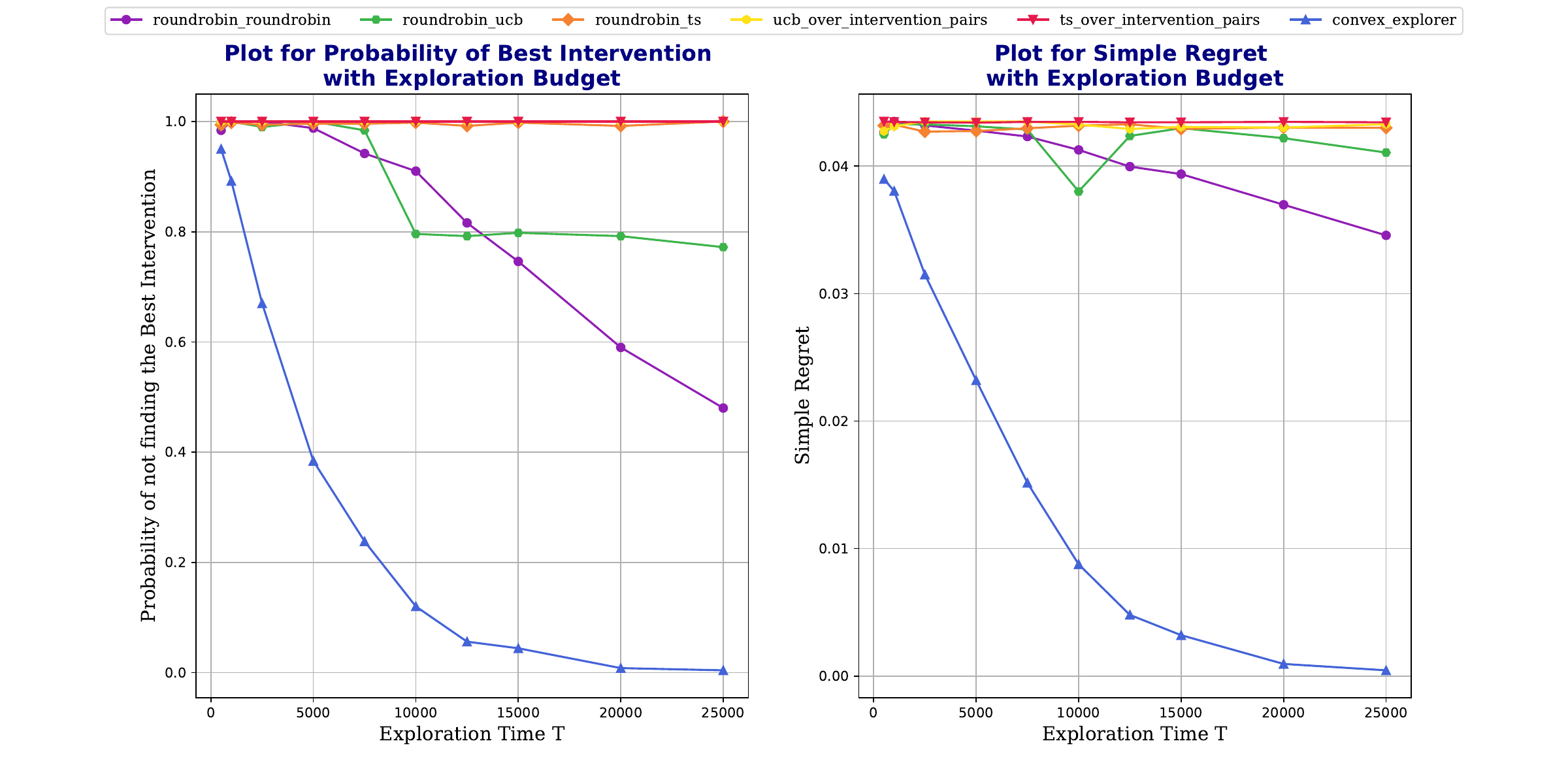} 
\caption{We plot various baselines for two metrics of interest (1) Probability of the algorithm finding the best interventions and (2) Simple regret. These plots illustrate how these metrics vary with the exploration budget.}
\label{figure: experimental results2}
\end{figure*}

\begin{figure*}[!bht]
\centering
\includegraphics[width=1.05\linewidth]{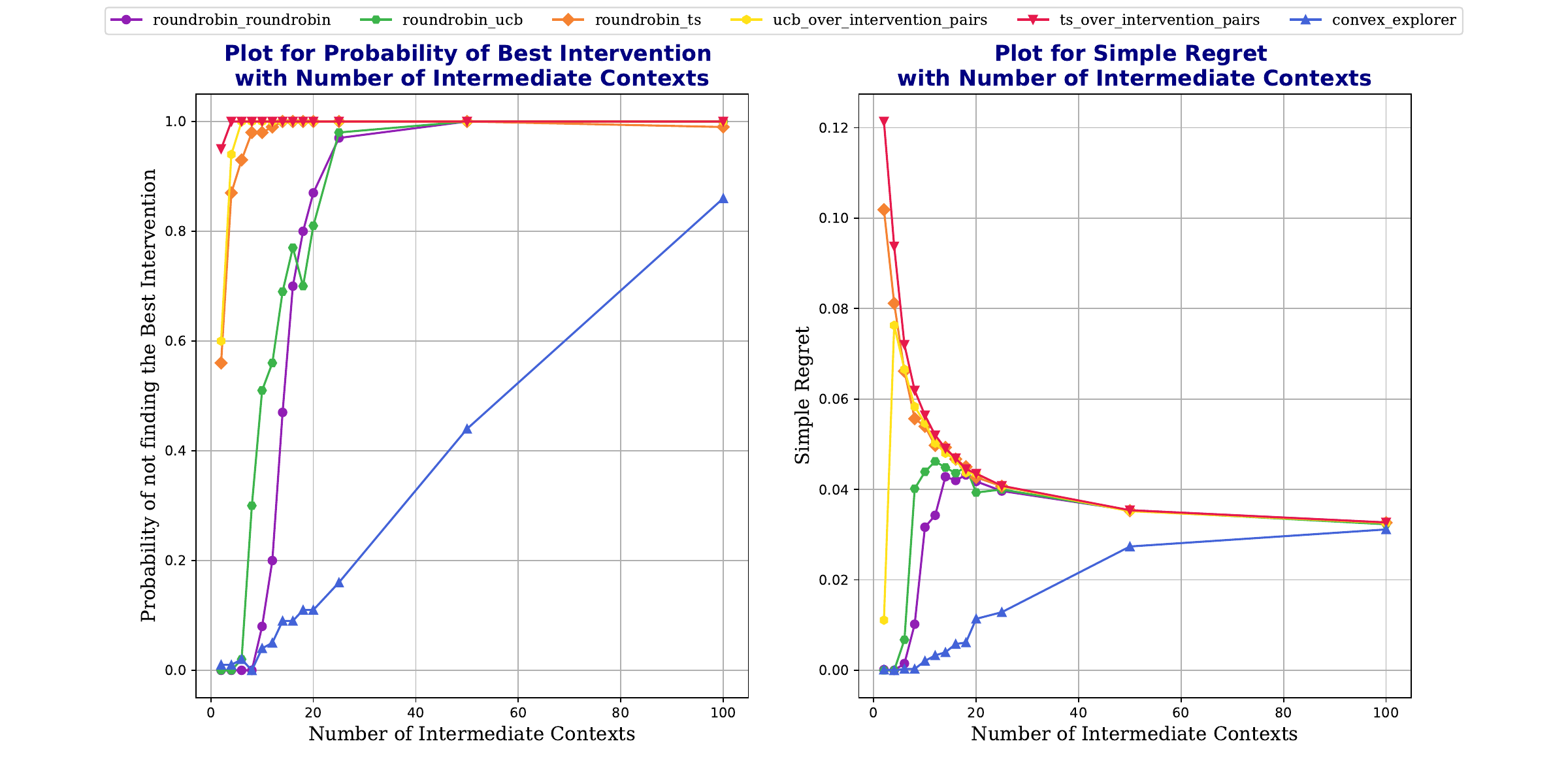} 
\caption{We plot the variation of probability of finding the best intervention and simple regret with the number of contexts. Notice the outperformance of $\CE$ vs. the other baselines.}
\label{figure: experimental results3}
\end{figure*}

	\textbf{Results of comparison with $\UE$:} In \hyperref[figure: experimental results]{Figure \ref{figure: experimental results}a}, we compare the expected simple regret of \hyperref[alg:best policy generator]{\CE} vs. \hyperref[section: UE Description]{\UE}. Our plots indicate that \hyperref[alg:best policy generator]{\CE} outperforms \hyperref[section: UE Description]{\UE} and its regret falls rapidly as $T$ increases.  In \hyperref[figure: experimental results]{Figure \ref{figure: experimental results}b}, we plot the expected simple regret against $\lambda$ for \hyperref[alg:best policy generator]{\CE} and \hyperref[section: UE Description]{\UE} that was obtained in Experiment $2$, and empirically validate their relationship that was proved in Theorem \ref{theorem:main}. 
	

\begin{figure*}
\centering
\includegraphics[width=1.05\linewidth]{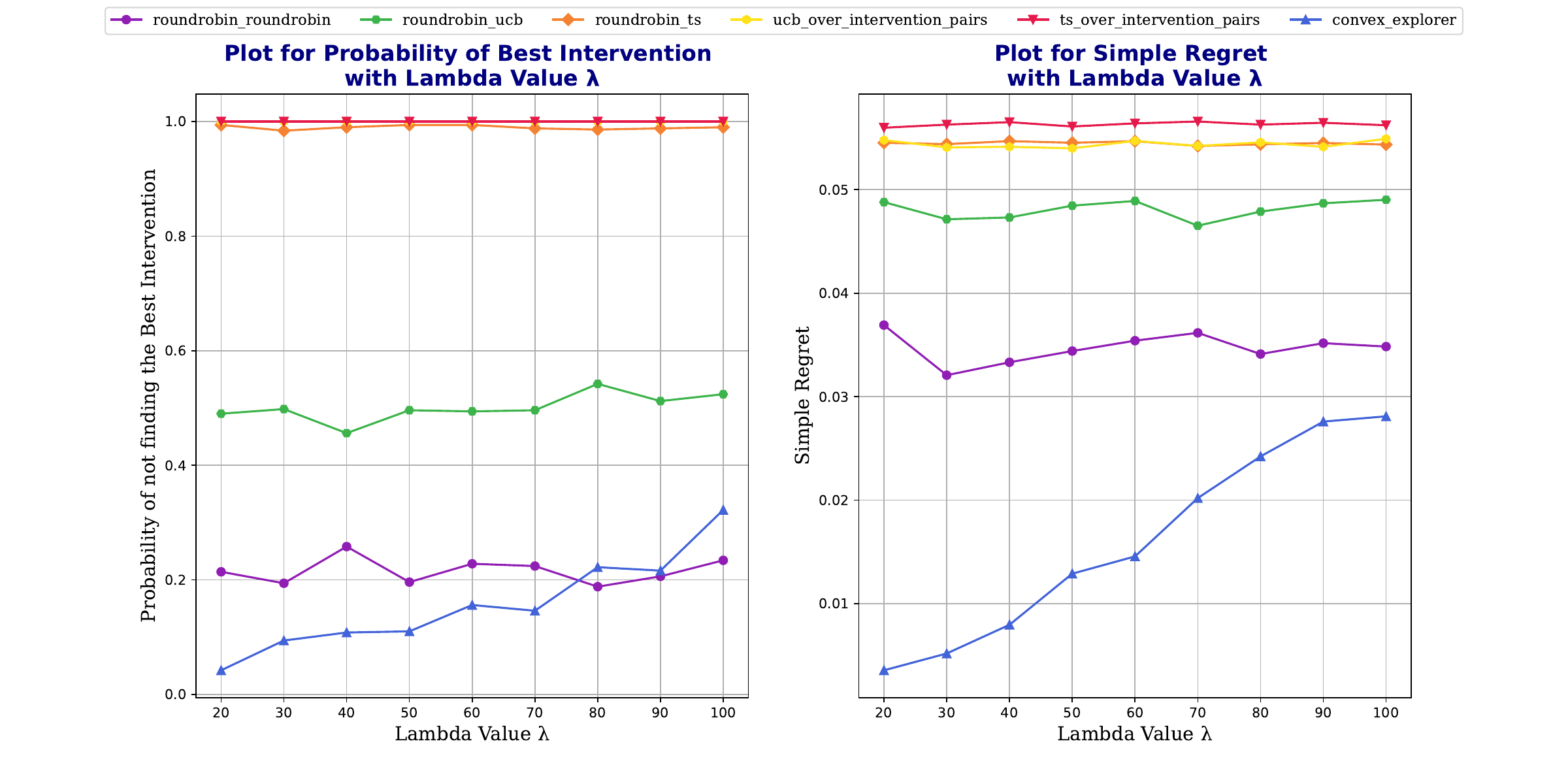} 
\caption{We plot the variation of probability of finding the best intervention and simple regret with $\lambda$ value. Notice that $\CE$ is the only algorithm that is causal-aware and hence varying with $\lambda$.}
\label{figure: experimental results4}
\end{figure*}

 \textbf{Results of comparison withother baselines:} We find that $\CE$ significantly outperforms baselines other than $\UE$. Specifically Thompson samplling and UCB are not well tuned to the exploration problem, and hence perform poorly in both the metrics of (1) simple regret as well as (2) probability of finding the best intervention. A mixture of round-robin at the start state with these alternatives at the intermediate context also perform poorly with respect to $\CE$ for this particular exploration problem. In Figure \ref{figure: experimental results2} we plot the metrics with exploration budget. In Figure \ref{figure: experimental results3} we plot the metrics of interest with the number of contexts at the intermediate stage. Finally, in Figure \ref{figure: experimental results4}, we plot the simple regret as well as probability of finding the best intervention with our parameter $\lambda$, while keeping the number of intermediate contexts the same. The results of these experiments and full details can be found \hyperlink{https://github.com/adaptiveContextualCausalBandits/aCCB}{here}. 
\vspace{-0.1in}
\section{Conclusions}
\label{sec: conclusion and future work}

	\vspace{-0.1in}
	We studied extensions of the causal contextual bandits framework to include adaptive context choice. This is an important problem in practice and the solutions therein have immediate practical applications.
    The setting of stochastic transition to a context accounted for non-trivial extensions from \cite{subramanian2022causal} who studied targeted interventions.
	We developed a Convex Exploration algorithm for minimizing simple regret under this setting. Furthermore, while \cite{maiti2022causal} studied the simple causal bandit setting with unobserved confounders, our work addresses causal contextual bandits with adaptive contexts, under the same constraint of allowing unobserved confounders (assuming identifiability).
	We identified an instance dependent parameter $\lambda$, and proved that the regret of this algorithm is $\tilde{\O}(\sqrt{\frac{1}{T}\max\{\lambda,\frac{m_0}{p_+}\}})$. The current work also established that, for certain families of instances, this upper bound is essentially tight. Finally, we showed through experiments that our algorithm performs better than uniform  exploration in a range of settings.
	We believe our method of converting the exploration in the causal contextual bandit setting is novel, and may have implications outside the causal setting as well.

    Possible generalizations of this work include extensions to non-binary reward settings. Another natural extension would be to derive bounds for L-layered MDPs, extending from the adaptive contextual bandit setting we consider. It would be interesting to see whether that problem reduces to convex exploration as well. Finally, extending convex exploration methods from this paper to other more general simple regret problems may also be a promising avenue for future research.
    


\section{Acknowledgements}
Siddharth Barman gratefully acknowledges the support of the Walmart Center for Tech Excellence (CSR WMGT-23-0001) and a SERB Core research grant (CRG/2021/006165).


\bibliography{References}
\bibliographystyle{rlc}

\appendix

\vspace{0.15in}
\section{Related Work}
\label{appendix sec: related work appendix}
In our work, we draw from prior literature from causality as well as from multi-armed bandits. We will briefly cover these two in the following section.

\subsection{Multi-armed bandits:}
The stochastic Multi-Armed Bandit (MAB) setup is a standard model for studying the exploration-exploitation trade-off in sequential decision making problems \citep{kuleshov2014algorithms,bubeck2012regret}. Such trade-offs arise in several modern
applications, such as ad placement, website optimization, recommendation systems, and packet routing \citep{bouneffouf2020survey} and are thus a central part of the theory relating to online learning \citep{slivkins2019introduction,lattimore2020bandit}.

Traditional performance measures for MAB algorithms have focused on cumulative regret \citep{auer2002FTAMAB,agrawal2012analysis,auer2010ucb}, as well as best-arm identification under the fixed confidence \citep{even2006action} and fixed budget \citep{audibert2010best} settings.
In some settings however, one may be interested in optimizing the exploration phase. Another variant of regret that has been considered is the mini-max regret \citep{azar2017minimax} which focuses on the worst case over all possible environments.
However, as a metric for pure exploration in MABs, simple regret has been proposed as a natural performance criterion \citep{bubeck2009pure}. In this setting, we allow for some period of exploration, after which the learner has to choose an arm. The simple regret is then evaluated as the difference between the average reward of the best arm and the average reward of the learner's recommendation. We focus on simple regret in this work.

Each of these performance metrics come with their own lower bounds \citep{orabona2012beyond,osband2016lower,bubeck2012regret}, which are naturally the benchmarks for any algorithms proposed. The lower bound on simple regret is known to be $\O(\sqrt{n/T})$ for a stochastic multi-armed bandit problem with $n$ arms. This bound is obtained from the lower bound for pure exploration provided by \citet{mannor2004sample}. 

Note that, a naive approach to the causal bandit problem which simply treats an intervention on each of exponentially many combinations of the nodes as an arm, may thus incur an exponential regret.
We now review some of the literature from Causality, which helps in addressing the causal aspects of the problem.

\subsection{Causality:}
There are three broad threads in causality related to our work. These are causal graph learning, causal testing and causal bandits. We address relevant works in these areas below.

\textbf{Learning Causal Graphs:} \citet{tian2002testable} laid the grounds for analysing functional functional constraints among the distributions of observed variables in a causal Bayesian networks. Similarly, \citet{kang2006inequality} derive such functional constraints over interventional distributions. These two seminal works lead to a great interest in the problem of learning causal graphs. 

There have been several studies that provide algorithms to recover the causal graphs from the conditional independence relations in observational data \citep{pearl1995theory, spirtes2000causation, ali2005towards, zhang2008completeness}. Subsequent work considered the setting when both observational and interventional data are available \citep{eberhardt2005number, hauser2014two}. \cite{kocaoglu2017cost} extend the causal graph learning problem to a budgeted setting. \citet{shanmugam2015learning} uses interventions on sets of small size to learn the causal structure. 
\citet{kocaoglu2017experimental} provide an efficient randomized algorithm to learn a causal graph with confounding variables.

\textbf{Testing over Bayesian networks:} 
Given sample access to an unknown Bayesian Network \citep{canonne2017testing}, or Ising model \citep{daskalakis2019testing}, one may wish to decide whether an unknown model is equal to a known fixed model, and analyse the sample complexity of this hypothesis test. \citet{acharya2018learning} address this question by introducing the concept of covering interventions. These covering interventions allow us to understand the behaviour of multiple interventions (that are covered) simultaneously. We utilize the concept of covering interventions from \citet{acharya2018learning} towards our question of finding the optimal intervention in a causal bandit. The area of reinforcement learning over causal bandits has also been studied in \citet{pmlr-v119-zhang20a}.

Apart from these areas in causality, our primary problem of causal bandits have been addressed by \citet{Lattimore,maiti2022causal,sen2017identifying,lu2020regret,Gaurav2020,sen2017contextual,lu2021causal,lu2022efficient,varici2022causal,xiong2023combinatorial}. We detail these in the main Related Works \hyperref[sec: related work]{Section \ref{sec: related work}}.

\section{Algorithms in Detail}
\label{appendixsection:Algorithms in detail}
In this section, we outline the three algorithms that are used as helpers in \hyperref[alg:best policy generator]{\CE}. The first that we outline now, Algorithm \ref{alg:estimateTransitionProbabilities}, would be used to estimate the transition probabilities out of context $0$ on taking various actions.

	\begin{figure*}[!htb]
    \begin{center}
    \begin{minipage}[t]{1\textwidth}
    \begin{algorithm}[H]
	\small
	
		\caption{Estimate Transition Probabilities}
		\label{alg:estimateTransitionProbabilities}
		\begin{algorithmic}[1] 
			\State \textbf{Input:} Time budget $T'$
			\State \textbf{For} time $t\gets \{1,\dots,\frac{T'}{2}\}$ \textbf{do}
			\State \qquad Perform $do()$ at context $0$. Transition to $i\in [k]$
			\State \qquad Count number of times context $i\in[k]$ is observed
			\State \qquad Update $\widehat{q}_j^0 = \prob\left\{X_j^0 = 1\right\}$ 
			\Statex \textbf{end}
			\State Using $\widehat{q}_j^0$s, estimate $m_0$ and the set $\I_{m_o}$. Estimate $\widehat{P}_{(a,i)}=\prob[i \mid  a]$ $\enspace \forall a \in \I_{m_0}^c$ and $i\in[k]$
			\State \textbf{For} intervention $a \in \I_{m_o}$ at context $0$
			\State \qquad \textbf{For} time $t \gets \{1,\dots \frac{T'}{2 \lvert\I_{m_0}\rvert}\}$
			\State \qquad \qquad Perform $a\in\I_{m_o}$ and transition to some $i\in[k]$
			\label{step:alg2step10}
			\State \qquad \qquad Count number of times context $i$ is observed
			\Statex \qquad \textbf{end}
			\Statex \textbf{end}
			
			\State Estimate $\widehat{P}_{(a,i)}=\prob[i \mid  a]$ for each $a \in \I_{m_0}$ and contexts $i\in[k]$
			\State \textbf{return} Estimated matrix $\widehat{P}=\left[ \widehat{P}_{(a,i)}\right]_{i \in [k],a \in \I_0}$
			\footnotetext[1]{In the first half of time $T'/2$, we perform $do()$ at  State $0$.}
			\footnotetext[2]{If $\I_0:=do()\cup\{X^0_j=0,X^0_j=1\}_{j\in[n]}$, we can find $m_0\leq|\I_0|/2$ such that $\I_0 = \I_{m_0} \cup \I_{m_0}^c$ where the interventions in $\I_{m_0}^c$ are observed with probability more than $1/m_0$ and $|\I_{m_0}| = m_0$.}
			\footnotetext[3]{For the interventions $a\in\I_{m_0}^c$, we can estimate $\widehat{P}_{(a,i)}=\prob[i \mid  a]$ $\enspace \forall i\in[k]$ in the first half.}
			\footnotetext[4]{In the second half, we may intervene on the atomic interventions in $\I_{m_0}$ for time $T/(2m_0)$ each.}
			\footnotetext[5]{Using observations of $a\in\I_{m_0}$, we estimate $\widehat{P}_{(a,i)}=\prob[i \mid  a]$ $\enspace \forall a\in\I_{m_0}$ and $i\in[k]$.}
	\end{algorithmic}
	\end{algorithm}
	
	\end{minipage}
	\end{center}
    \end{figure*}
    
    \noindent Next we estimate the causal parameters at all contexts $i\in[k]$ through Algorithm \ref{alg:estimateCausalParameters}. Then we will use Algorithm \ref{alg:estimateRewards} to estimate the rewards on various interventions at the intermediate contexts. 
    
    \noindent For estimating the causal parameters, we use a variant of $\SRM$ from \citet{maiti2022causal}, which estimates the causal observational threshold $m_i$, under the setting of unobserved confounders and identifiability. We note that even in the presence of general causal graphs with hidden variables, $\SRM$ is able to efficiently estimate the rewards of all the arms simultaneously using the observational arm pulls. As mentioned in Section 3 of \citet{maiti2022causal}, the challenge is to identify the optimal number of arms with bad estimates during the initial phase of the algorithm, such that these arms can be intervened upon at the later phase. The $q_i(x)$ parameter is the minimum conditional probability of $X=x$, given different configurations of the parents of $X$. Once we have these estimates, the remaining algorithm can proceed as per usual.

    \begin{figure*}[!htb]
    \begin{center}
    \begin{minipage}[t]{1\textwidth}
    \begin{algorithm}[H]
	\small
		\caption{Estimate Causal Parameters}
		\label{alg:estimateCausalParameters}
		\begin{algorithmic}[1] 
			\State \textbf{Input:} Frequency vector $\tilde{f}$ and time budget $T'$
			\State Update $f(a) \gets \frac{1}{2}\left(\tilde{f}(a)+\frac{1}{\lvert  \I_0 \rvert}\right)\enspace \forall a\in\I_0$ 
			\State \textbf{For} intervention $a \in \I_0$
			\State \qquad\textbf{For} time $t \gets \{1,\dots T' \cdot f(a)\}$
			\State \qquad\qquad Perform $a\in \I_0$ and transition to some $i\in[k]$.
			\State \qquad\qquad At context $i$, perform $do()$ and observe $X^i_j$s
			\State \qquad\qquad Update $\widehat{q}_j^i = \min_{\text{Parents}(X_j^i),x\in\{0,1\}} \prob\left\{X_j^i = x \mid \text{Parents}(X_j^i)\right\}$ 
			\Statex \qquad\textbf{end}
			\Statex \textbf{end}
			\State Using $\widehat{q}_j^i$s, estimate $\widehat{m}_i$ values for each context $i \in [k]$
			\State \textbf{return} $\hat{M}$, the diagonal matrix of the $\widehat{m}_i$ values
			\footnotetext[1]{We choose actions $a\in\I_0$ such that we visit the contexts $i\in[k]$ approximately equally, in expectation.}
			\footnotetext[2]{On each visit to a context $i\in[k]$, we perform $do()$. From these we can estimate $q_i^j$ values, which may be used to estimate $m_i$ values.}
			\footnotetext[3]{Based on these $do()$ interventions at each context $i\in[k]$, we get estimates of $m_i$ and the intervention sets $\I_{m_i}$ such that (I) $|\I_{m_i}| = m_i$ and (II)  interventions in $\I_{m_i}$ are observed with probability less than $1/m_i$.}
		\end{algorithmic}
	\end{algorithm}
	
	\end{minipage}
	\end{center}
    \end{figure*}
        
\noindent Note that in Algorithm \ref{alg:estimateRewards} there are two phases. In the first phase, we carry out estimates for interventions that have high probability of being observed on the $do()$ intervention. In the second phase, we specifically perform interventions which have not been observed often enough. This is similar to Algorithm \ref{alg:estimateTransitionProbabilities} where we carry out the two phases of interventions at context $0$.
    
    \begin{figure*}[!htb]
    \begin{center}
    \begin{minipage}[t]{1\textwidth}
    \begin{algorithm}[H]
	\small
		\caption{Estimate Rewards}
		\label{alg:estimateRewards}
		\begin{algorithmic}[1] 
			\State \textbf{Input:} Optimal frequency $f^*$, min-max frequency $\tilde{f}$, and time budget $T'$
			\State Set $f(a) \gets \frac{1}{3}\left(f^*(a)+\tilde{f}(a)+\frac{1}{\lvert  \I_0 \rvert}\right)\enspace \forall a\in\I_0$
			\State \textbf{For} intervention $a \in \I_0$ at context $0$
			\State \qquad\textbf{For} time $t \gets \{1,\dots f(a)\cdot T'/2\}$
			\State \qquad\qquad Perform $a\in\I_0$. Transition to some $i\in[k]$. Perform $do()$ at context $i\in[k]$.
			\State \qquad\qquad Observe variables $X^i_j$'s and rewards $R_i$.
			\Statex \qquad \textbf{end}
			\Statex \textbf{end}
			\State Find the set $\I_{m_i} \enspace \forall i\in[k]$ using $q^i_j$ estimates. 
			\State Estimate mean reward $\widehat{\mathcal{R}}_{(b,i)} = \E\left[R_i  \mid  b\right]$ for each $b \in \I_{m_i}^c$
			\State \textbf{For} intervention $a \in \I_0$ at context $0$
			\State \qquad \textbf{For} time $t \gets \{1,\dots f(a)\cdot T'/2\}$
			\State \qquad\qquad Perform $a\in\I_0$ and transition to some $i\in[k]$.
			\State \qquad\qquad Iteratively perform $b\in\I_{m_i}$. Observe $R_i$
			\Statex \qquad \textbf{end}
			\Statex \textbf{end}
			\State Estimate mean reward $\widehat{\mathcal{R}}_{(b,i)} = \E\left[R_i  \mid  b\right]$ for each $b \in \I_{m_i}$ 
			\State \textbf{return} $\widehat{\mathcal{R}} = \left[ \widehat{\mathcal{R}}_{(b,i)}\right]_{i \in [k],b \in \I_i}$ 
			\footnotetext[1]{We perform the interventions in the ratio of $f^*$ which is the optimum given the $m_i$ values at the various contexts.}
			\footnotetext[2]{In the first half we estimate rewards for the interventions $\I_{m_i}^c$ in the first half, and the interventions in $\I_{m_i}$ in the second half.}
			\footnotetext[3]{Note that we round robin over the interventions $b\in\I_{m_i}$ across visits in the second half of the algorithm.}
		\end{algorithmic}
	\end{algorithm}
	
	\end{minipage}
	\end{center}
    \end{figure*}

    \clearpage

\section{Proof of Theorem 1}
	\label{appendixsection:Proof of lemmas for upperbound}

In this section, we restate Theorem \ref{theorem:main} and provide its proof, along with all the lemmas that are used in the proof.

	\begin{nonumtheorem}
		Given number of rounds $T \geq T_0$ and $\lambda$ as in equation (\ref{eqn:lambda}), \hyperref[alg:best policy generator]{\CE} achieves regret 
		$$\Regret_T \in \O\left(\sqrt{\max\left\{\frac{\lambda}{T},\frac{m_0}{Tp_+}\right\}\log\left(NT\right)}\right)$$
		
	\end{nonumtheorem}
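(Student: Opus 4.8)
The plan is to split the simple regret $\E[\mu(\pi^*)-\mu(\widehat\pi)]$ into a term coming from reward–estimation error at the intermediate contexts and a term coming from transition–estimation error at the start state, and to bound each on a high‑probability ``good event'' on which all empirical counts produced by \CE\ concentrate; the complement of this event has probability $O(1/\mathrm{poly}(NT))$ and so contributes only a lower‑order term to the expected regret. For the decomposition, write $r_i^* := \max_{b\in\I_i}\E[R_i\mid b]$ and suppose the good event guarantees $|\widehat{\mathcal R}_{(b,i)}-\E[R_i\mid b]|\le\eps_i$ for all $i\in[k]$, $b\in\I_i$. A standard argmax argument then gives $r_i^*-\E[R_i\mid\widehat\pi(i)]\le 2\eps_i$ and $|\widehat{\mathcal R}_{(\widehat\pi(i),i)}-r_i^*|\le\eps_i$. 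Writing $\mu(\pi^*)-\mu(\widehat\pi)=\sum_i r_i^*\big(P_{(\pi^*(0),i)}-P_{(\widehat\pi(0),i)}\big)+\sum_i\big(r_i^*-\E[R_i\mid\widehat\pi(i)]\big)P_{(\widehat\pi(0),i)}$, the second sum is $\le 2\max_a\sum_i\eps_i P_{(a,i)}$; and since $\widehat\pi(0)$ maximizes $a\mapsto\sum_i\widehat{\mathcal R}_{(\widehat\pi(i),i)}\widehat P_{(a,i)}$ while $\pi^*(0)$ maximizes $a\mapsto\sum_i r_i^*P_{(a,i)}$, and $r_i^*\le 1$, the first sum is $\le 2\max_a\big(\sum_i\eps_i\widehat P_{(a,i)}+\|\widehat P_{(a,\cdot)}-P_{(a,\cdot)}\|_1\big)$. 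Up to the lower‑order cross term $(\max_i\eps_i)\,\|\widehat P_{(a,\cdot)}-P_{(a,\cdot)}\|_1$, the whole proof then reduces to bounding $\max_a\sum_i\eps_i P_{(a,i)}$ and $\max_a\|\widehat P_{(a,\cdot)}-P_{(a,\cdot)}\|_1$.

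For the transition term (which will produce $m_0/(Tp_+)$): after the first $T/3$ rounds every $a\in\I_0$ has been effectively observed $\Omega(T/m_0)$ times --- directly for $a\in\I_{m_0}$, since $|\I_{m_0}|=m_0$ and these are pulled round‑robin in the second half of Algorithm~\ref{alg:estimateTransitionProbabilities}, and through covering observations under $do()$ for $a\in\I_{m_0}^c$, following \citet{maiti2022causal}. The structural observation I would use is that each row $P_{(a,\cdot)}$ is a distribution on $[k]$ whose nonzero entries are each $\ge p_+$, so it has at most $1/p_+$ nonzero entries; a Bernstein bound followed by Cauchy--Schwarz then gives $\|\widehat P_{(a,\cdot)}-P_{(a,\cdot)}\|_1\lesssim\sqrt{m_0\log(NT)/(Tp_+)}+m_0\log(NT)/(Tp_+)$, the additive term being dominated once $T\ge T_0$. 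The same threshold $T_0=\widetilde O(N\max_i m_i/p_+^3)$ is what makes Algorithms~\ref{alg:estimateTransitionProbabilities}--\ref{alg:estimateCausalParameters} recover the integer causal thresholds exactly ($\widehat m_0=m_0$, $\widehat M=M$) and recover $\mathrm{supp}(\widehat P)=\mathrm{supp}(P)$, so that $\widehat{f}^*$ is a legitimate near‑optimal exploration vector.

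For the intermediate term (which will produce $\lambda/T$): I would invoke the simultaneous reward‑estimation guarantee of the \SRM\ subroutine of \citet{maiti2022causal} --- with $N_i$ visits to context $i$, all of $\{\E[R_i\mid b]\}_{b\in\I_i}$ are estimated to accuracy $\eps_i\lesssim\sqrt{m_i\log(NT)/N_i}$. Since the start‑state interventions in Algorithm~\ref{alg:estimateRewards} follow a frequency vector mixing $\widehat{f}^*$, $\tilde f$ and the uniform vector, a Chernoff bound gives $N_i\ge c\,T\,(P^\tr\widehat{f}^*)_i$ on the good event, hence $\max_a\sum_i\eps_i P_{(a,i)}\lesssim\sqrt{\log(NT)/T}\;\big\|PM^{1/2}(P^\tr\widehat{f}^*)^{\circ-1/2}\big\|_\infty$. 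The remaining step --- and the one I expect to be the main obstacle --- is a perturbation argument showing $\big\|PM^{1/2}(P^\tr\widehat{f}^*)^{\circ-1/2}\big\|_\infty^2\le(1+o(1))\lambda$: although $\widehat{f}^*$ is optimal only for the estimated objective built from $(\widehat P,\widehat M)$, the Step‑2 bound on $\widehat P-P$ together with $\widehat M=M$ and the fact that the uniform component of the mixture keeps every coordinate of $P^\tr f$ bounded away from $0$ --- which makes $f\mapsto\|PM^{1/2}(P^\tr f)^{\circ-1/2}\|_\infty$ Lipschitz on the relevant region --- should let one transfer near‑optimality of the convex program from $(\widehat P,\widehat M)$ back to the true $(P,M)$. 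This gives $\max_a\sum_i\eps_i P_{(a,i)}\lesssim\sqrt{\lambda\log(NT)/T}$.

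Putting the pieces together, on the good event $\mu(\pi^*)-\mu(\widehat\pi)\lesssim\sqrt{\lambda\log(NT)/T}+\sqrt{m_0\log(NT)/(Tp_+)}$, and since $\sqrt x+\sqrt y\le 2\sqrt{\max\{x,y\}}$ this is $O\!\big(\sqrt{\max\{\lambda/T,\,m_0/(Tp_+)\}\log(NT)}\big)$; taking expectations and adding the negligible bad‑event contribution finishes the proof. The delicate point throughout is the stability of the convex exploration program under the plug‑in error in $(P,M)$ --- this is exactly why \CE\ mixes $\tilde f$ and the uniform vector into its exploration distribution, and why the budget threshold $T_0$ carries the $1/p_+^3$ factor rather than $1/p_+$.
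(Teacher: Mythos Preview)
Your plan is essentially the paper's own proof: define a high–probability good event on which all estimates concentrate, decompose the suboptimality into a transition–error term (giving $m_0/(Tp_+)$) and a reward–error term (giving $\lambda/T$), and add the negligible bad–event contribution. Your regret decomposition via $r_i^*$ is a minor variant of the paper's ``sandwich through the empirical objective'' (bound $\mu(\pi^*)-\widehat\mu(\pi^*)$, use $\widehat\mu(\pi^*)\le\widehat\mu(\widehat\pi)$, bound $\widehat\mu(\widehat\pi)-\mu(\widehat\pi)$), and for the transition term the paper uses Devroye's multinomial inequality where you propose Bernstein plus Cauchy--Schwarz; both reach the same bound.

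Two points where your plan diverges from what the paper actually does, and where the paper's route is simpler:

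\emph{Exact recovery of $M$.} You assume that for $T\ge T_0$ the algorithm recovers $\widehat m_0=m_0$ and $\widehat M=M$ exactly. The paper does not claim this and it is not what the $m_i$–estimation lemmas deliver: on the good event one only has multiplicative control $\widehat m_i\in[\tfrac{2}{3}m_i,\,2m_i]$ (events $E_2$, $E_3$), and likewise $\tfrac{2}{3}P\le\widehat P\le\tfrac{4}{3}P$ entrywise (event $E_1$). Your argument does not actually need exact recovery --- constant–factor bounds suffice everywhere --- but you should not rely on $\widehat M=M$.

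\emph{The step you flag as the main obstacle is easier than you think.} To pass from $\widehat f^*$ back to $\lambda$, the paper does not use a Lipschitz/perturbation argument at all. It works with the empirical surrogate $\widehat\lambda:=\big\|\widehat P\widehat M^{1/2}(\widehat P^\tr\widehat f^*)^{\circ-1/2}\big\|_\infty^2$ and observes two things: (i) since $\widehat f^*$ minimizes the empirical objective, replacing it by the true minimizer $f^*$ can only increase the value, i.e.\ $\widehat\lambda\le\big\|\widehat P\widehat M^{1/2}(\widehat P^\tr f^*)^{\circ-1/2}\big\|_\infty^2$; (ii) the multiplicative bounds on $\widehat P$ and $\widehat M$ then give $\widehat\lambda\le 8\lambda$ directly. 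The same multiplicative bound $P\le\tfrac{3}{2}\widehat P$ converts $\sum_i P_{(a,i)}\widehat\eta_i$ into $\tfrac{3}{2}\sum_i\widehat P_{(a,i)}\widehat\eta_i$, which is precisely $O(\sqrt{\widehat\lambda\log(NT)/T})$. So no Lipschitz estimate on $f\mapsto\|PM^{1/2}(P^\tr f)^{\circ-1/2}\|_\infty$ is needed; the minimizer property of $\widehat f^*$ plus entrywise ratio control does the whole job in two lines.
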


	
	\vspace{-0.12in}
	\subsection{Proof of Theorem \ref{theorem:main}}
	\label{appendix section: proof of theorem 1}
	\vspace{-0.12in}
	To prove the theorem, we analyze the algorithm's execution as falling under either \textit{good event} or \textit{bad event}, and tackle the regret under each. 
	\begin{defn}
		\label{appendix defn: good events}
		We define five events, $E_1$ to $E_5$ (see Table \ref{appendix table: Good events table}), the intersection of which we call as \textit{good event}, $E$, i.e., $\textit{good event } E := \bigcap_{i\in[5]} E_i$. Furthermore, we define the $\textit{bad event } F:= E^c$.

    \vspace{-0.2in}
	\begin{table}[!htb]\small
	\caption{Table enumerating Good Events} \label{appendix table: Good events table}
	\begin{center}
		\begin{tabular}{|x{0.75cm}|x{4.5cm}|x{6.75cm}|}
			\hline\hline
			Event & Condition & Explanation\\
			\hline
			$E_1$& $\sum\limits_{i=1}^k \lvert \widehat{P}_{(a,i)} - P_{(a,i)}\rvert \leq \frac{ p_+ }{3} \forall a \in \I_0$ & for every intervention $a\in \I_0$, the empirical estimate of transition probability in each of \hyperref[alg:estimateTransitionProbabilities]{Algorithms \ref{alg:estimateTransitionProbabilities}},  \hyperref[alg:estimateCausalParameters]{ \ref{alg:estimateCausalParameters}} and \hyperref[alg:estimateRewards]{\ref{alg:estimateRewards}} is good, up to an absolute factor of $p_+/3$\\
			\hline
			$E_2$ & $\widehat{m}_0 \in [\frac{2}{3}m_0,2m_0]$ & our estimate for causal parameter $m_0$ for state 0 is relatively good in Algorithm \ref{alg:estimateTransitionProbabilities}.\\
			\hline
			$E_3$ & $\widehat{m}_i \in [\frac{2}{3}m_i,2m_i] \quad \forall i\in[k]$ & our estimate for causal parameter $m_i$ for each context $i\in[k]$ is relatively good in Algorithm \ref{alg:estimateCausalParameters}.\\
			\hline
			$E_4$ & $\sum_{i\in[k]} \lvert \widehat{P}_{(a,i)} - P_{(a,i)} \rvert  \leq \zeta$, $\forall a \in \I_0$ & The error in estimated transition probability in \hyperref[alg:estimateTransitionProbabilities]{Algorithm \ref{alg:estimateTransitionProbabilities}} sums to less than $\zeta$ where
			$\zeta := \sqrt{\frac{150m_0}{Tp_+}\log\left(\frac{3T}{k}\right)}$\\
			\hline
			$E_5$ & $\left\lvert \E\left[R_i  \mid  a \right] - \widehat{\mathcal{R}}_{(a,i)}\right\rvert  \leq \widehat{\eta}_i\enspace \forall i\in[k], a\in\I_i$ & The error in reward estimates in Algorithm \ref{alg:estimateRewards} is bounded\footnotemark[3] by $\widehat{\eta}_i$ where $\widehat{\eta}_i = \sqrt{\frac{27 \widehat{m}_i}{T(\widehat{P}^\tr \widehat{f}^*)_i}\log\left(2TN \right)}$\\
			
			\hline
			\hline
		\end{tabular}
	\end{center}
	
    \end{table}
    \footnotetext[3]{Recall that $\widehat{f}^*$ denotes the optimal frequency vector computed in Step \ref{step:fstar} of \hyperref[alg:best policy generator]{\CE}. Also, $(\widehat{P}^\tr \widehat{f}^*)_i$ denotes the $i$th component of the vector $P^\tr f^*$.}
    
	\end{defn}

	Considering the estimates $\widehat{P}$ and $\hat{M}$, along with frequency vector\footnotemark[2] $\widehat{f}^*$ (computed in Step \ref{step:fstar}), we define random variable 
	$$\widehat{\smash{\lambda}} := \left\lVert \widehat{P} \hat{M}^{1/2}  \left(\widehat{P}^\tr  \widehat{f}^* \right)^{\circ-\frac{1}{2}}\right\rVert_\infty^2.$$
	Note that $\widehat{\smash{\lambda}}$ is a surrogate for $\lambda$. We will show that under the good event, $\widehat{\smash{\lambda}}$ is close to $\lambda$ (Lemma \ref{appendix lem: Bounding lambda hat by lambda}).

	Recall that $\Regret_T := \E [ \varepsilon (\pi)]$ and here the expectation is with respect to the policy ${\pi}$ computed by the algorithm. We can further consider the expected sub-optimality of the algorithm and the quality of the estimates (in particular, $\widehat{P}$, $\hat{M}$ and $\widehat{\smash{\lambda}}$) under \textit{good event} (E). 
	
	Based on the estimates returned at Step \ref{step:fstar} of \hyperref[alg:best policy generator]{\CE}, either the \textit{good event} holds, or we have the \textit{bad event}. We obtain the regret guarantee by first bounding sub-optimality of policies computed under the \textit{good event}, and then bound the probability of the \textit{bad event}.
	\begin{lem}
		\label{appendix lem: difference under pi-star is small}
		For the optimal policy $\pi^*$, under the \textit{good event} ($E$), we have $\sum_{i\in[k]} P_{(\pi^*(0),i)}\E\left[R_i\  \mid \ \pi^*(i) \right]  -  \sum \widehat{{P}}_{(\pi^*(0),i)} \widehat{\mathcal{R}}_{(\pi^*(i),i)} \leq \O\left(\sqrt{\max\{\widehat{\smash{\lambda}},m_0/p_+\}/T\log\left(NT\right)}\right)
		$
	\end{lem}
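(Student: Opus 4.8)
The plan is to bound the quantity of interest by decomposing it according to the two places where the estimates enter: the transition probabilities $P$ versus $\widehat P$ (at context $0$), and the reward values $\E[R_i \mid \pi^*(i)]$ versus $\widehat{\mathcal R}_{(\pi^*(i),i)}$ (at the intermediate contexts). Specifically, I would write
\begin{align}
\sum_{i\in[k]} P_{(\pi^*(0),i)}\E\left[R_i\mid\pi^*(i)\right] - \sum_{i\in[k]} \widehat P_{(\pi^*(0),i)}\widehat{\mathcal R}_{(\pi^*(i),i)}
&= \underbrace{\sum_{i\in[k]}\left(P_{(\pi^*(0),i)}-\widehat P_{(\pi^*(0),i)}\right)\E\left[R_i\mid\pi^*(i)\right]}_{(\mathrm{I})}\\
&\quad + \underbrace{\sum_{i\in[k]}\widehat P_{(\pi^*(0),i)}\left(\E\left[R_i\mid\pi^*(i)\right]-\widehat{\mathcal R}_{(\pi^*(i),i)}\right)}_{(\mathrm{II})}.
\end{align}
For term $(\mathrm{I})$, since rewards lie in $[0,1]$, it is at most $\sum_{i}\lvert P_{(\pi^*(0),i)}-\widehat P_{(\pi^*(0),i)}\rvert$, and under event $E_4$ this is bounded by $\zeta = \sqrt{\frac{150 m_0}{Tp_+}\log(3T/k)} \in \O\!\big(\sqrt{\frac{m_0}{Tp_+}\log(NT)}\big)$, which is dominated by the claimed bound.

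For term $(\mathrm{II})$, I would use event $E_5$, which gives $\lvert\E[R_i\mid\pi^*(i)]-\widehat{\mathcal R}_{(\pi^*(i),i)}\rvert \leq \widehat\eta_i = \sqrt{\frac{27\widehat m_i}{T(\widehat P^\tr\widehat f^*)_i}\log(2TN)}$ for every $i$. Hence $(\mathrm{II}) \leq \sum_{i\in[k]}\widehat P_{(\pi^*(0),i)}\,\widehat\eta_i$. Pulling the $\log$ factor out, this is $\sqrt{\tfrac{27}{T}\log(2TN)}\cdot\sum_i \widehat P_{(\pi^*(0),i)}\sqrt{\widehat m_i/(\widehat P^\tr\widehat f^*)_i}$. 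The key observation is that the row $\widehat P_{(\pi^*(0),\cdot)}$ is a probability vector, so by its definition the vector $\nu$ in equation $\eqref{eqn:lambda}$ (with hats) satisfies $\widehat\nu_a = \big(\widehat P \widehat M^{1/2}(\widehat P^\tr\widehat f^*)^{\circ-1/2}\big)_a = \sum_i \widehat P_{(a,i)}\sqrt{\widehat m_i/(\widehat P^\tr\widehat f^*)_i}$; thus $\sum_i \widehat P_{(\pi^*(0),i)}\sqrt{\widehat m_i/(\widehat P^\tr\widehat f^*)_i} = \widehat\nu_{\pi^*(0)} \leq \max_a \widehat\nu_a = \sqrt{\widehat{\smash{\lambda}}}$. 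Therefore $(\mathrm{II}) \in \O\!\big(\sqrt{\widehat{\smash{\lambda}}/T\,\log(NT)}\big)$. Combining the two terms via $a+b \leq 2\max\{a,b\}$ yields the stated $\O\!\big(\sqrt{\max\{\widehat{\smash{\lambda}},m_0/p_+\}/T\,\log(NT)}\big)$ bound.

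The main obstacle — and the step requiring the most care — is making the estimates in term $(\mathrm{II})$ rigorous: the concentration bound $E_5$ is stated in terms of $\widehat m_i$ and $(\widehat P^\tr\widehat f^*)_i$, which are themselves random, and one must be sure these are well-defined and positive under the good event (in particular that $(\widehat P^\tr\widehat f^*)_i$ is bounded away from $0$, which uses $E_1$ together with the fact that $\widehat f^*$ has a uniform component mixed in, as in the $\tfrac13(f^* + \tilde f + 1/|\I_0|)$ construction of Algorithm \ref{alg:estimateRewards}). A secondary subtlety is that $\widehat\eta_i$ as written is the per-visit error, and one must confirm that the number of samples collected at context $i$ under frequency vector $\widehat f^*$ is indeed $\Theta(T(\widehat P^\tr\widehat f^*)_i)$ so that the $E_5$ bound applies; this is exactly what Algorithm \ref{alg:estimateRewards} is designed to guarantee. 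Finally, I would note that this lemma deliberately stops at $\widehat{\smash{\lambda}}$ rather than $\lambda$ — translating $\widehat{\smash{\lambda}}$ back to $\lambda$ is deferred to Lemma \ref{appendix lem: Bounding lambda hat by lambda}, so I do not need to address it here.
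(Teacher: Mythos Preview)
Your proof is correct and is essentially the paper's argument, with one small variation in the decomposition: you add and subtract $\sum_i \widehat P_{(\pi^*(0),i)}\,\E[R_i\mid\pi^*(i)]$, whereas the paper adds and subtracts $\sum_i P_{(\pi^*(0),i)}\,\widehat{\mathcal R}_{(\pi^*(i),i)}$. Your choice is marginally cleaner, since it places $\widehat P$ directly in term $(\mathrm{II})$ and lets you read off $\sum_i \widehat P_{(\pi^*(0),i)}\widehat\eta_i \le \sqrt{\widehat{\smash{\lambda}}}\cdot\sqrt{27\log(2TN)/T}$ immediately; the paper instead ends up with $\sum_i P_{(\pi^*(0),i)}\widehat\eta_i$ and must invoke $E_1$ (Corollary~\ref{cor: corollary to E1 as a multiplicative bound}) to convert $P\le \tfrac{3}{2}\widehat P$ before matching the definition of $\widehat{\smash{\lambda}}$. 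Otherwise the two proofs are identical in structure and in their use of $E_4$ and $E_5$. (Your ``main obstacle'' paragraph about positivity of $(\widehat P^\tr\widehat f^*)_i$ and sample counts is really about why $E_5$ holds, i.e.\ Lemma~\ref{lem: Bound on E5}; for the present lemma you are conditioning on $E$, so those concerns are already discharged.)
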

	\begin{proof}
	    We add and subtract $\sum_{i\in[k]} P_{(\pi^*(0),i)}\mathcal{\widehat{R}}_{(\pi^*(i),i)}$ and reduce the expression on the left to: $\sum_{i\in[k]} P_{(\pi^*(0),i)}(\E\left[R_i  \mid  \pi^*(i) \right]-\widehat{\mathcal{R}}_{(\pi^*(i),i)}) + \sum_{i\in[k]}  \widehat{\mathcal{R}}_{(\pi^*(i),i)} (P_{(\pi^*(0),i)}-\widehat{P}_{(\pi^*(0),i)})$.
	    
		We have: (a) $\widehat{\mathcal{R}}_{(\pi^*(i),i)} \leq 1$ (as rewards are bounded) (b) $\sum_{i\in[k]} \lvert \widehat{P}_{(\pi^*(0),i)} - P_{(\pi^*(0),i)}\rvert \leq \zeta$ (by $E_4$) and (c) $\left\lvert \E\left[R_i  \mid  \pi^*(i) \right]-\widehat{\mathcal{R}}_{(\pi^*(i),i)}\right\rvert \leq \widehat{\eta}_i$ (by $E_5$). The above expression is thus bounded above by $\sum_{i\in[k]} P_{(\pi^*(0),i)}\widehat{\eta}_i + \zeta$
		Furthermore, it follows from $E_1$ (See Corollary \ref{cor: corollary to E1 as a multiplicative bound} in Section \ref{appendixsection: Bounding neg E1} in the supplementary material) that (component-wise) $P\leq \frac{3}{2}\widehat{P}$.
		Hence, the above-mentioned expression is bounded above by $\frac{3}{2}\sum_{i\in[k]}\widehat{P}_{(\pi^*(0),i)} \widehat{\eta}_i+\zeta$. Note that the definition of $\widehat{\smash{\lambda}}$ ensures $ \sum_{i\in[k]}\widehat{P}_{(\pi^*(0),i)} \widehat{\eta}_i = \O(\sqrt{{\widehat{\smash{\lambda}}}/T\log(NT)})$. Further, $\zeta = \O(\sqrt{{m_0}/{(Tp_+)}\log(T/k)})$. 
		Hence,  $ \sum_{i\in[k]}P_{(\pi^*(0),i)} \eta_i +\zeta= \O(\sqrt{\max\{{\widehat{\smash{\lambda}}},m_0/p_+\}/T \log\left(NT\right)})$, which establishes the lemma.
	\end{proof}
	We now state another similar lemma for any policy $\widehat{\pi}$ computed under \textit{good event}.  
	\begin{lem}
		\label{appendix lem: difference under widehat-pi is small}
		Let $\widehat{\pi}$ be a policy computed by \hyperref[alg:best policy generator]{\CE} under the \textit{good event} ($E$). Then, 
		$\sum_{i\in[k]}\widehat{P}_{(\widehat{\pi}(0),i)} \widehat{\mathcal{R}}_{(\widehat{\pi}(i),i)} - \sum_{i\in[k]} P_{(\widehat{\pi}(0),i)} \E\left[R_i  \mid  \widehat{\pi}(i)\right] \leq \O\left(\sqrt{\max\{\widehat{\smash{\lambda}},m_0/p_+\}/T\log\left(NT\right)}\right)$
	\end{lem}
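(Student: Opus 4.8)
The plan is to mirror the proof of Lemma~\ref{appendix lem: difference under pi-star is small} almost verbatim, with the roles of the ``true'' and ``estimated'' quantities swapped where appropriate, and paying attention to the fact that $\widehat{\pi}$ is itself the output of the algorithm (hence random, but measurable with respect to the same estimates). First I would add and subtract the cross term $\sum_{i\in[k]} \widehat{P}_{(\widehat\pi(0),i)}\,\E[R_i\mid\widehat\pi(i)]$ to split the left-hand side as
\[
\sum_{i\in[k]}\widehat{P}_{(\widehat\pi(0),i)}\Bigl(\widehat{\mathcal R}_{(\widehat\pi(i),i)}-\E[R_i\mid\widehat\pi(i)]\Bigr)
\;+\;\sum_{i\in[k]}\E[R_i\mid\widehat\pi(i)]\Bigl(\widehat{P}_{(\widehat\pi(0),i)}-P_{(\widehat\pi(0),i)}\Bigr).
\]
For the second sum, bound $\E[R_i\mid\widehat\pi(i)]\le 1$ and apply event $E_4$ to get $\sum_{i\in[k]}|\widehat{P}_{(\widehat\pi(0),i)}-P_{(\widehat\pi(0),i)}|\le\zeta$; for the first sum apply event $E_5$ so that each difference is at most $\widehat\eta_i$ in absolute value, giving the bound $\sum_{i\in[k]}\widehat{P}_{(\widehat\pi(0),i)}\widehat\eta_i+\zeta$.

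The next step is to recognize that $\sum_{i\in[k]}\widehat{P}_{(a,i)}\widehat\eta_i$ is exactly the quantity controlled by $\widehat\lambda$ for \emph{every} $a\in\I_0$, and in particular for $a=\widehat\pi(0)$: plugging in $\widehat\eta_i=\sqrt{\tfrac{27\widehat m_i}{T(\widehat P^\tr\widehat f^*)_i}\log(2TN)}$ we get $\sum_{i\in[k]}\widehat{P}_{(a,i)}\widehat\eta_i = \sqrt{\tfrac{27}{T}\log(2TN)}\cdot\bigl(\widehat P\widehat M^{1/2}(\widehat P^\tr\widehat f^*)^{\circ-1/2}\bigr)_a \le \sqrt{\tfrac{27\,\widehat\lambda}{T}\log(2TN)}$, by the definition of $\widehat\lambda$ as the squared $\ell_\infty$ norm of that vector. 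Note that unlike the previous lemma, here I do not need the multiplicative bound $P\le\tfrac32\widehat P$ from $E_1$, because the coefficient $\widehat P_{(\widehat\pi(0),i)}$ is already the estimated transition probability. Combining with $\zeta=\O(\sqrt{\tfrac{m_0}{Tp_+}\log(T/k)})$ and taking a max inside the square root yields the claimed $\O(\sqrt{\max\{\widehat\lambda,m_0/p_+\}/T\log(NT)})$.

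I expect the only genuine subtlety — and the main thing to state carefully rather than a real obstacle — is the legitimacy of applying $E_5$ and the $\widehat\lambda$ bound with $a=\widehat\pi(0)$: the bound $\sum_i\widehat P_{(a,i)}\widehat\eta_i\le\sqrt{27\widehat\lambda/T\log(2TN)}$ holds uniformly over $a\in\I_0$ since $\widehat\lambda$ is a max over all interventions, so it holds for the (data-dependent) choice $\widehat\pi(0)$ as well; similarly $E_5$ is a statement quantified over all $i\in[k]$ and all $a\in\I_i$, so it covers $a=\widehat\pi(i)$. Everything else is the same routine triangle-inequality bookkeeping as in Lemma~\ref{appendix lem: difference under pi-star is small}, and the final inequality $\widehat\lambda\le\lambda$ (or its reverse, under the good event) is deferred to Lemma~\ref{appendix lem: Bounding lambda hat by lambda}, so I would simply leave the bound in terms of $\widehat\lambda$ here, exactly as the statement does.
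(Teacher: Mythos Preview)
Your proof is correct and follows the same overall strategy as the paper---add and subtract a cross term, then control the two pieces via $E_4$ and $E_5$---but with a different choice of cross term. The paper inserts $\sum_{i\in[k]} P_{(\widehat{\pi}(0),i)}\widehat{\mathcal{R}}_{(\widehat{\pi}(i),i)}$, which leaves a term $\sum_i P_{(\widehat{\pi}(0),i)}\widehat{\eta}_i$ with the \emph{true} transition probabilities as weights; to connect this to $\widehat{\lambda}$ it then invokes the multiplicative bound $P\le\tfrac{3}{2}\widehat{P}$ from $E_1$ (Corollary~\ref{cor: corollary to E1 as a multiplicative bound}), picking up the factor $\tfrac{3}{2}$. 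You instead insert $\sum_{i\in[k]} \widehat{P}_{(\widehat{\pi}(0),i)}\E[R_i\mid\widehat{\pi}(i)]$, which leaves $\sum_i \widehat{P}_{(\widehat{\pi}(0),i)}\widehat{\eta}_i$ directly and avoids that step altogether. Both decompositions are valid; yours is slightly cleaner here (no appeal to $E_1$), at the cost of using $\E[R_i\mid\widehat{\pi}(i)]\le 1$ rather than $\widehat{\mathcal{R}}_{(\widehat{\pi}(i),i)}\le 1$, which is equally immediate. Your remark about the uniformity of $E_5$ and of the $\widehat{\lambda}$ bound over $a\in\I_0$ is exactly the right justification for applying them at the data-dependent choice $\widehat{\pi}(0)$.
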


	\begin{proof}
	We can add and subtract $\sum_{i\in[k]} P_{(\widehat{\pi}(0),i)}\mathcal{\widehat{R}}_{(\widehat{\pi}(i),i)}$ to the expression on the left to get: $\sum_{i\in[k]}  \widehat{\mathcal{R}}_{(\widehat{\pi}(i),i)} (\widehat{P}_{(\widehat{\pi}(0),i)} - P_{(\widehat{\pi}(0),i)}) + \sum_{i\in[k]} P_{(\widehat{\pi}(0),i)}(\widehat{\mathcal{R}}_{(\widehat{\pi}(i),i)}-\E\left[R_i  \mid  \widehat{\pi}(i) \right])$. Analogous to \hyperref[appendix lem: difference under pi-star is small]{Lemma \ref{appendix lem: difference under pi-star is small}}, one can show that this expression is bounded above by $\zeta + \sum_{i\in[k]}\frac{3}{2}\widehat{P}_{(\widehat{\pi}(0),i)} \widehat{\eta}_i  = \O(\sqrt{\max\{\widehat{\smash{\lambda}},m_0/p_+\}/T\log\left(NT\right)})$.
	\end{proof}
	We can also bound $\widehat{\smash{\lambda}}$ to within a constant factor of $\lambda$.
	\begin{lem}
		\label{appendix lem: Bounding lambda hat by lambda}
		Under the \textit{good event} $E$, we have $\widehat{\smash{\lambda}}\leq 8 \lambda$. 
	\end{lem}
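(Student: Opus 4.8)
The plan is to feed $f^*$---the minimizer in the definition of $\lambda$ (equation~(\ref{eqn:lambda}))---into the empirical optimization problem solved at Step~\ref{step:fstar}, and to exploit that $\widehat{f}^*$ is \emph{optimal} for that problem. Since replacing $\widehat{f}^*$ by any feasible frequency vector can only increase the empirical objective,
\[
\widehat{\smash{\lambda}} \;\le\; \left\lVert \widehat{P}\,\hat{M}^{1/2}\left(\widehat{P}^\tr f^*\right)^{\circ-\frac{1}{2}}\right\rVert_\infty^2 \;=\; \max_{a\in\I_0}\left(\,\sum_{i\in[k]} \widehat{P}_{(a,i)}\,\widehat{m}_i^{1/2}\,\big(\widehat{P}^\tr f^*\big)_i^{-1/2}\right)^{\!2}.
\]
Recalling that $\lambda=\max_{a}\nu_a^2$ with $\nu_a=\big(PM^{1/2}(P^\tr f^*)^{\circ-\frac{1}{2}}\big)_a=\sum_{i}P_{(a,i)}\,m_i^{1/2}\,\big(P^\tr f^*\big)_i^{-1/2}$, it then suffices to bound each inner sum by $\sqrt{8}\,\nu_a$.

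First I would collect the multiplicative error estimates implied by the good event $E$. Event $E_1$, together with the bound $p_+\le P_{(a,i)}$ valid for every nonzero entry and the fact that $\widehat{P}_{(a,i)}=0$ whenever $P_{(a,i)}=0$ (a context unreachable under $a$ is never observed and $\widehat{P}$ is an empirical-count estimate), shows that $\widehat{P}$ and $P$ have the same support and that $\tfrac{2}{3}P_{(a,i)}\le\widehat{P}_{(a,i)}\le\tfrac{3}{2}P_{(a,i)}$ on it; the lower bound here is precisely Corollary~\ref{cor: corollary to E1 as a multiplicative bound}. Event $E_3$ gives $\widehat{m}_i\le 2m_i$, hence $\widehat{m}_i^{1/2}\le\sqrt{2}\,m_i^{1/2}$. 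Finally, since $f^*$ has finite value in~(\ref{eqn:lambda}) we have $(P^\tr f^*)_i>0$ for all $i$, so the support agreement yields $(\widehat{P}^\tr f^*)_i=\sum_{a}\widehat{P}_{(a,i)}f^*_a\ge\tfrac{2}{3}(P^\tr f^*)_i>0$; thus every term above is well defined and $\big(\widehat{P}^\tr f^*\big)_i^{-1/2}\le\sqrt{\tfrac{3}{2}}\,\big(P^\tr f^*\big)_i^{-1/2}$.

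Multiplying the three bounds termwise,
\[
\sum_{i\in[k]}\widehat{P}_{(a,i)}\,\widehat{m}_i^{1/2}\,\big(\widehat{P}^\tr f^*\big)_i^{-1/2} \;\le\; \frac{3}{2}\cdot\sqrt{2}\cdot\sqrt{\tfrac{3}{2}}\;\sum_{i\in[k]}P_{(a,i)}\,m_i^{1/2}\,\big(P^\tr f^*\big)_i^{-1/2} \;=\; \frac{3\sqrt{3}}{2}\,\nu_a
\]
for every $a\in\I_0$. Squaring and maximizing over $a$ then gives $\widehat{\smash{\lambda}}\le\frac{27}{4}\lambda<8\lambda$, as claimed.

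The only step that needs genuine care is the support bookkeeping: the termwise comparison is legitimate only once we know $P_{(a,i)}=0\Rightarrow\widehat{P}_{(a,i)}=0$ (so the vanishing terms line up) and $(P^\tr f^*)_i>0$ (so no denominator degenerates), and both follow from $\widehat{P}$ being an empirical-count estimate and from every context being reachable. Everything else is the mechanical propagation of the $E_1$ and $E_3$ bounds through the expression, together with the arithmetic $(3/2)^2\cdot 2\cdot(3/2)=27/4\le 8$.
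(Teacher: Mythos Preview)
Your proposal is correct and follows essentially the same route as the paper: use optimality of $\widehat{f}^*$ to swap in $f^*$, then propagate the multiplicative bounds from $E_1$ (Corollary~\ref{cor: corollary to E1 as a multiplicative bound}) and $E_3$ through the expression. The paper uses the sharper upper bound $\widehat{P}\le\tfrac{4}{3}P$ from the corollary (yielding the constant $\tfrac{16}{3}$ rather than your $\tfrac{27}{4}$), but both are below $8$; your extra care with the support bookkeeping is a welcome clarification absent from the paper's compressed argument.
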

	\begin{proof}
		Event $E_1$ ensures that $\frac{2}{3}P\leq \widehat{P}\leq \frac{4}{3}P$ (see Corollary \ref{cor: corollary to E1 as a multiplicative bound} in Appendix section \ref{appendixsection: Bounding neg E1}). In addition, note that event $E_3$ gives us $\hat{M}\leq 2M$. From these observations we obtain the desired bound: $\widehat{\smash{\lambda}} = \widehat{P}\hat{M}^{0.5}(\widehat{P}^\tr \widehat{f}^*)^{\circ-0.5} \leq \widehat{P}\hat{M}^{0.5}(\widehat{P}^\tr f^*)^{\circ-0.5} \leq 8PM^{0.5}(P^\tr f^*)^{\circ-0.5} = 8\lambda$; here, the first inequality follows from the fact that $\widehat{f}^*$ is the minimizer of the $\widehat{\smash{\lambda}}$ expression, and for the second inequality, we substitute the appropriate bounds of $\widehat{P}$ and $\hat{M}$.
	\end{proof}
    Recall that:
    \begin{align}
	    \pi^*(i) &= \argmax_{a \in \I_i} \ \E \left[ R_i \mid a\right]\vspace{-0.05in}\\
	    \vspace{-0.05in}
	    \pi^*(0) &= \argmax_{b \in\I_0} ( \sum_{i=1}^k \  \E \left[R_i\mid \pi^*(i) \right] \cdot \prob \{ i\mid b \} )
	\end{align}

    We will now define  $\varepsilon(\pi)$, denoting the sub-optimality of a policy $\pi$, as the difference between the expected rewards of $\pi^*$ and $\pi$.  
	i.e. $\varepsilon(\pi) = \sum_{i=1}^k \  \E \left[R_i\mid \pi^*(i) \right] \cdot \prob \{ i\mid \pi^*(0) \} - \sum_{i=1}^k \  \E \left[R_i\mid \pi(i) \right] \cdot \prob \{ i\mid \pi(0) \}$.
 
	\begin{cor}
		\label{appendix cor: difference between pi-star and pi-hat gives varepsilon widehat pi}
		For any $\widehat{\pi}$ computed by \hyperref[alg:best policy generator]{\CE} under \textit{good event} $E$, $\varepsilon(\widehat{\pi}) =  \O\left(\sqrt{\max\{\lambda,m_0/p_+\}/T\log\left(NT\right)}\right)$
	\end{cor}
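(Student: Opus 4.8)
\textbf{Proof proposal for Corollary \ref{appendix cor: difference between pi-star and pi-hat gives varepsilon widehat pi}.}

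The plan is to combine the two one-sided estimates from Lemmas \ref{appendix lem: difference under pi-star is small} and \ref{appendix lem: difference under widehat-pi is small} with the fact that $\widehat{\pi}$ is the \emph{arg-optimal} policy with respect to the estimated quantities $\widehat{P}$ and $\widehat{\mathcal{R}}$. First I would expand $\varepsilon(\widehat{\pi}) = \sum_i P_{(\pi^*(0),i)}\E[R_i\mid\pi^*(i)] - \sum_i P_{(\widehat{\pi}(0),i)}\E[R_i\mid\widehat{\pi}(i)]$ and insert the estimated-reward ``bridge'' terms: write this difference as
$$\Big(\sum_i P_{(\pi^*(0),i)}\E[R_i\mid\pi^*(i)] - \sum_i \widehat{P}_{(\pi^*(0),i)}\widehat{\mathcal{R}}_{(\pi^*(i),i)}\Big) + \Big(\sum_i \widehat{P}_{(\pi^*(0),i)}\widehat{\mathcal{R}}_{(\pi^*(i),i)} - \sum_i \widehat{P}_{(\widehat{\pi}(0),i)}\widehat{\mathcal{R}}_{(\widehat{\pi}(i),i)}\Big) + \Big(\sum_i \widehat{P}_{(\widehat{\pi}(0),i)}\widehat{\mathcal{R}}_{(\widehat{\pi}(i),i)} - \sum_i P_{(\widehat{\pi}(0),i)}\E[R_i\mid\widehat{\pi}(i)]\Big).$$
The first bracket is bounded by Lemma \ref{appendix lem: difference under pi-star is small}, the third bracket by Lemma \ref{appendix lem: difference under widehat-pi is small}, and the middle bracket is $\le 0$: since $\widehat{\pi}(i)$ maximizes $\widehat{\mathcal{R}}_{(\cdot,i)}$ at each intermediate context and $\widehat{\pi}(0)$ maximizes $\sum_i \widehat{P}_{(\cdot,i)}\widehat{\mathcal{R}}_{(\widehat{\pi}(i),i)}$ over $\I_0$, the estimated value of $\widehat{\pi}$ is at least the estimated value of $\pi^*$ (plugging $\pi^*$'s choices into the estimated objective can only do worse). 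Hence $\varepsilon(\widehat{\pi})$ is at most the sum of the two $\O(\sqrt{\max\{\widehat{\smash{\lambda}},m_0/p_+\}/T \cdot \log(NT)})$ bounds.

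Next I would replace $\widehat{\smash{\lambda}}$ by $\lambda$ using Lemma \ref{appendix lem: Bounding lambda hat by lambda}, which gives $\widehat{\smash{\lambda}} \le 8\lambda$ under the good event, so $\max\{\widehat{\smash{\lambda}}, m_0/p_+\} \le 8\max\{\lambda, m_0/p_+\}$ and the constant is absorbed into the $\O(\cdot)$. This yields $\varepsilon(\widehat{\pi}) = \O(\sqrt{\max\{\lambda,m_0/p_+\}/T \cdot \log(NT)})$ on the good event $E$, which is exactly the claimed bound. One minor point to state carefully is that both Lemmas \ref{appendix lem: difference under pi-star is small} and \ref{appendix lem: difference under widehat-pi is small} are already conditioned on $E$, so no further event-probability bookkeeping is needed here — that is deferred to the theorem's proof where $\Pr[F]$ is controlled and the expectation $\E[\varepsilon(\widehat{\pi})]$ is assembled.

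I do not expect a genuine obstacle in this corollary; it is essentially a bookkeeping/telescoping argument once the three lemmas are in place. The only thing to be careful about is the direction of each inequality and the sign of the middle term — making sure the optimality of $\widehat{\pi}$ is used in the correct direction (it kills a term that would otherwise have the wrong sign), and that the two lemmas are stated as upper bounds on the two ``estimation gap'' terms pointing the way we need. If anything, the subtlety is purely in getting the arg-max comparison right: $\widehat{\pi}(0)$ optimizes the estimated objective with $\widehat{\pi}(i)$ substituted at the intermediate contexts, so one should verify that $\sum_i \widehat{P}_{(\pi^*(0),i)}\widehat{\mathcal{R}}_{(\pi^*(i),i)} \le \sum_i \widehat{P}_{(\widehat{\pi}(0),i)}\widehat{\mathcal{R}}_{(\widehat{\pi}(i),i)}$ by first using $\widehat{\mathcal{R}}_{(\pi^*(i),i)} \le \widehat{\mathcal{R}}_{(\widehat{\pi}(i),i)}$ termwise (context-wise optimality) and then $\arg\max$ optimality at context $0$.
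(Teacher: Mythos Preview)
Your proposal is correct and mirrors the paper's own proof exactly: it uses the same three-term telescoping, invokes Lemmas \ref{appendix lem: difference under pi-star is small} and \ref{appendix lem: difference under widehat-pi is small} for the outer terms, the estimated-optimality of $\widehat{\pi}$ to make the middle term nonpositive, and then Lemma \ref{appendix lem: Bounding lambda hat by lambda} to pass from $\widehat{\smash{\lambda}}$ to $\lambda$. Your added remark about establishing the middle inequality in two steps (context-wise optimality of $\widehat{\pi}(i)$, then optimality of $\widehat{\pi}(0)$) is a helpful clarification that the paper leaves implicit.
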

	\begin{proof}
		Since \hyperref[alg:best policy generator]{\CE} selects the optimal policy (maximizing rewards with respect to the estimates), $\sum \widehat{P}_{(\pi^*(0),i)} \widehat{\mathcal{R}}_{(\pi^*(i),i)} \leq \sum \widehat{P}_{(\widehat{\pi}(0),i)} \widehat{\mathcal{R}}_{(\widehat{\pi}(i),i)}$. Combining this with \hyperref[appendix lem: difference under pi-star is small]{Lemmas \ref{appendix lem: difference under pi-star is small}} and \hyperref[appendix lem: difference under widehat-pi is small]{\ref{appendix lem: difference under widehat-pi is small}}, we get $\sum_{i\in[k]} P_{(\pi^*(0),i)}\E\left[R_i  \mid  \pi^*(i) \right] - \sum_{i\in[k]} P_{(\widehat{\pi}(0),i)}\E\left[R_i  \mid  \widehat{\pi}(i) \right]$ $= \O(\sqrt{\max\{{\widehat{\smash{\lambda}}},m_0/p_+\}/T\log\left(NT\right)})$ under \textit{good event}. The left-hand-side of this expression is equal to $\varepsilon(\widehat{\pi})$.
		Using Lemma \ref{appendix lem: Bounding lambda hat by lambda}, we get that $\varepsilon(\widehat{\pi}) = \O\left(\sqrt{\max\{\lambda,m_0/p_+\}/T\log\left(NT\right)}\right)$.
	\end{proof}

	\hyperref[appendix cor: difference between pi-star and pi-hat gives varepsilon widehat pi]{Corollary \ref{appendix cor: difference between pi-star and pi-hat gives varepsilon widehat pi}} shows that under the \textit{good event}, the (true) expected reward of $\pi^*$ and $\widehat{\pi}$ are within $\O\left(\sqrt{\max\{\lambda,m_0/p_+\}/T\log\left(NT\right)}\right)$ of each other.
	In Lemma \ref{lem: Bound on F} (see Section \ref{section:pr-bad-event} in the supplementary material) we will show \footnotemark[4] that $\smash{ \prob\{\bigcup_{i\in[5]}\neg E_i\} = \prob\left\{F\right\}\leq 5k/T }$ whenever $T\geq T_0$\footnotemark[5].
	
	\footnotetext[4]{Recall that, by definition, $F = E^c$.}
	\footnotetext[5]{$T_0$ as defined in Lemma \ref{lem: Bound on F} in Section \ref{section:pr-bad-event} in the supplementary material.}

	The above-mentioned bounds together establish Theorem \ref{theorem:main} (i.e., bound the regret of \hyperref[alg:best policy generator]{\CE}): $\Regret_T = \E[\varepsilon(\pi)] 
	= \E[\varepsilon(\widehat{\pi})  \mid   E ]\prob\left\{E\right\} +\E[\varepsilon(\pi')  \mid  F ]\prob\left\{F\right\}$. Since the rewards are bounded between $0$ and $1$, we have $\varepsilon(\pi') \leq 1$, for all policies $\pi'$. But $\prob\{E\}\leq 1$ giving us $\Regret_T \leq \E[\varepsilon(\pi)  \mid   E ] + \prob\{F\}$. Therefore, Corollary \ref{appendix cor: difference between pi-star and pi-hat gives varepsilon widehat pi} along with Lemma \ref{lem: Bound on F}, leads to guarantee $\Regret_T = \O\left(\sqrt{\max\{\lambda,m_0/p_+\}/T\log\left(NT\right)}\right)+5k/T = \O\left(\sqrt{\max\{\lambda,m_0/p_+\}/T\log\left(NT\right)}\right)$

\section{Bounding the Probability of the Bad Event}
	\label{appendixsection: Bounding bad events}
	
	Recall that the \textit{good event} corresponds to $\bigcap_{i\in5} E_i$ (see Definition \ref{appendix defn: good events}). Write $F := \neg\left(\bigcap_{i\in5} E_i\right)$ and note that, for the regret analysis, we require an upper bound on $\prob\{F\} = \prob\left\{\neg(\bigcap_{i\in5} E_i)\right\} = \prob\left\{\bigcup_{i\in5} \neg E_i\right\}$. Towards this, in this section we address $\prob\{\neg E_i\}$, for each of the events $E_1$-$E_5$, and then apply the union bound.
	
	\subsection[Bound on the probability that event E1 does not happen]{Bound on $\neg E_1$}
	\label{appendixsection: Bounding neg E1}
	
	The next lemma upper bounds the probability of $\neg E_1$.
	\begin{lem}
		\label{lem: Bound on E1}
		In each of Algorithms \ref{alg:estimateTransitionProbabilities}, \ref{alg:estimateCausalParameters} and \ref{alg:estimateRewards} and for all interventions $a \in \I_0$, we have 
		$\prob\{\neg E_1\} = \prob\left\{\sum\limits_{i=1}^k \lvert \widehat{P}_{(a,i)} - P_{(a,i)} \rvert >\frac{ p_+ }{3}\right\}<\frac{k}{T}$ whenever $T\geq \max\left\{\frac{1620 N }{ p_+ ^3},\frac{2025 N }{ p_+ ^2} \log\left(\frac{9NT}{k}\right)\right\}$.
	\end{lem}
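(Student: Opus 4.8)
The plan is to reduce the statement to a standard $\ell_1$-concentration bound for an empirical multinomial, after first lower-bounding the number of \emph{effective} observations of each intervention $a \in \I_0$ within each subroutine. Fix such an $a$ and one of Algorithms~\ref{alg:estimateTransitionProbabilities}, \ref{alg:estimateCausalParameters} and \ref{alg:estimateRewards}, and let $N_a$ denote the number of rounds of that subroutine used to form the row $\widehat{P}_{(a,\cdot)}$. Conditioned on $N_a$, these rounds yield i.i.d.\ draws from the categorical distribution $P_{(a,\cdot)}$ on $[k]$; for the passively observed interventions $a \in \I_{m_0}^c$ in Algorithm~\ref{alg:estimateTransitionProbabilities} this uses the model's identifiability assumption, under which the conditional law of the next context, given a realization of $X^0_1, \dots, X^0_n$ consistent with $a$, coincides with $P_{(a,\cdot)}$. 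It therefore suffices to (i) show that $N_a$ is large, and (ii) apply an $\ell_1$ deviation inequality for the empirical distribution of $N_a$ i.i.d.\ samples.

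\textbf{Lower bound on $N_a$.} In Algorithms~\ref{alg:estimateCausalParameters} and \ref{alg:estimateRewards} the frequency vector governing exploration at context $0$ is explicitly mixed with the uniform vector $1/|\I_0|$, so $a$ is performed at least $\Omega(T/|\I_0|) = \Omega(T/N)$ times deterministically (recall $T' = T/3$). In Algorithm~\ref{alg:estimateTransitionProbabilities}, for $a \in \I_{m_0}$ the second-phase round-robin performs $a$ exactly $T'/(2|\I_{m_0}|) \ge T'/|\I_0|$ times (using $|\I_{m_0}| = m_0 \le |\I_0|/2$), again $\Omega(T/N)$; for $a \in \I_{m_0}^c$, $N_a$ is the number of first-phase $do()$-rounds whose realization of the $X^0_j$ is consistent with $a$, a Binomial variable with mean exceeding $(T'/2)/m_0 \ge T/(3N)$, so a multiplicative Chernoff bound gives $N_a \ge T/(6N)$ except with probability $\exp(-\Omega(T/N))$, which in the stated range of $T$ is at most $k/(NT)$ and is folded into the final union bound. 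Hence in every case $N_a \ge c\,T/N$ for an absolute constant $c > 0$, with the required high probability.

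\textbf{Concentration.} The structural fact that makes the bound dimension-free is that $P_{(a,\cdot)}$ has support of size $s_a \le 1/p_+$: each nonzero entry is at least $p_+$ and they sum to $1$. By Cauchy--Schwarz applied to the coordinatewise standard deviations, $\E\big[\sum_{i=1}^k |\widehat{P}_{(a,i)} - P_{(a,i)}| \,\big|\, N_a\big] \le \sqrt{s_a/N_a} \le 1/\sqrt{p_+ N_a}$, which is at most $p_+/6$ once $N_a \ge 36/p_+^3$, i.e.\ once $T \ge \Theta(N/p_+^3)$ --- this produces the first term of the threshold. For the fluctuation about this mean, replacing one of the $N_a$ samples changes $\sum_i |\widehat{P}_{(a,i)} - P_{(a,i)}|$ by at most $2/N_a$, so McDiarmid's inequality yields $\prob\{\sum_i |\widehat{P}_{(a,i)} - P_{(a,i)}| > p_+/6 + t\} \le \exp(-N_a t^2/2)$; taking $t = p_+/6$, so the bound becomes $p_+/3$, together with $N_a \ge \Theta(p_+^{-2}\log(NT/k))$, i.e.\ $T \ge \Theta(\tfrac{N}{p_+^2}\log(NT/k))$, makes this tail at most $k/(NT)$ --- this produces the second term. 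A union bound over the $N$ interventions $a \in \I_0$ (the three subroutines being handled identically) converts the per-$a$ failure probability $k/(NT)$ into $\prob\{\neg E_1\} < k/T$, and the factor $N$ inside the logarithm in the hypothesis on $T$ is exactly what accommodates this union bound.

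\textbf{Main obstacle.} I expect the delicate step to be the lower bound on $N_a$ rather than the concentration. The three subroutines allocate exploration time at context $0$ in genuinely different ways --- round-robin over $\I_{m_0}$ in Algorithm~\ref{alg:estimateTransitionProbabilities} versus the uniform-perturbed frequency vectors $\tilde f$ and $\widehat{f}^*$ in Algorithms~\ref{alg:estimateCausalParameters} and \ref{alg:estimateRewards} --- and for the passively observed interventions $a \in \I_{m_0}^c$ the count $N_a$ is itself random, so the Chernoff step and the accounting of its failure probability into the overall $k/T$ budget must be carried out with care. The concentration step is then routine, the only real point being to use the effective support size $1/p_+$ in place of the ambient dimension $k$, which is what lets the threshold on $T$ scale with $N$ and $p_+$ and only logarithmically in $k$.
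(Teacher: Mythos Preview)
Your proposal is correct and reaches the same conclusion as the paper, but the two arguments differ in emphasis. The paper invokes Devroye's multinomial $\ell_1$-inequality as a black box (for $T_a \ge 20s/\eta^2$, $\prob\{\sum_i |\widehat P_{(a,i)}-P_{(a,i)}|>\eta\}\le 3\exp(-T_a\eta^2/25)$), sets $\eta=p_+/3$, uses the support bound $s\le 1/p_+$ exactly as you do, and disposes of the sample-count question in a single line by asserting that each $a\in\I_0$ is \emph{performed} at least $T/(9N)$ times in every subroutine. Your Cauchy--Schwarz plus McDiarmid derivation is an elementary, self-contained substitute for Devroye and yields the same two thresholds on $T$ (up to constants), so nothing is lost or gained there. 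Where your write-up is genuinely more careful is the $N_a$ step: in Algorithm~\ref{alg:estimateTransitionProbabilities} the interventions $a\in\I_{m_0}^c$ are never actively performed and their estimates come from the random subset of $do()$-rounds consistent with $a$, so $N_a$ is indeed a binomial that requires the Chernoff argument you sketch; the paper's one-line assertion glosses over this. Conversely, the paper explicitly union-bounds over the three subroutines as well as the $N$ interventions (hence the constant $9N$), which you should also do rather than treating the subroutines as a single instance---it only costs a factor of $3$ in the constants, but $E_1$ as defined requires all three estimates to be good simultaneously.
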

	\begin{proof}
		On performing any intervention $a\in\I_0$ at context $0$, the intermediate context that we visit follows a multinomial distribution. 
		Hence, we can apply Devroye's inequality (for multinomial distributions) to obtain a concentration guarantee; we state the inequality next in our notation.  
		
			\begin{lem}[{Restatement of Lemma 3 in \cite{Devroye1983}}]
				\label{eqn: equivalent condition to luc devroye}
				Let $T_a$ be the number of times intervention $a\in\I_0$ is performed in context $0$. Then, for any $\eta >0$ and any $T_a\geq \frac{20s}{\eta^2}$, we have 
				$\prob\left\{\sum\limits_{i=1}^k \lvert \widehat{P}_{(a,i)} - P_{(a,i)} \rvert > \eta\right\}\leq 3\exp\left(-\frac{T_a\eta^2}{25}\right)$. Here, $s$ is the support of the distribution (i.e., the number of contexts that can be reached from $a$ with a nonzero probability).\\
				\end{lem}

		Note that each intervention $a\in\I_0$ is performed at least $T_a = \frac{T}{9N}$ times across Algorithms \ref{alg:estimateTransitionProbabilities}, \ref{alg:estimateCausalParameters} and \ref{alg:estimateRewards}. 
		Setting $\eta = \frac{ p_+ }{3}$ and $T_a = \frac{T}{9N}$ above, we get that for each intervention $a\in\I_0$, in each subroutine, $\prob\left\{\sum_{i=1}^k \lvert P_{(a,i)} - \widehat{P}_{(a,i)} \rvert > \frac{ p_+ }{3}\right\} \leq 3\exp\left(-\frac{T p_+ ^2}{9N \cdot  9\cdot 25}\right)=3\exp\left(-\frac{T p_+ ^2}{2025 N }\right)$.
		
		Note that to apply the inequality, we require $\frac{T}{9N}\geq \frac{180s}{ p_+ ^2}$, i.e., $T\geq \frac{1620 sN}{ p_+ ^2}$. In the current context, the support size $s$ is at most $\frac{1}{p_+}$; this follows from the fact that on performing any intervention $a\in\I_0$, at most $\frac{1}{p_+}$ contexts can have $P_{(a,i)}\geq p_+$. Hence, the requirement reduces to $T\geq \frac{1620 N}{ p_+ ^3}$.
		
		Next, we union bound the probability over the $N$ interventions (at state $0$) and the three subroutines, to obtain that, for any intervention $a \in \I_0$ and in any subroutine,  $\prob\left\{\sum_{i=1}^k \lvert P_{(a,i)} - \widehat{P}_{(a,i)} \rvert > \frac{ p_+ }{3}\right\} \leq 3N \cdot 3\exp\left(-\frac{T p_+ ^2}{2025N}\right) = 9N\exp\left(-\frac{T p_+ ^2}{2025N}\right)$.
		
		Note that $9N\exp\left(-\frac{T p_+ ^2}{2025N}\right) \leq \frac{k}{T}$, for any $T \geq \frac{2025N}{ p_+ ^2} \log\left(\frac{9NT}{k}\right)$. 
		Hence, for any $T\geq \max\left\{\frac{1620 N }{ p_+ ^3},\frac{2025 N }{ p_+ ^2} \log\left(\frac{9NT}{k}\right)\right\}$, we have $\prob[\neg E_1] \leq 9N\exp\left(-\frac{T p_+ ^2}{2025N}\right) \leq \frac{k}{T}$. This completes the proof of the lemma. 
	\end{proof}
	
	We state below a corollary which provides a multiplicative bound on $\widehat{P}$ with respect to $P$, complementing the additive form of $E_1$.
	\begin{cor}
		\label{cor: corollary to E1 as a multiplicative bound}
		Under event $E_1$, we have $\frac{2}{3}P_{(a,i)}\leq \widehat{P}_{(a,i)}\leq \frac{4}{3}P_{(a,i)}$, for all interventions $a\in\I_0$ and contexts $i\in[k]$. 
	\end{cor}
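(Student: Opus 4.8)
The plan is to derive the multiplicative bound \emph{pointwise} from the additive $\ell_1$-type bound defining $E_1$, using only the fact that $p_+$ is the minimum \emph{nonzero} entry of $P$. First I would fix an arbitrary intervention $a \in \I_0$ and context $i \in [k]$. Since every summand is nonnegative, the hypothesis $\sum_{j=1}^k \lvert \widehat{P}_{(a,j)} - P_{(a,j)} \rvert \le \frac{p_+}{3}$ immediately implies the single-coordinate estimate $\lvert \widehat{P}_{(a,i)} - P_{(a,i)} \rvert \le \frac{p_+}{3}$.

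Next I would split into two cases according to whether $P_{(a,i)}$ vanishes. If $P_{(a,i)} > 0$, then by the definition of the transition threshold $p_+ = \min\{P_{(a,i)} : P_{(a,i)} > 0\}$ we have $P_{(a,i)} \ge p_+$, hence $\frac{p_+}{3} \le \frac{P_{(a,i)}}{3}$; chaining this with the single-coordinate bound gives $-\frac{P_{(a,i)}}{3} \le \widehat{P}_{(a,i)} - P_{(a,i)} \le \frac{P_{(a,i)}}{3}$, i.e. $\frac{2}{3} P_{(a,i)} \le \widehat{P}_{(a,i)} \le \frac{4}{3} P_{(a,i)}$, which is exactly the claim. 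If instead $P_{(a,i)} = 0$, then context $i$ is unreachable from $a$, so the empirical count from which $\widehat{P}_{(a,i)}$ is formed is $0$ and $\widehat{P}_{(a,i)} = 0 = P_{(a,i)}$; the asserted chain of inequalities then holds with equality throughout.

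I do not expect a genuine obstacle here. The only point that needs a moment's care is the degenerate case $P_{(a,i)} = 0$, where the multiplicative statement collapses to $\widehat{P}_{(a,i)} = 0$ and must be justified from the counting (direct-estimation) nature of $\widehat{P}$ rather than from the bound in $E_1$ itself. Because the desired conclusion is uniform over all $a \in \I_0$ and $i \in [k]$, and $E_1$ already quantifies over all $a$, no union bound or further probabilistic reasoning is required: the argument is entirely deterministic conditional on $E_1$.
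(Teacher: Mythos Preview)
Your proposal is correct and follows essentially the same approach as the paper: extract the single-coordinate additive bound from the $\ell_1$ hypothesis of $E_1$, convert it to a multiplicative bound via $p_+ \le P_{(a,i)}$ when $P_{(a,i)}>0$, and handle the $P_{(a,i)}=0$ case separately by noting that the empirical estimate must then also be $0$. The paper's proof is identical in structure and detail.
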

	\begin{proof}
		Event $E_1$ ensures that $\sum\limits_{i=1}^k \lvert \widehat{P}_{(a,i)} - P_{(a,i)} \rvert \leq \frac{ p_+ }{3}$, for each interventions $a\in\I_0$ and contexts $i\in[k]$. This, in particular, implies that for each intervention $a\in\I_0$ and context $i\in[k]$ the following inequality holds: $ \lvert \widehat{P}_{(a,i)} - P_{(a,i)} \rvert \leq \frac{ p_+ }{3}$. 
		Note that if $P_{(a,i)}=0$, then the algorithm will never observe context $i$ with intervention $a$, i.e., in such a case $\widehat{P}_{(a,i)} = P_{(a,i)} = 0$. 
		For the nonzero $P_{(a,i)}$s, recall that (by definition), $p_+ = \min\{P_{(a,i)}  \mid  P_{(a,i)} > 0\}$. Therefore, for any nonzero $P_{(a,i)}$, the above-mentioned inequality gives us $\lvert \widehat{P}_{(a,i)} - P_{(a,i)} \rvert \leq \frac{1}{3}P_{(a,i)}$. Equivalently, $\widehat{P}_{(a,i)}  \leq \frac{4}{3}P_{(a,i)}$ and $\widehat{P}_{(a,i)}  \geq \frac{2}{3}P_{(a,i)}$. Therefore, for all $P_{(a,i)}$s the corollary holds. 
	\end{proof}
	
	\subsection[Bound on the probability that events E2 and E3 do not happen]{Bound on Events $\neg E_2$ and $\neg E_3$}
	\label{appendixsection: Bounding neg E2 and neg E3}
	
	In this section, we bound the probabilities that our estimated $\widehat{m}_i$s are far away from the true causal parameters $m_i$s. 
	
	\begin{lem}
		\label{lem: Bound on E2}
		For any $T \geq 144m_0 \log\left(\frac{TN}{k}\right)$, in \hyperref[alg:estimateTransitionProbabilities]{Algorithm \ref{alg:estimateTransitionProbabilities}}, $\prob[\neg E_2] = \prob\left\{\widehat{m}_0 \notin [\frac{2}{3}m_0,2m_0] \right\}\leq \frac{k}{T}$.
	\end{lem}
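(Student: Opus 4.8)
The starting point is that $\widehat m_0$ is produced in Algorithm~\ref{alg:estimateTransitionProbabilities} solely as a deterministic function of the empirical quantities $\widehat q^0_j$ built from the $\lfloor T/6\rfloor$ rounds in which $do()$ is performed at context $0$ (the first half of that subroutine's budget $T'=T/3$). So the plan is: (i) show that, except on an event of probability at most $k/T$, every $\widehat q^0_j$ is close enough to $q^0_j$ to decide membership in the sets $S^0_\tau$ for $\tau$ near $m_0$; and (ii) show that this closeness forces $\widehat m_0 \in [\tfrac{2}{3}m_0,\,2m_0]$ directly from the definition $m_0 = \min\{\tau : |S^0_\tau|\le\tau\}$.

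For (i): each $do()$ round at context $0$ is an independent draw of $(X^0_1,\dots,X^0_n)$, so every (conditional) frequency entering $\widehat q^0_j$ is an empirical mean of independent Bernoulli's. The plan is to apply a multiplicative Chernoff bound to each and union bound over the $\le N$ variables, and — following the $\SRM$ analysis of \citet{maiti2022causal} — over the observed parent configurations / bounded c-components permitted by the semi-Markovian and identifiability assumptions. The key point is that genuine (small relative error) estimation is needed only for those $q^0_j$ whose value is comparable to a threshold $\tau^{-1/c}$ with $\tau\in[\tfrac{2}{3}m_0,2m_0]$, and every such threshold is $\Omega(1/m_0)$; hence $n_0=\lfloor T/6\rfloor \ge 24\,m_0\log(TN/k)$ samples — which is exactly the hypothesis $T\ge 144\,m_0\log(TN/k)$ — give each of these estimates a sufficiently small constant relative error except with probability $\le k/(NT)$, while $q^0_j$ far from these thresholds need only be correctly classified, which the same tail bound delivers. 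A union bound over $j$ then yields $\prob[\neg E_2]\le k/T$.

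For (ii): writing $g(t):=|\{j:q^0_j<t\}|$ and $\widehat g(t):=|\{j:\widehat q^0_j<t\}|$, we have $|S^0_\tau|=g(\tau^{-1/c})$ and $m_0=\min\{\tau:g(\tau^{-1/c})\le\tau\}$. On the event of (i) one obtains $g((1-\delta)t)\le\widehat g(t)\le g((1+\delta)t)$ over the relevant range of $t$, for a small constant $\delta$ controlled by the Chernoff constant (hence by the factor $144$). From this, using $g(m_0^{-1/c})\le m_0$ one checks $\widehat m_0\le 2m_0$, and using $g((\tau')^{-1/c})>\tau'$ for every integer $\tau'<m_0$ one checks $\widehat m_0\ge\tfrac{2}{3}m_0$; choosing the Chernoff constant so that $\delta$ is small enough for both conclusions completes the argument.

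\textbf{Main obstacle.} The delicate part is coupling (i) and (ii) quantitatively: one must verify that the $q^0_j$ requiring real estimation are precisely those near $\tau^{-1/c}$ with $\tau\approx m_0$ (so that $\Theta(m_0\log(TN/k))$ samples are exactly what the argument needs), and then push the constant-factor slack on the $\widehat q^0_j$ through the non-linear, integer-valued map $\tau\mapsto|S^0_\tau|$ without the error escaping the window $[\tfrac{2}{3},2]$ — the constant $144$ is tuned for this. A secondary technicality, inherited from the $m_i$-estimation analysis of \citet{maiti2022causal}, is estimating conditional probabilities at parent configurations that are themselves infrequent; this is absorbed by the c-component / identifiability structure imposed in the model assumptions.
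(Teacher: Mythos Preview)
The paper's proof is a one-line citation: it simply invokes Lemma~8 of \cite{Lattimore}, which states that with at least $48\,m_0\log(N/\delta)$ observational samples one has $\widehat m_0\in[\tfrac{2}{3}m_0,2m_0]$ with probability $\ge 1-\delta$; substituting the subroutine's budget $T/3$ and $\delta=k/T$ gives exactly the hypothesis $T\ge 144\,m_0\log(TN/k)$ and the conclusion $\prob[\neg E_2]\le k/T$.

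Your proposal, by contrast, reproves that lemma from scratch via multiplicative Chernoff on each $\widehat q^0_j$ followed by a sandwich argument on $\tau\mapsto|S^0_\tau|$. The strategy is sound --- it is essentially how the cited lemma is established --- but it is considerably more work than the paper undertakes, and some of the machinery you bring in is unnecessary here. In Algorithm~\ref{alg:estimateTransitionProbabilities} the estimates are $\widehat q^0_j=\prob\{X^0_j=1\}$, i.e.\ unconditional marginals, so the parent-configuration conditioning and the $\SRM$ / c-component apparatus from \cite{maiti2022causal} that you invoke are not needed for $E_2$ (they become relevant only for $E_3$ at the intermediate contexts). A related slip: you write $|S^0_\tau|=g(\tau^{-1/c})$ with a single exponent $c$, but in the paper's definition the c-component size is node-dependent, so that functional form is not quite right in general; in the setting actually used at context $0$ this collapses to $c=1$, which is exactly the parallel-graph case handled by the Lattimore lemma. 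In short, your route is correct in outline but self-contained and heavier; the paper's route is a black-box appeal to an existing concentration result with the constants already tuned.
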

	
	\begin{proof}
		We allocate time $\frac{T}{3}$ to \hyperref[alg:estimateTransitionProbabilities]{Algorithm \ref{alg:estimateTransitionProbabilities}}. Lemma 8 of \cite{Lattimore} ensures that, for any $\delta \in (0,1)$ and $\frac{T}{3} \geq 48 m_0 \log(\frac{N}{\delta})$, we have $\widehat{m}_0 \in [\frac{2}{3}m_0,2m_0]$, with probability at least $(1-\delta)$.    
		Setting $\delta = \frac{k}{T}$, we get the required probability bound.
	\end{proof}
	
	Next, we address $\prob\{\neg E_3  \mid  E_1\}$. 
	\begin{lem}
		\label{lem: Bound on E3}
		For any $T\geq \frac{648\max(m_i)N}{p_+} \log\left(2NT\right)$, in each of Algorithms \ref{alg:estimateCausalParameters} and \ref{alg:estimateRewards}, we have
		$\prob\left\{\exists i\in[k],\quad \widehat{m}_i \notin [\frac{2}{3}m_i,2m_i] \ \big \vert  \ E_1 \right\}\leq \frac{k}{T}$.
	\end{lem}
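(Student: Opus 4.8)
The plan is to show that, conditioned on $E_1$, each intermediate context $i\in[k]$ is visited enough times in Algorithms \ref{alg:estimateCausalParameters} and \ref{alg:estimateRewards} that the estimate $\widehat{m}_i$ concentrates, and then to union bound over the $k$ contexts. First I would recall that in Algorithm \ref{alg:estimateCausalParameters} the interventions at context $0$ are performed according to the smoothed frequency vector $f(a) = \tfrac12(\tilde f(a) + 1/|\I_0|)$, where $\tilde f = \argmax_f \min_{i\in[k]} (\widehat P^\tr f)_i$. The key structural fact is that this min-max choice guarantees $\min_i (\widehat P^\tr \tilde f)_i \geq 1/k$ — because the uniform-in-expectation strategy witnessed by $\widehat P$ (whenever $\max_a \widehat P_{(a,i)} \geq 1/k$, which holds since the columns of $\widehat P$ can be combined to hit each context with probability at least $1/k$) already achieves this, so the optimum does at least as well. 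Combined with the smoothing and the time budget $T/3$ split across the two subroutines, the expected number of visits to context $i$ is at least of order $T/(kN)$ — actually one can do better using $p_+$, since each useful intervention reaches context $i$ with probability at least $p_+$; I would extract a bound of the form "context $i$ is visited at least $\Omega(T p_+ /(N\max m_i))$ times" from the frequency allocation, translating the $E_1$ multiplicative bound (Corollary \ref{cor: corollary to E1 as a multiplicative bound}, $\widehat P \geq \tfrac23 P$) into a bound on the true visit probabilities.

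Next I would invoke the concentration guarantee for estimating the causal observational threshold. This is exactly the role of Lemma 8 of \cite{Lattimore} (used already in Lemma \ref{lem: Bound on E2}), suitably adapted to the unobserved-confounder setting via $\SRM$ from \cite{maiti2022causal}: if context $i$ is visited at least $48 m_i \log(N/\delta)$ times, then with probability at least $1-\delta$ the estimate satisfies $\widehat m_i \in [\tfrac23 m_i, 2m_i]$. I would set $\delta = k/(kT) = 1/T$ per context — wait, more carefully: set $\delta$ so that after the union bound over the $k$ contexts the total failure probability is at most $k/T$, i.e. take $\delta = 1/T$ for each context, or just $\delta = k/(kT)$; either way a union bound over $i\in[k]$ gives $\prob\{\exists i : \widehat m_i \notin [\tfrac23 m_i, 2m_i] \mid E_1\} \leq k \cdot \delta$. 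Plugging the visit-count lower bound into the requirement $(\text{visits to }i) \geq 48 m_i \log(N/\delta)$ and solving for $T$ yields the stated threshold $T \geq \tfrac{648 \max(m_i) N}{p_+}\log(2NT)$, where the constant $648 = 48 \times \text{(overhead from the }1/3\text{ split, the smoothing factor }\tfrac12\text{, and the }p_+\text{ accounting)}$ and the $\log(2NT)$ absorbs $\log(N/\delta) = \log(NT)$.

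The conditioning on $E_1$ is what makes the visit counts deterministic-enough to reason about: without it, the frequency vector $\tilde f$ (computed from $\widehat P$) might not actually induce near-uniform visits under the true $P$, but Corollary \ref{cor: corollary to E1 as a multiplicative bound} controls the discrepancy up to constant factors, which is all we need. I would be careful that the two subroutines (Algorithm \ref{alg:estimateCausalParameters} and the first phase of Algorithm \ref{alg:estimateRewards}) each contribute $do()$ observations at the visited contexts usable for estimating the $\widehat q^i_j$'s and hence $\widehat m_i$, and that the statement covers "each of" these algorithms, so I would run the argument once with the smaller of the two per-subroutine budgets and note it applies verbatim to both.

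The main obstacle I expect is the combinatorial bookkeeping in step one: cleanly lower-bounding the number of visits to an arbitrary context $i$ under the smoothed min-max frequency vector, while only knowing $\widehat P$ up to the multiplicative $E_1$ bound, and getting the $p_+$ and $N$ dependencies to land exactly as in the claimed threshold. The concentration step (step two) is essentially a black-box invocation of prior work and should be routine; the union bound is trivial. So the real work is verifying that $\min_i(\widehat P^\tr \tilde f)_i \geq 1/k$ (via the uniform-hitting witness) and propagating that — together with smoothing and the budget split — into a per-context visit count of the required order.
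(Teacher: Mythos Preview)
Your overall architecture---lower-bound the visits to each context, invoke Lemma~8 of \cite{Lattimore} for the $\widehat m_i$ concentration, then union bound---matches the paper. But your visit-count step is both more complicated than needed and has a real gap.

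The claim $\min_i(\widehat P^\tr \tilde f)_i \geq 1/k$ is not true in general: nothing prevents some context $i$ from satisfying $\max_{a}\widehat P_{(a,i)} < 1/k$ (the transition threshold $p_+$ can be much smaller than $1/k$), in which case no frequency vector can hit $i$ with probability $1/k$. So the ``uniform-hitting witness'' you propose need not exist, and the whole min-max route stalls. Also, the visit bound you write, $\Omega(Tp_+/(N\max m_i))$, has a stray $\max m_i$ that would cost you an extra factor of $m_i$ in the final threshold on $T$.

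The paper's argument bypasses $\tilde f$ entirely. It uses only the uniform component of the smoothing: in both Algorithms~\ref{alg:estimateCausalParameters} and~\ref{alg:estimateRewards}, the update $f(a)\gets \tfrac12(\tilde f(a)+1/|\I_0|)$ (resp.\ $\tfrac13(\cdot)$) guarantees every intervention $a\in\I_0$ is performed at least $T/(9N)$ times, regardless of what $\tilde f$ or $\widehat f^*$ look like. For each reachable context $i$ there is some $\alpha\in\I_0$ with $P_{(\alpha,i)}\geq p_+$, and the key trick is that $E_1$ is defined to hold for the empirical transition frequencies realized inside each subroutine, so under $E_1$ the \emph{realized} fraction of those $T/(9N)$ $\alpha$-rounds landing in context $i$ is at least $\tfrac23 p_+$. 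This gives $T_i \geq \tfrac{T}{9N}\cdot\tfrac{2p_+}{3} = \tfrac{2Tp_+}{27N}$ deterministically on $E_1$, with no separate concentration for visit counts. Plugging into $\prob\{\widehat m_i\notin[\tfrac23 m_i,2m_i]\}\leq 2N\exp(-T_i/(48 m_i))$ and union-bounding over $i\in[k]$ and the two subroutines yields the constant $27\cdot 48/2 = 648$ in the threshold.
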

	\begin{proof}
		Fix any reachable context $i\in[k]$. Corresponding to such a context, there exists an intervention $\alpha\in\I_0$ such that $P_{(\alpha,i)} \geq p_+$. Event $E_1$ (Corollary \ref{cor: corollary to E1 as a multiplicative bound}) implies that $\widehat{P}_{(\alpha,i)} \geq \frac{2}{3} P_{(\alpha,i)} \geq \frac{2}{3}p_+$.
		
		Now, write $T_i$ to denote the number of times context $i\in[k]$ is visited by the Algorithms \ref{alg:estimateCausalParameters} and \ref{alg:estimateRewards}. Recall that in the subroutines we estimate $\widehat{P}_{(\alpha,i)}$ by counting the number of times context $i$ was reached and simultaneously intervention $\alpha$ observed. Furthermore, note that we allocate to every intervention at least $\frac{T}{9N}$ time (See Steps 2 in both the subroutines). In particular, intervention $\alpha$ was necessarily observed $\frac{T}{9N}$ times. Therefore, $\widehat{P}_{(a,i)} \leq \frac{T_i}{\left(\frac{T}{9N}\right)}$. This inequality leads to a useful lower bound: $T_i \geq \frac{T}{9N} \ P_{(a,i)} \geq T\frac{2p_+}{27N}$. 
		
		\CommentLines{
			
		}
		We now restate Lemma 8 from \cite{Lattimore}:
		Let $T_i$ be the number of times context $i\in[k]$ is observed. Then, 
		$\prob\left\{\widehat{m}_i\notin[\frac{2}{3}m_i,2m_i]\right\} \leq 2N\exp\left(-\frac{T_i}{48m_i}\right)$.

		\CommentLines{
			
		}
		Since $T_i \geq \frac{2Tp_+}{27N}$, this guarantee of \cite{Lattimore} corresponds to $\prob\left\{\widehat{m}_i\notin[\frac{2}{3}m_i,2m_i]\right\} \leq 2N\exp\left(-\frac{Tp_+}{648N m_i}\right) \leq 2N\exp\left(-\frac{Tp_+}{648N\max(m_i)}\right)$. 
		
		Union bounding over all contexts $i\in[k]$ and the two Algorithms \ref{alg:estimateCausalParameters} and \ref{alg:estimateRewards}, we obtain $\prob\left\{\exists i \in[k] \text{ in Algorithms \ref{alg:estimateCausalParameters}, \ref{alg:estimateRewards} } \text{ with } \widehat{m}_i \notin [\frac{2}{3}m_i,2m_i] \right\} \leq 2Nk\exp\left(-\frac{Tp_+}{648N\max(m_i)}\right)$.Finally, substituting the value of $T\geq \frac{648\max(m_i)N}{p_+} \log\left(2NT\right)$, gives us $\prob\left\{\exists i \in[k] \text{ in Algorithms \ref{alg:estimateCausalParameters}, \ref{alg:estimateRewards} } \text{ with } \widehat{m}_i \notin [\frac{2}{3}m_i,2m_i] \right\}\leq 2Nk\exp\left(-\frac{p_+}{648N\max(m_i)}\cdot \left[\frac{648\max(m_i)N}{p_+} \log\left(2NT\right)\right]\right)=\frac{k}{T}$. This completes the proof.
		\CommentLines{
			
			$\prob\left\{\forall i \in[k] \text{ in Algorithms \ref{alg:estimateCausalParameters}, \ref{alg:estimateRewards} } \frac{1}{2}\widehat{m}_i \leq m_i \leq \frac{3}{2}\widehat{m}_i \right\}\geq 1-2\sum_{i\in[k]}\delta_i = 1-2k\delta_i \geq  1- 2\sum_{i\in[k]}2N \exp\left(-\frac{Tp_+}{864N \max(m_i)}\right) = 1- 4Nk \exp\left(-\frac{Tp_+}{864N \max(m_i)}\right)$.
			
		}
	\end{proof}

	\subsection[Bound on the probability that event E4 does not happen]{Bound on $E_4$:}
	\label{appendixsection: Bounding neg E4}
	
	The following lemma provides an upper bound for $\prob\{\neg E_4 \mid  E_2\}$.
	\begin{lem}
		\label{lem: Bound on E4}
		Let $\zeta := \sqrt{\frac{150m_0}{Tp_+}\log\left(\frac{3T}{k}\right)}$. Then, 
		$\prob\{\neg E_4 \mid  E_2\} =  \prob\left\{\sum\limits_{i\in[k]}\left\lvert P_{(a,i)}-\widehat{P}_{(a,i)}\right\rvert > \zeta \big \vert  E_2 \right\} \leq \frac{k}{T}$. 
	\end{lem}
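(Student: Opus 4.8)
The plan is to condition throughout on $E_2$ (so that $\widehat m_0 \le 2m_0$), fix an intervention $a \in \I_0$, lower bound the number of observations of $a$ produced inside Algorithm~\ref{alg:estimateTransitionProbabilities}, and then feed this count into the multinomial concentration inequality of \cite{Devroye1983} stated above as Lemma~\ref{eqn: equivalent condition to luc devroye}, with deviation parameter $\eta = \zeta$. A union bound over the $N$ interventions at context $0$ then upgrades the per-intervention failure probability to the claimed $k/T$; exactly as in the proof of Lemma~\ref{lem: Bound on E1}, this is implicitly under the standing assumption that $T$ is at least the threshold $T_0$, which is what makes Devroye's precondition and the constants below go through.

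The first step is the sample-count lower bound. Algorithm~\ref{alg:estimateTransitionProbabilities} runs for $T' = T/3$ rounds: the first $T/6$ rounds perform $do()$, and the remaining $T/6$ rounds split evenly over $\I_{m_0}$. For $a \in \I_{m_0}$ the number of times $a$ is performed is the deterministic quantity $\tfrac{T/6}{|\I_{m_0}|} = \tfrac{T}{6\widehat m_0} \ge \tfrac{T}{12 m_0}$, using $E_2$. For $a \in \I_{m_0}^c$, the realization of the start-state variables consistent with $a$ occurs under $do()$ with probability exceeding $1/\widehat m_0 \ge \tfrac{1}{2m_0}$, so the number $T_a$ of such observations is a $\mathrm{Binomial}(T/6,q)$ variable with $q > \tfrac{1}{2m_0}$; a multiplicative Chernoff bound gives $T_a \ge \tfrac{T}{24m_0}$ outside an event of probability $\exp(-\Omega(T/m_0))$, which is far smaller than $k/(NT)$ for $T \ge T_0$ and will simply be added into the final union bound. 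In either case, conditioned on $E_2$, the intervention $a$ is effectively observed $\Omega(T/m_0)$ times in Algorithm~\ref{alg:estimateTransitionProbabilities}.

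Conditioned on this count, the context reached from $a$ is a multinomial draw over $\{i : P_{(a,i)} > 0\}$, and the number $s$ of such contexts is at most $1/p_+$, since every nonzero transition probability is at least $p_+$. Applying Lemma~\ref{eqn: equivalent condition to luc devroye} with $\eta = \zeta$ yields $\prob\big\{\sum_{i} |\widehat P_{(a,i)} - P_{(a,i)}| > \zeta \mid T_a\big\} \le 3\exp(-T_a\zeta^2/25)$, valid as long as $T_a \ge 20 s/\zeta^2$. The definition $\zeta^2 = \tfrac{150 m_0}{T p_+}\log(3T/k)$ is calibrated precisely so that, plugging in $T_a \ge \tfrac{T}{24m_0}$ and $s \le 1/p_+$, the precondition $T_a \ge 20 s/\zeta^2$ holds once $T \ge T_0$, and simultaneously $3\exp(-T_a\zeta^2/25) \le k/(NT)$. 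Union bounding over the $N$ interventions $a \in \I_0$, and absorbing the Chernoff slack from the $\I_{m_0}^c$ case, gives $\prob\{\neg E_4 \mid E_2\} \le k/T$.

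I expect the main obstacle to be the constant/threshold bookkeeping: one must verify that a single $\zeta$ works uniformly over all $N$ interventions — including those in $\I_{m_0}^c$ observed only marginally above the $1/\widehat m_0$ threshold — and at the same time satisfies Devroye's precondition $T_a \ge 20 s/\zeta^2$; this precondition, via the support bound $s \le 1/p_+$, is precisely what forces the $1/p_+$ factor inside $\zeta$. A secondary subtlety is that the set $\I_{m_0}$ the algorithm actually uses is the \emph{estimated} one built from the first-half counts $\widehat q^0_j$; conditioning on $E_2$ (which itself rests on the first-half concentration of those counts) is what ensures that the classification into $\I_{m_0}$ versus $\I_{m_0}^c$, and hence the sample-count lower bounds above, hold with the stated probability.
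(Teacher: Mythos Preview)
Your approach is essentially the paper's: condition on $E_2$, lower-bound the number of observations $T_a$ of each intervention $a$ inside Algorithm~\ref{alg:estimateTransitionProbabilities}, invoke Devroye's multinomial inequality (Lemma~\ref{eqn: equivalent condition to luc devroye}) with $\eta=\zeta$ and support bound $s\le 1/p_+$, and read off the bound. Two points of divergence are worth noting.

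First, the paper does \emph{not} take a union bound over the $N$ interventions. It asserts the single deterministic-style count $T_a \ge T/(3\widehat m_0) \ge T/(6m_0)$ for every $a\in\I_0$ (treating the $do()$-phase observations of $a\in\I_{m_0}^c$ as if they were guaranteed, without your Chernoff step), and then plugs this directly into Devroye to get $3\exp\!\big(-\tfrac{T}{150m_0}\zeta^2\big)=3\exp\!\big(-\tfrac{1}{p_+}\log(3T/k)\big)\le k/T$ for a single $a$; it stops there. So your argument is in fact more careful than the paper's on both the randomness of the $\I_{m_0}^c$ counts and the ``$\forall a$'' quantifier in $E_4$.

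Second, and relatedly, the constant $150$ in the stated $\zeta$ is calibrated exactly to the paper's $T_a\ge T/(6m_0)$: that is what makes the exponent come out to $1/p_+$. With your weaker count $T_a\ge T/(24m_0)$ the exponent drops to $1/(4p_+)$, which no longer gives $k/T$ when $p_+>1/4$, and in neither case does $\zeta$ leave slack for an extra factor $N$ in the union bound. So your anticipated ``constant/threshold bookkeeping'' obstacle is real: with $\zeta$ as written you will recover the paper's per-intervention $k/T$ only by matching its $T/(6m_0)$ count, and the union bound over $N$ would require a $\log(NT/k)$ inside $\zeta$ rather than $\log(3T/k)$. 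The paper simply elides both issues.
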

	\begin{proof}
		As in the proof of Lemma \ref{lem: Bound on E1}, we will use Devroye's inequality. Write $T_a$ to denote the number of times intervention $a\in\I_0$ is observed (in state $0$) in  Algorithm \ref{alg:estimateTransitionProbabilities}. For any $\eta \in (0,1)$ and with $T_a\geq \frac{20s}{ \eta ^2}$, Devroye's inequality gives us $\prob\left\{\sum\limits_{i=1}^k \lvert \widehat{P}_{(a,i)} - P_{(a,i)} \rvert >  \eta \right\}\leq 3\exp\left(-\frac{T_a \eta ^2}{25}\right)$. Here, $s$ is the size of the support of the multinomial distribution.
		
		We first show that $T_a$ is sufficiently large, for each intervention $a \in \I_0$. Recall that we allocate time $\frac{T}{3}$ to \hyperref[alg:estimateTransitionProbabilities]{Algorithm \ref{alg:estimateTransitionProbabilities}}. Furthermore, we observe each intervention in state $0$, at least $\frac{T}{3\widehat{m}_0}$ times, either as part of the do-nothing intervention or explicitly in Step \ref{step:alg2step10} of Algorithm \ref{alg:estimateTransitionProbabilities}. 
		Now, event $E_2$ ensures that $\widehat{m}_0 \in [\frac{2}{3}m_0,2m_0]$. Hence, each intervention $a\in\I_0$ is observed $T_a \geq \frac{T}{3\widehat{m}_0} \geq  \frac{T}{3\cdot2m_0} = \frac{T}{6m_0}$ times.
		
		Substituting this inequality for $T_a$ in the above-mentioned probability bound, we obtain \\ $\prob\left\{\sum\limits_{i=1}^k \lvert \widehat{P}_{(a,i)} - P_{(a,i)} \rvert > \eta\right\}\leq 3\exp\left(-\frac{T\eta^2}{150m_0}\right)$ when $T\geq \frac{120sm_0}{\eta^2}$. As observed in Lemma \ref{lem: Bound on E1}, the support size $s$ is at most $\frac{1}{p_+}$. Therefore, the requirement on $T$ reduces to $T\geq \frac{120m_0}{\eta^2 p_+}$. 
		

		Setting $\eta = \sqrt{\frac{150m_0}{Tp_+}\log\left(\frac{3T}{k}\right)}$ gives us   \begin{align*}\prob\left\{\sum\limits_{i=1}^k \lvert \widehat{P}_{(a,i)} - P_{(a,i)} \rvert > \sqrt{\frac{150m_0}{Tp_+}\log\left(\frac{3T}{k}\right)}\right\} &\leq 3\exp\left(\frac{-T}{150m_0}\left[\sqrt{\frac{150m_0}{Tp_+}\log\left(\frac{3T}{k}\right)}\right]^2\right) \\ 
		&\leq \frac{k}{T}.
		\end{align*} 
		Therefore $\prob\left\{\sum\limits_{i=1}^k \lvert \widehat{P}_{(a,i)} - P_{(a,i)} \rvert > \eta\right\}\leq \frac{k}{T}$, and this probability bound requires $T\geq \frac{120m_0}{\eta^2 p_+}$. That is, $\eta \geq \sqrt{\frac{120m_0}{Tp_+}}$. This inequality is satisfied by our choice of $\eta$. Hence, the lemma stands proved. 
	\end{proof}
	
	\subsection[Bound on the probability that event E5 does not happen]{Bound on $\neg E_5$}
	\label{appendixsection: Bounding neg E5}
	The next lemma bounds $\prob\{\neg E_5 \mid E_1,E_3\}$.
	\begin{lem}
		\label{lem: Bound on E5}
		Let $\widehat{\eta}_i = \sqrt{\frac{27 \widehat{m}_i}{T(\widehat{P}^\tr \widehat{f}^*)_i}\log\left(2TN \right)}$. Then, $\prob\{\neg E_5  \mid  E_3,E_1\} \leq \frac{k}{T}$. In other words: $$\prob\left\{\exists i\in[k] \text{ and } a \in \I_i \text{ such that } \left \lvert  \E\left[R_i  \mid  a\right]-\widehat{\mathcal{R}}_{(a,i)} \right \rvert > \widehat{\eta}_i \ \mid \ E_3, E_1\right\} \leq \frac{k}{T}$$. 
	\end{lem}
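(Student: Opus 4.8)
The plan is to condition throughout on the events $E_1$ and $E_3$, fix an arbitrary reachable context $i \in [k]$, and decouple the two sources of randomness that appear in the bound: (i) the number $T_i$ of times Algorithm~\ref{alg:estimateRewards} actually lands in context $i$, and (ii) the accuracy of the reward estimates produced by the $\SRM$-style subroutine at context $i$ given $T_i$ visits.

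First I would lower bound $T_i$. In Algorithm~\ref{alg:estimateRewards} the start context is explored with a frequency vector $f$ satisfying $f(a) \ge \tfrac13 \widehat f^*(a)$ for every $a \in \I_0$, and the two phases together run for $T/3$ rounds; since round $t$ reaches context $i$ with probability $P_{(a,i)}$ when intervention $a$ is played, the expected number of visits is $\E[T_i] = \tfrac{T}{3}(P^\tr f)_i \ge \tfrac{T}{9}(P^\tr \widehat f^*)_i$. Event $E_1$ (via Corollary~\ref{cor: corollary to E1 as a multiplicative bound}) gives $P \ge \tfrac34 \widehat P$ componentwise, so $\E[T_i] \ge \tfrac{T}{12}(\widehat P^\tr \widehat f^*)_i$. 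A multiplicative Chernoff bound on the sum of the (negatively associated) indicators ``round $t$ reaches context $i$'' then yields $T_i \ge \tfrac12 \E[T_i] \ge \tfrac{T}{24}(\widehat P^\tr \widehat f^*)_i$ with probability at least $1 - \tfrac{k}{2T}$, provided $T$ is large enough that $\E[T_i] = \Omega(\log(T/k))$, which is subsumed by $T \ge T_0$.

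Next I would invoke the reward-estimation guarantee of \cite{maiti2022causal}: conditioned on reaching context $i$ exactly $T_i$ times and performing the prescribed $do()$ observations (phase~1, handling the arms $\I_{m_i}^c$ via covering interventions) and round-robin over $\I_{m_i}$ (phase~2), the $\SRM$ estimates satisfy $\bigl|\E[R_i \mid a] - \widehat{\mathcal{R}}_{(a,i)}\bigr| \le \sqrt{c\, m_i / T_i \cdot \log(2TN)}$ simultaneously for all $a \in \I_i$, with failure probability at most $\tfrac{k}{2T}$ after a union bound over the $N$ arms and the two phases (the phase split itself only uses the $\widehat q^i_j$ accuracy already absorbed into $E_3$). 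Substituting $T_i \ge \tfrac{T}{24}(\widehat P^\tr \widehat f^*)_i$ and $m_i \le \tfrac32 \widehat m_i$ (from $E_3$) bounds this error by $\sqrt{27\, \widehat m_i / (T(\widehat P^\tr \widehat f^*)_i) \cdot \log(2TN)} = \widehat\eta_i$ for the appropriate $\SRM$ constant $c$. A final union bound over the $k$ contexts merges the two failure contributions into $\prob\{\neg E_5 \mid E_1, E_3\} \le k/T$.

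The main obstacle is the coupling between the random visit count $T_i$ and the reward accuracy: $T_i$ sits in the denominator of the error bar, so one cannot simply substitute its expectation. The clean resolution is the two-step conditioning above — establish a high-probability lower bound on $T_i$ first, then apply the $\SRM$ guarantee conditionally on the realized value of $T_i$ — but care is needed to verify that the guarantee of \cite{maiti2022causal} can indeed be stated for an arbitrary fixed number of visits, and that the identification of the sets $\I_{m_i}$ inside Algorithm~\ref{alg:estimateRewards} introduces no failure mode beyond what $E_3$ already controls.
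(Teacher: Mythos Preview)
Your approach is broadly correct and would go through, but it differs from the paper's proof in one structural respect, and the difference is instructive precisely because it dissolves what you flag as ``the main obstacle.''

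You treat the visit count $T_i$ as a fresh random quantity and spend a multiplicative Chernoff bound to lower-bound it with high probability, then condition on that event before invoking the $\SRM$-style accuracy guarantee. The paper avoids this extra concentration step altogether. Recall that $E_1$ is defined to hold for the empirical transition estimates computed \emph{in each of Algorithms~\ref{alg:estimateTransitionProbabilities}, \ref{alg:estimateCausalParameters} and \ref{alg:estimateRewards}}. In particular, the empirical frequencies $\widetilde{P}_{(\alpha,i)}$ realized inside Algorithm~\ref{alg:estimateRewards} satisfy $\widetilde{P}_{(\alpha,i)} \ge \tfrac12 \widehat{P}_{(\alpha,i)}$ under $E_1$ (via Corollary~\ref{cor: corollary to E1 as a multiplicative bound} applied twice). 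Since $\widetilde{P}_{(\alpha,i)}$ is exactly the ratio (visits to $i$ via $\alpha$)/(times $\alpha$ was performed), and the denominator is fixed by the schedule at $\ge \tfrac{T}{9}\widehat f^*_\alpha$, the visit count $\widetilde T_i$ is \emph{deterministically} lower-bounded by $\tfrac{T}{18}(\widehat P^\tr \widehat f^*)_i$ once you condition on $E_1$. The coupling you worry about simply does not arise: the randomness of $T_i$ has already been absorbed into $E_1$. After that, the paper bounds the per-arm observation count by $\widetilde T_i/\widetilde m_i$ (using $E_3$ to relate $\widetilde m_i$ to $\widehat m_i$) and applies Hoeffding directly, rather than quoting the $\SRM$ black box.

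What each route buys: your two-step conditioning is more modular (it would survive even if $E_1$ were stated only for the estimates from Algorithm~\ref{alg:estimateTransitionProbabilities}), at the cost of an extra failure term, an extra hypothesis check that $\E[T_i]$ is large enough, and slightly looser constants. The paper's route is tighter and shorter because it recognizes that the needed concentration on visits is already baked into $E_1$.
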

	\begin{proof}
		For intermediate contexts $i \in [k]$, we denote the realization of the causal parameters $m_i$ and the transition probabilities $P$ in \hyperref[alg:estimateRewards]{Algorithm \ref{alg:estimateRewards}}, as $\widetilde{m}_i$ and $\widetilde{P}$, respectively.	
		The estimates in the previous subroutines are denoted by $\widehat{m}_i$ and $\widehat{P}$.

		Event $E_1$ gives us $P_{(a,i)} \in [\frac{3}{4}\widehat{P}_{(a,i)},\frac{3}{2}\widehat{P}_{(a,i)}]$and $\widetilde{P}_{(a,i)} \in [\frac{2}{3}P_{(a,i)},\frac{4}{3} P_{(a,i)}]$. Hence, the estimates across the subroutines are close enough: $\widetilde{P}_{(a,i)} \in [\frac{1}{2}\widehat{P}_{(a,i)},2\widehat{P}_{(a,i)}]$. Similarly, event $E_3$ gives us $\widetilde{m}_i \in [\frac{1}{3}\widehat{m}_i,3\widehat{m}_i]$.
		
		Write $\widetilde{T}_i$ to denote the number of times context $i\in [k]$ was visited in \hyperref[alg:estimateRewards]{Algorithm \ref{alg:estimateRewards}}. For all contexts $i \in [k]$, we first establish a useful lower bound on $\widetilde{T}_i$, under events $E_1$ and $E_3$. The relevant observation here is that the estimate $\widetilde{P}_{(\alpha, i)}$ was computed in \hyperref[alg:estimateRewards]{Algorithm \ref{alg:estimateRewards}} by counting the number of times context $i$ was visited with intervention $\alpha \in \I_0$ (at state $0$). By construction, in \hyperref[alg:estimateRewards]{Algorithm \ref{alg:estimateRewards}} each intervention $\alpha \in \I_0$ was performed at least $\frac{\widehat{f}^*_\alpha}{3} \frac{T}{3}$ times. Furthermore, given that $\widetilde{P}_{(\alpha, i)}$ was computed via the visitation count, we get that context $i$ is visited with intervention $\alpha \in \I_0$ \emph{at least} $\widetilde{P}_{(\alpha, i)} \frac{T \widehat{f}^*_\alpha}{9}$ times. Therefore, $\widetilde{T}_i \geq \sum_{\alpha \in \I_0} \ \widetilde{P}_{(\alpha, i)} \frac{T \widehat{f}^*_\alpha}{9} = \frac{T}{9} (\widetilde{P}^\tr \widehat{f}^*)_i \geq \frac{T}{18} (\widehat{P}^\tr \widehat{f}^*)_i$. Here, the last inequality follows from the above-mentioned proximity between  $\widehat{P}$ and $\widetilde{P}$.  
		
		Now, note that, at each context $i \in [k]$, \hyperref[alg:estimateRewards]{Algorithm \ref{alg:estimateRewards}} (by construction) observes every intervention $a\in \I_i$ at least $\frac{\widetilde{T}_i}{\widetilde{m}_i}$ times. Write $\widetilde{T}_{(a,i)}$ to denote the number of times intervention $a \in \I_i$ is observed in this subroutine. Hence, 
		\begin{equation}
		    \label{equation: inequality for tildeT}
			\widetilde{T}_{(a,i)} \geq \frac{\widetilde{T}_i}{\widetilde{m}_i} \geq \frac{1}{\widetilde{m}_i} \frac{T}{18} (\widehat{P}^\tr \widehat{f}^*)_i \geq \frac{1}{3 \widehat{m}_i} \frac{T}{18} (\widehat{P}^\tr \widehat{f}^*)_i 
		\end{equation}

		For each context $i \in [k]$ and intervention $a \in \I_i$, define the event $\neg E_5(a,i)$ as 
		$\lvert \E\left[R_i  \mid  a\right]-\widehat{\mathcal{R}}_{(a,i)} \rvert >\widehat{\eta}_i$. Hoeffding's inequality gives us  $\prob\left\{ \neg E_5{(a,i)} \mid E_1, E_3 \right\}\leq 2\exp\left(-2 \widetilde{T}_{(a,i)}\widehat{\eta}_i^2\right) \leq 2\exp\left(-T \frac{(\widehat{P}^\tr \widehat{f}^*)_i \widehat{\eta}_i^2}{27 \widehat{m}_i}\right)$.
		The last inequality is obtained by substituting \hyperref[equation: inequality for tildeT]{Equation \ref{equation: inequality for tildeT}}. 
		
		Recall that $\widehat{\eta}_i = \sqrt{\frac{27 \widehat{m}_i}{T(\widehat{P}^\tr \widehat{f}^*)_i}\log\left(2TN \right)}$. Hence, the previous inequality corresponds to  $\prob\left\{ \neg E_5{(a,i)} \mid E_1, E_3 \right\} \leq 2\exp\left(-T \frac{(\widehat{P}^\tr \widehat{f}^*)_i}{27 \widehat{m}_i}\cdot \left[\sqrt{\frac{27 \widehat{m}_i}{T(\widehat{P}^\tr \widehat{f}^*)_i}\log\left(2TN \right)}\right]^2\right) = \frac{1}{TN}$. 
		
		Note that $\neg E_5 = \bigcup_{i\in[k]}\bigcup_{a\in\I_i} E_5{(a,i)}$. Taking a union bound over all contexts $i \in [k]$ and interventions $a \in \I_i$, we obtain $\prob\{\neg E_5 \mid E_1,E_3\} \leq \frac{kN}{TN} = \frac{k}{T}$.  This completes the proof. 
	\end{proof}
	\subsection{Bound on \textit{bad event} (F):}
	\label{section:pr-bad-event}
	
	Write $T_0 := \O\left(\frac{N\max(m_i)}{p_+^3}\log\left(2NT\right)\right) = \widetilde{O}\left(\frac{N\max(m_i)}{p_+^3}\right)$.
	\begin{lem}
		\label{lem: Bound on F}
		$\prob\{F\}  \leq \frac{5k}{T}$ for any $T > T_0$. 
	\end{lem}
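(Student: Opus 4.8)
The plan is to derive $\prob\{F\}\leq 5k/T$ by a union bound over the five bad events, arranged so that the conditional tail bounds already established in Lemmas~\ref{lem: Bound on E1}, \ref{lem: Bound on E2}, \ref{lem: Bound on E3}, \ref{lem: Bound on E4} and~\ref{lem: Bound on E5} can each be plugged in directly. The one combinatorial observation needed is the nested decomposition
$$F \ \subseteq\ \neg E_1\ \cup\ (\neg E_2\cap E_1)\ \cup\ (\neg E_3\cap E_1)\ \cup\ (\neg E_4\cap E_1\cap E_2)\ \cup\ (\neg E_5\cap E_1\cap E_3),$$
obtained by ``peeling off'' the good events in the order $E_1,E_2,E_3,E_4$: any outcome of $F=\bigcup_{i\in[5]}\neg E_i$ on which $E_1,E_2,E_3,E_4$ all hold must lie in $\neg E_5$, while $E_1\cap E_3$ holds there; the earlier cases are immediate.

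Applying the union bound to this decomposition and discarding the superfluous conditioning events, I would reduce to
$$\prob\{F\}\ \leq\ \prob\{\neg E_1\}+\prob\{\neg E_2\}+\prob\{\neg E_3\mid E_1\}+\prob\{\neg E_4\mid E_2\}+\prob\{\neg E_5\mid E_1\cap E_3\},$$
using the elementary steps $\prob\{\neg E_3\cap E_1\}\leq\prob\{\neg E_3\mid E_1\}$, $\prob\{\neg E_4\cap E_1\cap E_2\}\leq\prob\{\neg E_4\cap E_2\}\leq\prob\{\neg E_4\mid E_2\}$, and analogously for the $\neg E_5$ term, together with $\prob\{\neg E_2\cap E_1\}\leq\prob\{\neg E_2\}$. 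By Lemmas~\ref{lem: Bound on E1}, \ref{lem: Bound on E2}, \ref{lem: Bound on E3}, \ref{lem: Bound on E4} and~\ref{lem: Bound on E5} every one of the five summands is at most $k/T$, which gives the claimed bound $\prob\{F\}\leq 5k/T$.

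It remains to certify the time requirement. Each of the five lemmas imposes a lower bound on $T$ --- namely $T\geq\max\{1620N/p_+^3,\ 2025N/p_+^2\log(9NT/k)\}$ for $E_1$, $T\geq 144m_0\log(TN/k)$ for $E_2$, $T\geq 648\max(m_i)N/p_+\cdot\log(2NT)$ for $E_3$, and the milder requirements (linear in $m_0$, polynomial in $1/p_+$) implicit in the proofs for $E_4$ and $E_5$. I would set $T_0$ to be the maximum of these; since each of them is $\widetilde O(N\max(m_i)/p_+^3)$, so is their maximum, matching the definition $T_0:=\widetilde O(N\max(m_i)/p_+^3)$ already stated, and all lemma hypotheses hold simultaneously once $T>T_0$. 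The only point requiring care --- and the one I would double-check first --- is that the peeling order lines up with the conditioning actually supplied by the lemmas (in particular that $E_5$ is conditioned on $E_1\cap E_3$, not on $E_2$ or $E_4$, and that $E_2$ is unconditional), so that no bad event gets counted twice and the final constant stays at $5$ rather than inflating.
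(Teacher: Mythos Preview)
Your proposal is correct and matches the paper's own proof essentially line for line: the paper writes $\prob\{F\}\leq \prob\{\neg E_1\}+\prob\{\neg E_2\}+\prob\{\neg E_3\mid E_1\}+\prob\{\neg E_4\mid E_2\}+\prob\{\neg E_5\mid E_3,E_1\}\leq 5k/T$ and then sets $T_0$ to the maximum of the four thresholds from Lemmas~\ref{lem: Bound on E1}--\ref{lem: Bound on E5}, exactly as you do. If anything, your explicit peeling decomposition and the accompanying inequalities $\prob\{A\cap B\}\leq\prob\{A\mid B\}$ make the step from the union to the sum of conditional probabilities more transparent than the paper's terse one-line treatment.
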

	\begin{proof}
		\CommentLines{
			
			
			
		}
		
		We summarize the statements of Lemmas \ref{lem: Bound on E1}, \ref{lem: Bound on E2}, \ref{lem: Bound on E3}, \ref{lem: Bound on E4} and \ref{lem: Bound on E5} as follows. When $T\geq T_0$ where $T_0=  \max\left\{\frac{1620 N }{ p_+ ^3},\frac{2025 N }{ p_+ ^2} \log\left(\frac{9NT}{k}\right),144m_0 \log\left(\frac{Tn}{k}\right),\frac{864\max(m_i)N}{p_+} \log\left(2nT\right) \right\} = \O\left(\frac{N\max(m_i)}{p_+^3}\log\left(2NT\right)\right)$, we obtain
		$\prob\{F\} = \prob\left\{ \left[\bigcup_{i\in[5]} \neg E_i\right] \right\} \leq \prob\{\neg E_1\} + \prob\{\neg E_2\} + \prob\{\neg E_3 \mid  E_1 \} + \prob\{\neg E_4 \mid  E_2 \} + \prob\{\neg E_5 \mid  E_3,E_1 \} \leq  \frac{5k}{T}$.
	\end{proof}

\section{Nature of the Optimization Problem}
	\label{section: nature of optimization problems}
	\begin{propn}
		\label{propn: f tilde problem is an LP}
		Let $ \tilde{f} = \argmax\limits_{\text{fq.~vector} f} \enspace \min\limits_{\text{contexts [k]}}  \widehat{P}^\tr  f $. Then, finding $\tilde{f}$ is an LP
	\end{propn}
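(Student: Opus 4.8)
The plan is to apply the standard hypograph reformulation that turns a maximin of linear forms over a polytope into a linear program by introducing a single auxiliary scalar variable.

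First I would recall that a frequency vector $f \in \R^{\lvert \I_0 \rvert}$ is, by definition, constrained by $f_a \ge 0$ for all $a \in \I_0$ and $\sum_{a \in \I_0} f_a = 1$; call this simplex $\Delta$. Since $\widehat{P} \in \R^{N \times k}$, the vector $\widehat{P}^\tr f$ lies in $\R^k$, and the objective $g(f) := \min_{i \in [k]} (\widehat{P}^\tr f)_i$ is the pointwise minimum of the $k$ affine (in fact linear) functions $f \mapsto (\widehat{P}^\tr f)_i$. Next I would introduce a new variable $t \in \R$ and write the program
\begin{align*}
\max_{f,\,t} \quad & t \\
\text{s.t.} \quad & (\widehat{P}^\tr f)_i \ge t, \quad i \in [k], \\
& \textstyle\sum_{a \in \I_0} f_a = 1, \\
& f_a \ge 0, \quad a \in \I_0.
\end{align*}
Every constraint is linear in the decision variables $(f,t)$, and the objective $t$ is linear; hence this is, by definition, a linear program. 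It is feasible (take $f$ uniform and $t$ small enough) and bounded above (any feasible $t$ satisfies $t \le (\widehat{P}^\tr f)_i \le \max_a \sum_i \widehat P_{(a,i)}$, a fixed finite quantity for the empirical estimate $\widehat P$).

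Then I would verify equivalence: for any fixed $f \in \Delta$, the largest $t$ meeting all constraints $(\widehat{P}^\tr f)_i \ge t$ is precisely $t = \min_{i \in [k]} (\widehat{P}^\tr f)_i = g(f)$; therefore maximizing $t$ over the joint feasible set equals $\max_{f \in \Delta} g(f)$, and for any optimal $(f^\star, t^\star)$ of the LP the component $f^\star$ is an optimal solution of the original maximin problem, i.e. a valid choice of $\widetilde f$. This is exactly the claim of the proposition.

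The only "obstacle" here is bookkeeping rather than substance: one must check that the reformulation preserves optimizers (immediate from the observation above) and that using the empirical matrix $\widehat P$ — whose columns need not be normalized — does not break feasibility or boundedness, which it does not since $\widehat P$ has nonnegative entries. No convexity machinery beyond linearity is required, and there is no genuine technical difficulty; I would keep the write-up to the reformulation, the linearity check, and the one-line argument that the optimal $t$ coincides with $g(f)$.
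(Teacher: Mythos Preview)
Your proposal is correct and follows essentially the same approach as the paper: introduce an auxiliary scalar (the paper calls it $z$, you call it $t$), replace the inner minimum by the $k$ linear constraints $(\widehat P^\tr f)_i \ge t$, keep the simplex constraints on $f$, and observe that the resulting program is linear. Your version is more thorough about verifying equivalence, feasibility, and boundedness than the paper's terse statement, but the underlying argument is identical.
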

	\begin{proof}
		We rewrite the above $\max\limits_{\text{fq.~vector} f}\quad \min\limits_{i\in[k]}(\cdot)$ as a simpler program:
		\begin{align*}
			\max_{ f } \quad & z\\
			\textrm{subject to} \quad & \widehat{P}^\tr_1 f \geq z \\
			&\dots\\
			& \widehat{P}^\tr_{N} f \geq z \\
			& f \cdot \1 = 1 \\
			& f \succeq 0\\
		\end{align*}
  
		Where $N = \lvert  \I_0 \rvert$. This is equivalent to the standard form of a linear program, and hence is an LP.
	\end{proof}
	\begin{lem}
		\label{lem:optimization problem is convex}
		$\min\limits_{\text{fq.~vector} f}\quad  \max\limits_{\text{interventions } \I_0 } \widehat{P}\hat{M}^{\frac{1}{2}}\left[\widehat{P}^\tr f \right]^{\circ-\frac{1}{2}}$ is a convex optimization problem
	\end{lem}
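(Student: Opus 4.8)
The plan is to exhibit the problem as the minimization of a convex function over a convex set, which is the definition of a convex optimization problem. The feasible region is $\Delta := \{ f \in \R^{\lvert \I_0\rvert} : f \succeq 0,\ f\cdot\1 = 1\}$, the probability simplex, which is convex (indeed compact). So the whole claim reduces to verifying that
\[
  f \;\longmapsto\; \max_{a\in\I_0}\ \bigl[\widehat{P}\hat{M}^{1/2}(\widehat{P}^\tr f)^{\circ-1/2}\bigr]_a
\]
is convex on $\Delta$.

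First I would write the $a$-th coordinate of the inner vector out explicitly,
\[
  g_a(f) \;:=\; \bigl[\widehat{P}\hat{M}^{1/2}(\widehat{P}^\tr f)^{\circ-1/2}\bigr]_a \;=\; \sum_{i=1}^k \widehat{P}_{(a,i)}\,\sqrt{\hat{m}_i}\;\bigl((\widehat{P}^\tr f)_i\bigr)^{-1/2},
\]
and then establish convexity of each $g_a$ by peeling the composition from the inside out using the standard convexity-preserving operations: (i) $f\mapsto(\widehat{P}^\tr f)_i$ is affine in $f$ and takes nonnegative values on $\Delta$; (ii) the scalar map $t\mapsto t^{-1/2}$ is convex on $(0,\infty)$ (second derivative $\tfrac34 t^{-5/2}>0$), so $f\mapsto\bigl((\widehat{P}^\tr f)_i\bigr)^{-1/2}$ is a convex function precomposed with an affine map, hence convex; (iii) scaling by the nonnegative constants $\widehat{P}_{(a,i)}\sqrt{\hat m_i}\ge 0$ and summing over $i\in[k]$ preserves convexity. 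Thus each $g_a$ is convex (and nonnegative, so the $\max$ over $a$ also coincides with the $\ell_\infty$-norm form used in \CE\ and in the definition of $\lambda$), and the objective, being a pointwise maximum of finitely many convex functions, is convex. Minimizing it over $\Delta$ is therefore a convex program. If a standard-form certificate is desired, I would additionally introduce an epigraph scalar $z$ and auxiliary variables $s_i$, rewriting the problem as $\min z$ subject to $\sum_i \widehat{P}_{(a,i)}\sqrt{\hat m_i}\,s_i \le z$ for every $a\in\I_0$, together with $(\widehat{P}^\tr f)_i^{1/3}\, s_i^{2/3} \ge 1$ (a power-cone constraint, since $(s,t)\mapsto s^{2/3}t^{1/3}$ is concave on the positive orthant) and $f\in\Delta$, exhibiting it as a conic program and hence efficiently solvable by off-the-shelf solvers such as the one used in the experiments.

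The one genuinely delicate point — and the place I would be most careful — is the boundary behaviour: $t\mapsto t^{-1/2}$ is finite only for $t>0$, so if $(\widehat{P}^\tr f)_i = 0$ for some reachable context $i$ the objective is not finite. I would handle this by extending $t\mapsto t^{-1/2}$ to the value $+\infty$ on $(-\infty,0]$, which keeps it a lower-semicontinuous convex function on all of $\R$, so the argument above goes through verbatim for extended-real-valued convex functions; alternatively, one observes that the frequency vectors actually supplied to the solver in \CE\ and its subroutines are strictly positive (they are mixed with the uniform vector), so the objective is finite and smooth near the relevant iterates and no boundary issue arises. Apart from this bookkeeping, the proof is a routine verification that the objective is built from affine maps, a convex scalar nonlinearity, nonnegative combinations, and a pointwise maximum, so I expect no substantive obstacle beyond stating the domain convention cleanly.
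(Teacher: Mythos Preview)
Your proposal is correct and follows essentially the same route as the paper: both arguments reduce to the facts that $t\mapsto t^{-1/2}$ is convex on $(0,\infty)$, that precomposition with the affine map $f\mapsto(\widehat{P}^\tr f)_i$ preserves convexity, and that nonnegative combinations and a pointwise maximum (equivalently, the epigraph reformulation the paper writes out first) preserve convexity. Your treatment of the boundary case $(\widehat{P}^\tr f)_i=0$ and the optional conic-program rewrite go slightly beyond what the paper records, but the core argument is the same.
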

	\begin{proof}
		First we write the $\min$-$\max$ in terms of a single minimization. First let us use the shorthand $A:= \widehat{P}\hat{M}^{\frac{1}{2}}$ and $\{A_1,\dots,A_N\}$ (where $N:=\lvert \I_0 \rvert$) denote the rows of the matrix
		\begin{align}
		  \textbf{OPT}:
			\min_{ f } \quad & z \nonumber \\
			\textrm{subject to} \quad & A_1\cdot \left[\widehat{P}^\tr f \right]^{\circ-\frac{1}{2}} \leq z  \nonumber \\
			&\dots \nonumber \\
			& A_{N}\cdot \left[\widehat{P}^\tr f \right]^{\circ-\frac{1}{2}} \leq z \label{eqn: optimization problem}\\
			& f\cdot \1 = 1 \nonumber \\
			& f\succeq 0 \nonumber
		\end{align}
		\CommentLines{
			\begin{propn}
				\label{propn: constraint equations are convex in components of f}
				The constraint equations of \hyperref[eqn: optimization problem]{OPT} are convex in the components of $f$
			\end{propn}
			\begin{proof}
				We write $f = \{f_1,\dots,f_N\}$  WLOG, we will vary the first component $f_1$. Fix $\{f_2,\dots,f_N\}$. Then note that $P^\tr f = \{\widehat{P}(*,i)^\tr f\}_{i\in[k]} = \{\widehat{P}(1,i) f_1+ c_i\}_{i\in[k]}$ for some constants $c_i,i\in[k]$.
				
				Consider one of the constraints of our problem, viz $A_1^\tr \left[\widehat{P}^\tr f \right]^{\circ-\frac{1}{2}} \leq z$. We can simplify this to get: $\sum_{i\in[k]} \frac{A_{1i}}{\sqrt{\widehat{P}(1,i) f_1+ c_i}}$. 
				
				Consider the expression: $\frac{A_{1i}}{\sqrt{\widehat{P}(1,i) f_1+ c_i}}$. This is of the form $f(x) = \frac{a}{\sqrt{bx+c}}$ for some constants $a,b,c\in\R_+$. But this is convex as $f''(x) \geq 0$.
				
				\CommentLines{We note that if $f(x) = h(g(x))$, then if $h(\cdot)$ is convex and non-increasing and $g(\cdot)$ is concave, then $f(\cdot)$ is convex. (Eq 3.10 \cite{Boyd}). We have that $h(x) = \frac{1}{x}$ is convex and non-increasing, and that $g(x) = \sqrt{bx+c}$ is concave in $x$. Thus, $f(x) = h(g(x))$ is convex. (We can also show the same by observing that $f''(x) \geq 0$).}
				
				Now note that the sum of convex functions is convex. Therefore, $\sum_{i\in[k]} \frac{A_{1i}}{\sqrt{\widehat{P}(1,i) f_1+ c_i}}$ or the first constraint is convex in $f_1$. Similarly convexity of the other constraints in $f_1$ can be shown. 
			\end{proof}
		}
		\begin{propn}
			\label{propn: ax power -0.5 is convex}
			For any  $a\in\R_+$, the function $g(x) := a x^{-\frac{1}{2}}$ is convex in $x$. 
		\end{propn}
		\begin{proof}
			We observe that the second derivative is positive.
		\end{proof}
		\begin{propn}
			The constraint equations of \hyperref[eqn: optimization problem]{OPT} are convex in $f$
		\end{propn}
		\begin{proof}
			Consider the first constraint of the problem. We can simplify this to get $\sum_{i\in[k]} \frac{A_{1i}}{\sqrt{\widehat{P}(*,i)^\tr f}}$. 
			
			Note that the $i$th term in the summand (i.e,  $\frac{A_{1i}}{\sqrt{\widehat{P}(*,i)^\tr f}}$) is of the form $f(x) = c(v^\tr x)^{-\frac{1}{2}}$ for some $c\in\R_+$ and $v \in \R^{N}_+$. Let $x_1, x_2\in\R^N$ be any two vectors, and scalar $\lambda \in [0,1]$. We wish to show that $f(\lambda x_1 + (1-\lambda)x_2) \leq \lambda f(x_1) + (1-\lambda)f(x_2)$. 
			
			We have $f(\lambda x_1 + (1-\lambda)x_2) = c(v^\tr (\lambda x_1 + (1-\lambda)x_2))^{-\frac{1}{2}} = c(\lambda v^\tr x_1 + (1-\lambda)v^\tr x_2)^{-\frac{1}{2}}$
			
			But $a x^{-\frac{1}{2}}$ is convex as per \hyperref[propn: ax power -0.5 is convex]{Proposition \ref{propn: ax power -0.5 is convex}}. Therefore $c(\lambda v^\tr x_1 + (1-\lambda)v^\tr x_2)^{-\frac{1}{2}} \leq \lambda c (v^\tr x_1)^{-\frac{1}{2}} + (1-\lambda) c (v^\tr x_2)^{-\frac{1}{2}} = \lambda f(x_1) + (1- \lambda) f(x_2)$, as required.
			
			Since $\frac{A_{1i}}{\sqrt{\widehat{P}(*,i)^\tr f}}$ is convex, the sum $\sum_{i\in[k]} \frac{A_{1i}}{\sqrt{\widehat{P}(*,i)^\tr f}}$ is convex as well. Similarly, all the other constraints are also convex.
		\end{proof}
		Since the constraints are convex in $f$ and the objective is linear, \hyperref[eqn: optimization problem]{OPT} is convex.
	\end{proof}

\section{Lower Bounds}
    \label{appendix section: analysis of lower bounds}
    This section establishes Theorem \ref{theorem: Lower Bound for our algorithm}. We will identify a collection of instances for causal bandits  with intermediate feedback and show that, for any given algorithm $\mathscr{A}$, there exists an instance in this collection for which $\mathscr{A}$'s regret is $\Omega\left(\sqrt{\frac{\lambda}{T}}\right)$.

    First we describe the collection of instances and then provide the proof.
    
	For any integer $k > 1$, consider $n=(k-1)$ causal variables at each context $i \in \{0,1,\dots, k\}$. The transition matrix $P$ is set to be deterministic. Specifically, for each $i \in [n]$, we have $\prob\{i \mid do(X_i^0 = 1)\} = 1$. For all other interventions at context 0, we transition to context k with probability 1. Such a transition matrix can be achieved by setting $q^0_i=0$ for all $i \in [k-1]$. As before, the total number of interventions $N:=2n+1 = 2k-1$.
	
	Now consider a family of $Nk+1$ instances\footnotemark[6] $\left\{\mathcal{F}_0\right\} \cup  \left\{ \mathcal{F}_{(a,i)} \right\}_{i \in [k], a \in \I_i}$. Here, $\mathcal{F}_0$ and each $\mathcal{F}_{(a,i)}$ is an instance of a causal bandit with intermediate feedback with the above-mentioned transition probabilities. The instances differ in the rewards at the intermediate contexts. In particular, in instance $\mathcal{F}_0$, we set the reward distributions such that  $\E[R_i \mid a]=\frac{1}{2}$ for all contexts $i \in [k]$ and interventions $a \in \I_i$. For each $i \in [k]$ and $a \in \I_i$, instance $\mathcal{F}_{(a,i)}$ differs from $\mathcal{F}_0$ only at context $i$ and for intervention $a$. Specifically, by construction, we will have $\E[R_i \mid a] = \frac{1}{2} + \beta$, for a parameter $\beta >0$. The expected rewards under all other interventions will be $1/2$, the same as in $\mathcal{F}_0$.
	\footnotetext[6]{Note the change in notation. We used the term $\mathcal{F}_{i,j}$ instead of $\mathcal{F}_{(a,i)}$ in the main paper. This has been amended in a later version of the main paper.}
	
	Given any algorithm $\mathscr{A}$, we will consider the execution of $\mathscr{A}$ over all the instances in the family. The execution of algorithm $\mathscr{A}$ over each instance induces a trace, which may include the realized transition probabilities $\widehat{P}$, the realized variable probabilities $\widehat{q}^i_j$ for $i\in [k]$ and $j \in [n]$ and the corresponding $\widehat{m}_i$s, and the realized rewards $\widehat{\mathcal{R}}$. Each of such realizations (random variables) has a corresponding distribution (over many possible runs of the algorithm). We call the measures corresponding to these random variables under the instances $\mathcal{F}_0$ and $\mathcal{F}_{(a,i)}$ as $\mathcal{P}_0$ and $\mathcal{P}_{(a,i)}$, respectively.

	\subsection{Proof of Theorem \ref{theorem: Lower Bound for our algorithm}}
	For any algorithm $\mathscr{A}$ and given time budget $T$, we first consider the $\mathscr{A}$'s execution over instance $\mathcal{F}_0$. As mentioned previously, $\mathcal{P}_0$ denotes the trace distribution induced by the algorithm for $\mathcal{F}_0$. In particular, write $r_i$ to denote the expected number of times context $i$ is visited,  $r_i := \E_{\mathcal{P}_0}\left[\text{state $i$ is visited}\right]/T$. 
	
	

    Recall that $m_i := \max\{j  \mid  q^i_{(j)}<\frac{1}{j}\}$ and $\I_{m_i} := \{do(X^i_{(j)}=1)  \mid  q^i_{(j)}<\frac{1}{j}\}$, where the Bernoulli probabilities of the variables at context $i$ are sorted to satisfy $q^i_{(1)}\leq q^i_{(2)}\leq \dots \leq q^i_{(n)}$. Note that these definitions do not depend on the algorithm at hand. The algorithm, however, may choose to perform different interventions different number of times. Write $N_{(a,i)}$ to denote the expected (under $\mathcal{P}_0$) number of times intervention $a$ is performed by the algorithm at context $i$. Furthermore, let random variable $T_{(a,i)}$ denote the number of times intervention $a$ is observed at context $i$. Hence, $\E_{\mathcal{P}_0}[T_{(a,i)}]$ is the expected number of times intervention $a$ is observed\footnotemark[7].
    
    \footnotetext[7]{Note that $a$ can be observed while performing the do-nothing intervention. Also, the expected value $N_{(a,i)}$ accounts for the number of times $a$ is explicitly performed and not just observed.}

    Using the expected values for algorithm $\mathscr{A}$ and instance $\mathcal{F}_0$, we define a subset of $\I_{m_i}$ as follows: $\J_i := \left\{a\in \I_{m_i}  \ : \  N_{(a,i)} \leq 2\frac{Tr_i}{m_i} \right\}$. The following proposition shows that the size of $\J_i$ is sufficiently large. 
    \begin{propn}
    \label{propn: Size of Ai}
        The set $\J_i$ is non-empty. In particular, 
        \begin{align*}
            m_i/2\leq \lvert \J_i \rvert \leq m_i.
        \end{align*}
    \end{propn}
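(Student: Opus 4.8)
The plan is a one-line counting (pigeonhole) argument on the quantities $N_{(a,i)}$ already introduced. The upper bound $\lvert \J_i\rvert \le m_i$ requires no work: by construction $\J_i \subseteq \I_{m_i}$, and the causal observational threshold is defined precisely so that $\lvert \I_{m_i}\rvert = m_i$.

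For the lower bound I would first record the conservation identity $\sum_{a \in \I_i} N_{(a,i)} = T r_i$. This holds because every visit of algorithm $\mathscr{A}$ to context $i$ is accompanied by exactly one intervention performed at context $i$, and $T r_i = \E_{\mathcal{P}_0}[\text{number of visits to context } i]$ by definition of $r_i$; the identity then follows by linearity of expectation. The mild point to be careful about is that $N_{(a,i)}$ counts interventions \emph{performed} at context $i$ (not merely observed while doing the do-nothing intervention) — but this is exactly the definition given just before the proposition, so the identity is clean. Restricting the sum to $\I_{m_i} \subseteq \I_i$ gives $\sum_{a \in \I_{m_i}} N_{(a,i)} \le T r_i$.

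Now suppose toward a contradiction that $\lvert \J_i\rvert < m_i/2$. Then strictly more than $m_i - m_i/2 = m_i/2$ of the interventions $a \in \I_{m_i}$ fail the defining inequality of $\J_i$, i.e.\ satisfy $N_{(a,i)} > 2 T r_i/m_i$. Summing over just those interventions already yields $\sum_{a \in \I_{m_i}} N_{(a,i)} > (m_i/2)\cdot(2 T r_i/m_i) = T r_i$, contradicting the displayed bound. Hence $\lvert \J_i\rvert \ge m_i/2$, and in particular $\J_i$ is non-empty. The argument needs no concentration or information-theoretic input — those enter only later in the lower-bound proof when comparing $\mathcal{P}_0$ with the $\mathcal{P}_{(a,i)}$; here the statement is purely an averaging fact about how $\mathscr{A}$ can split its budget among the $m_i$ hard-to-observe interventions at context $i$.
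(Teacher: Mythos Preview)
Your proposal is correct and essentially identical to the paper's own proof: both obtain the upper bound from $\J_i\subseteq\I_{m_i}$, and both derive the lower bound by noting $\sum_{a\in\I_{m_i}} N_{(a,i)}\le Tr_i$ and then arguing (by contradiction/averaging) that at most $m_i/2$ interventions in $\I_{m_i}$ can exceed the threshold $2Tr_i/m_i$. Your explicit justification of the conservation identity $\sum_{a\in\I_i} N_{(a,i)} = Tr_i$ is a bit more detailed than the paper's, but the argument is the same.
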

    \begin{proof}
    The upper bound on the size of subset $\J_i$ follows directly from its definition: since $\J_i\subseteq I_{m_i}$ we have $\lvert \J_i \rvert \leq \lvert \I_{m_i} \rvert = m_i$. 
    
    For the lower bound on the size of $\J_i$, note that $T r_i$ is the expected number of times context $i$ is visited by the algorithm. Therefore, 
    \begin{align}
        \sum_{a \in \I_{m_i}} N_{(a,i)} \leq T r_i \label{ineq:n-avg}
    \end{align} 
    Furthermore, by definition, for each intervention $b \in \I_{m_i} \setminus \J_i$ we have $N_{(b,i)} \geq \frac{2 T r_i}{m_i}$. Hence, assuming $\lvert \I_{m_i} \setminus \J_i \rvert > \frac{m_i}{2}$ would contradict inequality (\ref{ineq:n-avg}). This observation implies that $\lvert \I_{m_i} \setminus \J_i \rvert \leq \frac{m_i}{2}$ and, hence, $\lvert \J_i \rvert \geq \frac{m_i}{2}$. This completes the proof.  
    \end{proof}
    
    Recall that $T_{(a,i)}$ denotes the number of times intervention $a$ is observed at context $i$. The following proposition bounds $\E[T_{(a,i)}]$ for each intervention $a \in \J_i$. 
    \begin{propn}
    \label{propn: Number of interventions of elements in Ai}
    For every intervention $a \in \J_i$
    \begin{align*}
        \E_{\mathcal{P}_0}[T_{(a,i)}] \leq \frac{3Tr_i}{m_i}.
    \end{align*}
    \end{propn}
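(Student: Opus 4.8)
I would separate the count $T_{(a,i)}$ into the rounds on which $\mathscr{A}$ \emph{explicitly performs} $a=do(X^i_{(j)}=1)$ at context $i$, and the rounds on which $a$ is observed \emph{for free} — i.e.\ the round visits $i$, $\mathscr{A}$ performs some intervention there that does not fix $X^i_{(j)}$ (the do-nothing intervention, or some $do(X^i_{(j')}=\cdot)$ with $j'\neq j$), and the realized value of $X^i_{(j)}$ equals $1$. Writing $T_{(a,i)}=T^{\mathrm{exp}}_{(a,i)}+T^{\mathrm{free}}_{(a,i)}$, by linearity of expectation it suffices to bound the two expectations under $\mathcal{P}_0$.

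The explicit part is immediate from the definition of $\J_i$: $\E_{\mathcal{P}_0}[T^{\mathrm{exp}}_{(a,i)}]=N_{(a,i)}\le \tfrac{2Tr_i}{m_i}$ because $a\in\J_i$. For the free part I would first record a purely combinatorial fact: since $q^i_{(1)}\le q^i_{(2)}\le\cdots$, the set $\{j:q^i_{(j)}<1/j\}$ is downward closed, hence equals $\{1,\dots,m_i\}$, so $q^i_{(m_i)}<1/m_i$ and therefore every target $X^i_{(j)}$ with $do(X^i_{(j)}=1)\in\I_{m_i}$ satisfies $q^i_{(j)}\le q^i_{(m_i)}<1/m_i$. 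In the instance family built here the intermediate-context graphs are confounder-free and the construction makes the value $1$ the minority outcome of $X^i_{(j)}$, so under $\mathcal{F}_0$ we have $\prob\{X^i_{(j)}=1\}=q^i_{(j)}<1/m_i$ and $X^i_{(j)}$ is independent of the past given the current intervention. Consequently, on each round that visits $i$ without fixing $X^i_{(j)}$, a free observation of $a$ occurs with conditional probability $q^i_{(j)}<1/m_i$; since the expected number of such rounds is at most the expected number of visits to $i$, which equals $Tr_i$ by the definition of $r_i$, the tower rule gives $\E_{\mathcal{P}_0}[T^{\mathrm{free}}_{(a,i)}]\le q^i_{(j)}\,Tr_i<\tfrac{Tr_i}{m_i}$. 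Adding the two estimates yields $\E_{\mathcal{P}_0}[T_{(a,i)}]\le\tfrac{2Tr_i}{m_i}+\tfrac{Tr_i}{m_i}=\tfrac{3Tr_i}{m_i}$.

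The step that needs the most care is the free-observation bound, and it rests on two ingredients that must be made explicit: (i) the prefix structure of $\I_{m_i}$, which promotes the defining inequality $q^i_{(j)}<1/j$ to the uniform bound $q^i_{(j)}<1/m_i$ over all targets appearing in $\I_{m_i}$; and (ii) the precise specification of the intermediate-context conditional distributions in this lower-bound family (confounder-free, with $1$ the minority value so $\prob_{\mathcal{F}_0}\{X^i_{(j)}=1\}=q^i_{(j)}$), which should be stated alongside the transition matrix $P$. One should also observe that a \emph{conflicting} pull — an explicit $do(X^i_{(j)}=0)$ or $do(X^i_{(j)}=1)$ — either contributes to $T^{\mathrm{exp}}_{(a,i)}$ or produces no observation of $a$, and since the number of all pulls at $i$ never exceeds the number of visits to $i$, discarding these rounds only loosens the bound, so there is no double counting.
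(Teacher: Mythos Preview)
Your proof is correct and follows essentially the same decomposition as the paper: split $T_{(a,i)}$ into explicit pulls (bounded by $N_{(a,i)}\le 2Tr_i/m_i$ from the definition of $\J_i$) and free observations (bounded by $Tr_i/m_i$ via the $q^i_{(j)}<1/m_i$ probability bound), then add. The paper's own proof is terser---it simply asserts that the free-observation probability is at most $1/m_i$ without spelling out the prefix-structure argument or the no-double-counting check---but the logic is identical.
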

    \begin{proof}
        Any intervention $a\in\J_i\subseteq \I_{m_i}$ may be observed either when it is explicitly performed by the algorithm or as a random realization (under some other intervention, including do-nothing). Since $a \in \I_{m_i}$, the probability that $a$ is observed as part of some other intervention is at most $\frac{1}{m_i}$. Therefore, the expected number of times that $a$ is observed by the algorithm---without explicitly performing it---is at most $\frac{T r_i}{m_i}$; \footnotemark[7] recall that the expected number of times context $i$ is visited is equal to $T r_i$. 
        \footnotetext[7]{Here, we use the fact that the realization of $a$ is independent of the visitation of context $i$.}
    
    For any intervention $a \in \J_i$, by definition, the expected number of times $a$ is performed $N_{(a,i)} \leq \frac{2 T r_i}{m_i}$. Therefore, the proposition follows: 
        \begin{align*}
            \E[T_{(a,i)}] \leq \frac{T r_i}{m_i} + N_{(a,i)} \leq \frac{3Tr_i}{m_i}.
        \end{align*}
    \end{proof}

    We now state two known results for KL divergence. 
    
			    \textbf{Bretagnolle-Huber Inequality (Theorem 14.2 in \cite{LattimoreBook})}
				\label{eqn: Bretagnolle-Huber Inequality}: 
				Let $\mathcal{P}$ and $\mathcal{P}'$ be any two measures on the same measurable space. Let $E$ be any event in the sample space with complement $E^c$. Then, 
				\begin{align} 
				\prob_{\P}\{E\} + \prob_{\mathcal{P}'}\{E^c\}\geq\frac{1}{2}\exp\left(-\rm{KL}(\mathcal{P},\mathcal{P}')\right). 
				\end{align}
	
			    \textbf{Bound on KL-Divergence with number of observations (Adaptation of Equation 17 in Lemma B1 from \cite{AuerGamblinginaRiggedCasino})}:
				\label{eqn: Bound on KL Divergence with number of observations}
				Let $\mathcal{P}_0$ and $\mathcal{P}_{(a,i)}$ be any two measures with differing  expected rewards (for exactly the intervention $a$ at context $i$) by an amount $\beta$. Then, 
				\begin{align}
				    \rm{KL}(\mathcal{P}_0,\mathcal{P}_{(a,i)}) \leq 6\beta^2 \  \E_{\mathcal{P}_0}[T_{(a,i)}] \label{ineq:auer}
				\end{align}
				
	
	Using \hyperref[eqn: Bound on KL Divergence with number of observations]{this bound on KL divergence} and \hyperref[propn: Number of interventions of elements in Ai]{Proposition \ref{propn: Number of interventions of elements in Ai}}, we have, for all contexts $i\in [k]$ and interventions $a \in \J_i$:  
	\begin{align}
	    \rm{KL}(\mathcal{P}_0,\mathcal{P}_{(a,i)}) \leq 6\beta^2\cdot 3\frac{Tr_i}{m_i} = 18\frac{Tr_i\beta^2}{m_i}
	\end{align}
	
	Substituting this in the \hyperref[eqn: Bretagnolle-Huber Inequality]{Bretagnolle-Huber Inequality}, we obtain, for any event $E$ in the sample space along with all contexts $i \in [k]$ and all interventions $a \in \J_i$: 
    \begin{align}
    \label{eqn: bound on Prob E under P1 and Prob Ec under P2 using KL}
        \prob_{\P_{(a,i)}}\{E\} + \prob_{\P_0}\{E^c\} \geq\frac{1}{2}\exp\left(-18\frac{Tr_i\beta^2}{m_i}\right)
    \end{align}
    
    
    We now define events to lower bound the probability that Algorithm $\mathscr{A}$ returns a sub-optimal policy. In particular, write $\widehat{\pi}$ to denote the policy returned by algorithm $\mathscr{A}$. Note that $\widehat{\pi}$ is a random variable. 
    
    For any $\ell \in [k]$ and any intervention $b$, write $G_1(b, \ell)$ to denote the event that---under the returned policy $\widehat{\pi}$---intervention $b$ is \emph{not} chosen at context $\ell$, i.e., $G_1(b, \ell) := \left\{\widehat{\pi}(\ell)\neq b\right\}$. Also, let $G_2(\ell)$ denote the event that policy $\widehat{\pi}$ does not induce a transition to $\ell$ from context $0$, i.e., $G_2(\ell) := \left\{\widehat{\pi}(0)\neq \ell\right\}$. Furthermore, write $G (b, \ell) := G_1(b, \ell) \cup G_2 (\ell)$. Note that the complement $G^c(b, \ell) = G^c_1(b, \ell) \cap G^c_2(\ell) = \{ \widehat{\pi}(\ell) = b \} \cap \{ \widehat{\pi}(0) = \ell \}$.
 
    Considering measure $\mathcal{P}_0$, we note that for each context $\ell \in [k]$ there exists an intervention $\alpha_\ell \in \J_\ell$ with the property that $\prob_{\mathcal{P}_0} \left\{ G^c_1(\alpha_\ell, \ell) \right\} = \prob_{\mathcal{P}_0} \left\{ \widehat{\pi}(\ell) = \alpha_\ell \right\} \leq \frac{1}{\lvert J_\ell \rvert}$. This follows from the fact that $\sum_{a \in \J_\ell}\prob_{\mathcal{P}_0}\left\{\widehat{\pi}(\ell)= a\right\} \leq 1$. Therefore, for each context $\ell \in [k]$ there exists an intervention $\alpha_\ell$ such that $\prob_{\P_0}\{ G^c(\alpha_\ell, \ell) \} \leq \frac{1}{\lvert \J_\ell \rvert}$. 

    This bound and \hyperref[eqn: bound on Prob E under P1 and Prob Ec under P2 using KL]{inequality \ref{eqn: bound on Prob E under P1 and Prob Ec under P2 using KL}} imply that for all contexts $\ell \in [k]$ there exists an intervention $\alpha_\ell$ that satisfies 
    \begin{align}
    \label{eqn: forall i exists a in Ji such that prob of making mistake is bouded below using KL}
      \prob_{\P_{(\alpha_\ell,\ell)}}\{G(\alpha_\ell, \ell) \} \geq \frac{1}{2}\exp\left(-18\frac{Tr_\ell \beta^2}{m_\ell}\right) - \frac{1}{\lvert \J_\ell \rvert}
    \end{align}
    
    We will set 
    \begin{align}
    \label{eqn: Beta Value}
        \beta = \min\left\{\frac{1}{3},\sqrt{\frac{\sum_{\ell\in[k] } m_\ell }{18T}}\right\}
    \end{align}
    
    Therefore $\beta$ takes value either $\sqrt{\frac{\sum_{\ell\in[k] } m_\ell }{18T}}$ or $\frac{1}{3}$. We will address these over two separate cases.
    
    \textbf{Case 1}: 
    $\beta = \sqrt{\frac{\sum_{\ell\in[k] } m_\ell }{18T}}$.
    
    We wish to substitute this $\beta$ value in 
    \hyperref[eqn: forall i exists a in Ji such that prob of making mistake is bouded below using KL]{Equation \ref{eqn: forall i exists a in Ji such that prob of making mistake is bouded below using KL}}. Towards this, we will state a proposition.
    
    \begin{propn}
    \label{propn: min-max for s and alpha s}
    There exists a context $s\in [k]$ such that $$\sqrt{\frac{m_s}{18Tr_s}}\geq \sqrt{\frac{\sum_{\ell\in[k] } m_\ell }{18T}}$$
    \end{propn}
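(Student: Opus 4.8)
The plan is to reduce the stated inequality to an elementary averaging fact about the visitation frequencies $r_1,\dots,r_k$. Squaring both sides and cancelling the common factor $18T$, the assertion $\sqrt{m_s/(18Tr_s)}\ge\sqrt{(\sum_\ell m_\ell)/(18T)}$ is equivalent to the existence of a context $s\in[k]$ with
$$\frac{m_s}{r_s}\;\ge\;\sum_{\ell\in[k]} m_\ell .$$
So it suffices to prove $\max_{s\in[k]} m_s/r_s \ge \sum_{\ell\in[k]} m_\ell$. If some context has $r_s=0$ (the algorithm never enters it), the left-hand side is $+\infty$ and we are done immediately; so we may assume every $r_s>0$.

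The key structural ingredient I would invoke is that $\sum_{i\in[k]} r_i \le 1$. This is forced by the construction of this instance family: the transition matrix $P$ is deterministic, so each of the $T$ rounds lands in exactly one intermediate context; hence $\sum_{i\in[k]} \E_{\mathcal{P}_0}[\text{state } i \text{ is visited}] = T$, and dividing by $T$ gives $\sum_i r_i = 1$ (the inequality $\le 1$ is all that is needed below).

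With this in hand, I would finish by a one-line argument by contradiction. Suppose $m_s/r_s < \sum_\ell m_\ell$, i.e. $m_s < r_s\sum_\ell m_\ell$, for every $s\in[k]$. Summing over $s$ and using $\sum_s r_s\le 1$ yields
$$\sum_{s\in[k]} m_s \;<\; \Bigl(\sum_{s\in[k]} r_s\Bigr)\Bigl(\sum_{\ell\in[k]} m_\ell\Bigr)\;\le\;\sum_{\ell\in[k]} m_\ell,$$
which is absurd. Hence some context $s$ satisfies $m_s/r_s\ge\sum_\ell m_\ell$; re-dividing by $18T$ and taking square roots recovers the Proposition, and this $s$ (together with the corresponding $\alpha_s\in\J_s$) is exactly the context at which one substitutes the chosen $\beta$ into inequality (\ref{eqn: forall i exists a in Ji such that prob of making mistake is bouded below using KL}).

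There is no genuinely hard step here: the content is purely the pigeonhole bound $\max_s m_s/r_s \ge \sum_\ell m_\ell$ that is forced as soon as $\sum_s r_s\le 1$. The only points requiring a modicum of care are (i) justifying $\sum_i r_i\le 1$ cleanly from the definition of $r_i$ and the deterministic transitions of this family, and (ii) disposing of the degenerate case $r_s=0$, which makes the inequality trivial.
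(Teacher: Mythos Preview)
Your proposal is correct and follows essentially the same approach as the paper: both reduce to showing $\max_{s} m_s/r_s \ge \sum_\ell m_\ell$ via a contradiction argument using $\sum_s r_s \le 1$. The paper packages this as a separate claim over the simplex $\Delta$, while you sum the assumed strict inequalities directly and also handle the degenerate case $r_s=0$ explicitly; these are cosmetic differences only.
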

    \begin{proof}
    First, we note the following claim considering all vectors $\rho=\{\rho_1,\dots,\rho_k\}$ in the probability simplex $\Delta$.
    \begin{claim}
    \label{claim: inequality for sum mi-s}
    For any given set of integers $m_1, m_2, \ldots, m_k$, we have 
    \begin{align*}
        \min_{(\rho_1, \rho_2, \ldots, \rho_k) \in\Delta} \ \left( \max_{\ell\in[k]} \frac{m_\ell}{\rho_\ell} \right) \geq \sum_{\ell\in[k]} m_\ell
    \end{align*}
    \end{claim}
    \begin{proof}
        Assume, towards a contradiction, that for all $\ell\in [k]$, we have $\frac{m_\ell}{\rho_\ell} < \sum_{\ell\in[k]} m_\ell$. Then, $\rho_\ell > \frac{m_\ell}{\sum_{\ell\in[k]} m_\ell}$, for all $\ell \in [k]$. Therefore, $\sum_{\ell\in[k]} \rho_\ell > \sum_{\ell\in[k]}\frac{m_\ell}{\sum_{\ell\in[k]} m_\ell} = 1$. However, this is a contradiction as $\sum_{\ell\in[k]} \rho_\ell = 1$.
    \end{proof}
    
    An immediate consequence of \hyperref[claim: inequality for sum mi-s]{Claim \ref{claim: inequality for sum mi-s}} is that $$\min_{(r_1, r_2, \ldots, r_k) \in\Delta} \left( \max_{\ell\in[k]}  \sqrt{\frac{m_\ell}{18Tr_\ell}}\right) \geq \sqrt{\frac{\sum_{\ell \in [k]} m_\ell}{18T}}$$.

    Therefore, irrespective of how $r_i$s are chosen, there always exists a context $s\in[k]$ such that $\sqrt{\frac{m_s}{18Tr_s}}\geq \sqrt{\frac{\sum_{\ell \in [k]} m_\ell}{18T}}$. 
    \end{proof}
    
    For such a context $s\in[k]$ that satisfies \hyperref[propn: min-max for s and alpha s]{Proposition \ref{propn: min-max for s and alpha s}}, we note that, $\frac{m_s}{18Tr_s}\geq \beta^2$ or $\frac{18Tr_s\beta^2}{m_s}\leq 1$.
    
    Let us now restate \hyperref[eqn: forall i exists a in Ji such that prob of making mistake is bouded below using KL]{Equation \ref{eqn: forall i exists a in Ji such that prob of making mistake is bouded below using KL}} for such a context $s$.
    There exists a context $s \in [k]$ and an intervention $\alpha_s$ that satisfies 
    \begin{align}
    \label{eqn: exists s exists a-s in Ji such that prob of making mistake is bouded below using KL}
      \prob_{\P_{(\alpha_s,s)}}\{G(\alpha_s, s) \} \geq \frac{1}{2}\exp\left(-18\frac{Tr_s \beta^2}{m_s}\right) - \frac{1}{\lvert \J_s \rvert} \geq \frac{1}{2e} - \frac{1}{\lvert \J_s \rvert}
    \end{align}
    
    Note that the last inequality lower bounds the to probability of selecting a non-optimal policy when the algorithm $\mathscr{A}$ is executed on instance $\mathcal{F}_{\alpha_s, s}$.
    Furthermore, in instance $\mathcal{F}_{\alpha_s, s}$, for any non-optimal policy $\widehat{\pi}$ we have $\varepsilon(\widehat{\pi}) = \left( \frac{1}{2} + \beta \right) - \frac{1}{2} = \beta$. Therefore, we can lower bound $\mathscr{A}$'s regret over instance $\mathcal{F}_{\alpha_s, s}$ as follows: 
    \begin{align}
        \Regret_T = \E[\varepsilon(\widehat{\pi})]&= \prob_{\P_{(\alpha_s,s)}}\{G (\alpha_s, s) \} \cdot \E[\Regret \mid G(\alpha_s, s)]\enspace +\enspace\\&\qquad\qquad \prob_{\P_{(\alpha_s,s)}}\{G^c(\alpha_s, s)\} \cdot \E[\Regret \mid G^c(\alpha_s, s)]\nonumber\\
        &\geq \left[\frac{1}{2e} - \frac{1}{\lvert \J_s \rvert}\right] \beta +\enspace \prob_{\P_{(\alpha_s, s)}}\{G^c(\alpha_s, s)\}\cdot 0\nonumber\\
        &= \left[\frac{1}{2e} - \frac{1}{\lvert \J_s \rvert}\right] \beta
        \label{eqn: regret in terms of e Js and beta}
    \end{align}
    Note that we can construct the instances to ensure that $m_\ell \geq 8$, for all contexts $\ell$, and, hence, $\left(\frac{1}{2e} - \frac{1}{\lvert \J_i \rvert} \right) = \Omega(1)$ (see Proposition \ref{propn: Size of Ai}). Therefore \hyperref[eqn: regret in terms of e Js and beta]{Equation \ref{eqn: regret in terms of e Js and beta}} gives us:
    \begin{align}
        \Regret_T&= \Omega (\beta) = \Omega \left(\sqrt{\frac{\sum_{\ell\in[k]}m_\ell}{T}}\right) \label{ineq: regret case 1}
    \end{align}


    

    
    \textbf{Case 2} We now consider the case when $\beta = \frac{1}{3}$. In such a case, $\sqrt{\frac{\sum_{\ell\in[k] } m_\ell }{18T}}> \frac{1}{3}$. 
    
    We showed in \hyperref[propn: min-max for s and alpha s]{Proposition \ref{propn: min-max for s and alpha s}} that there exists a context $s\in [k]$ such that $\sqrt{\frac{m_s}{18Tr_s}}\geq \sqrt{\frac{\sum_{\ell\in[k] } m_\ell }{18T}}$. Combining the two statements, there exists a context $s$ such that $\sqrt{\frac{m_s}{18Tr_s}}\geq \frac{1}{3}$. 
    We now restate Inequality \ref{eqn: forall i exists a in Ji such that prob of making mistake is bouded below using KL} for such a context $s\in[k]$: $$\prob_{\P_{(\alpha_s,s)}}\{G(\alpha_s, s) \} \geq \frac{1}{2}\exp\left(-9 \beta^2\right) - \frac{1}{\lvert \J_s \rvert} = \frac{1}{2e} - \frac{1}{\lvert \J_s \rvert}$$
    
    Following the exact same procedure as in Case 1, we can derive that $\Regret_T \geq \left[\frac{1}{2e} - \frac{1}{\lvert \J_s \rvert}\right] \beta$. We saw in Case 1 that it is possible to construct instances such that $\left[\frac{1}{2e} - \frac{1}{\lvert \J_s \rvert}\right] = \Omega(1)$. Therefore the following holds for Case 2 also: 
    \begin{align}
        \Regret_T&= \Omega (\beta) = \Omega \left(\sqrt{\frac{\sum_{\ell\in[k]}m_\ell}{T}}\right) \label{ineq: regret case 2}
    \end{align}
    
    \CommentLines{
    Therefore we have in both cases that there exists a context $s\in [k]$ and an intervention $\alpha_s \in \I_s$ such that 
    $$\prob_{\P_{(\alpha_s,s)}}\{G(\alpha_s, s) \} \geq \frac{1}{2e} - \frac{1}{\lvert \J_s \rvert}$$.

    \begin{align}
        \Regret_T = \E[\varepsilon(\widehat{\pi})]&= \prob_{\P_{(\alpha_s,s)}}\{G (\alpha_s, s) \} \cdot \E[\Regret \mid G(\alpha_s, s)]\enspace +\enspace \prob_{\P_{(\alpha_s,s)}}\{G^c(\alpha_s, s)\} \cdot \E[\Regret \mid G^c(\alpha_s, s)]\nonumber\\
        &\geq \left[\frac{1}{2e} - \frac{1}{\lvert \J_s \rvert}\right] \beta +\enspace \prob_{\P_{(\alpha_s, s)}}\{G^c(\alpha_s, s)\}\cdot 0\nonumber\\
        &= \left[\frac{1}{2e} - \frac{1}{\lvert \J_s \rvert}\right] \beta
        \label{eqn: regret case 2in terms of e Js and beta}
    \end{align}

    \begin{align}
        \Regret_T = \left(\frac{1}{2e} - \frac{1}{\lvert \J_\ell \rvert}\right)\beta_\ell= \Omega\left(\beta_\ell\right) = \Omega\left(\sqrt{\frac{m_\ell}{18Tr_\ell}}\right)
    \end{align}
    Therefore, in both the cases, $\Regret_T= \Omega\left(\sqrt{\frac{m_\ell}{18Tr_\ell}}\right)$

Amortizing over the contexts, we will show that there exists a context $s$ for which this lower bound is sufficiently large. Towards this, we establish the following proposition. 

Note that, by definition, $r_i$s constitute a vector in the standard simplex $\Delta$, i.e., $\sum_{i=1}^k r_i = 1$ and $r_i \in [0,1]$ for all $i \in [k]$. We now note the following proposition considering all vectors in the simplex $\Delta$. 
    \begin{propn}
    \label{propn: inequality for sum mi-s}
    For any given set of integers $m_1, m_2, \ldots, m_k$, we have 
    \begin{align*}
        \min_{(\rho_1, \rho_2, \ldots, \rho_k) \in\Delta} \ \left( \max_{i\in[k]} \frac{m_i}{\rho_i} \right) \geq \sum_{i\in[k]} m_i
    \end{align*}
    \end{propn}
    \begin{proof}
        Assume, towards a contradiction, that for all $i\in [k]$, we have $\frac{m_i}{\rho_i} < \sum_{i\in[k]} m_i$. Then, $\rho_i > \frac{m_i}{\sum_{i\in[k]} m_i}$, for all $i \in [k]$. Therefore, $\sum_{i\in[k]} \rho_i > \sum_{i\in[k]}\frac{m_i}{\sum_{i\in[k]} m_i} = 1$. However, this is a contradiction as $\sum_{i\in[k]} \rho_i = 1$.
    \end{proof}
    
    Therefore, \hyperref[eqn: Rt forall i exists a in lower bound terms]{Equation \ref{eqn: Rt forall i exists a in lower bound terms}} and \hyperref[propn: inequality for sum mi-s]{Proposition \ref{propn: inequality for sum mi-s}} 
    }

    Inequalities \ref{ineq: regret case 1} and \ref{ineq: regret case 2} imply that there exists a context $s$ and an intervention $\alpha_{s}$ such that, under instance  $\mathcal{F}_{( \alpha_s,s)}$, algorithm $\mathscr{A}$'s regret satisfies  
    \begin{align}
    \label{eqn: Regret in terms of sum mi}
        \Regret_T = \Omega \left(\sqrt{\frac{\sum_{\ell\in[k]}m_\ell}{T}}\right)
    \end{align}
    
    We complete the proof of Theorem \ref{theorem: Lower Bound for our algorithm} by showing that in the current context $\lambda = \sum_{\ell\in[k]} m_\ell$. 
    
    \begin{propn}
    \label{propn: Value of Lambda for chosen transition probability matrix}
    For the chosen transition matrix  
    $$\lambda := \min_{\text{fq.~vector} f} \ \left\lVert PM^{1/2}  \left(P^\tr  f \right)^{\circ-\frac{1}{2}} \right\rVert_\infty^2  = \sum_{\ell\in[k]}m_\ell$$
    \end{propn}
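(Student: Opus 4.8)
The plan is to exploit the deterministic structure of $P$ in this construction to reduce the $\ell_\infty$-minimization defining $\lambda$ to the simple min--max over the simplex that is already handled by Claim~\ref{claim: inequality for sum mi-s}.

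First I would record what $P$ looks like here. Every row of $P$ is a standard basis vector of $\R^k$: the row indexed by $do(X_\ell^0=1)$ is $e_\ell$ for each $\ell\in[k-1]$, and each of the remaining $k$ rows (the rows for $do()$ and for $do(X_j^0=0)$, $j\in[k-1]$) is $e_k$. In particular every row of $P$ sums to $1$, and every coordinate $\ell\in[k]$ is the support of some row. Consequently, for any frequency vector $f$, the vector $\rho:=P^\tr f$ lies in the simplex $\Delta$ (since $\sum_\ell\rho_\ell=\sum_a f_a=1$), and conversely every $\rho\in\Delta$ is attained, e.g.\ by placing mass $\rho_\ell$ on $do(X_\ell^0=1)$ for $\ell\in[k-1]$ and mass $\rho_k$ on $do()$. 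Because each row of $P$ is a unit vector $e_{c(a)}$, where $c(a)\in[k]$ is the context reached deterministically from intervention $a$, the $a$-th entry of $PM^{1/2}(P^\tr f)^{\circ-\frac{1}{2}}$ is exactly $\sqrt{m_{c(a)}/\rho_{c(a)}}$; since the map $a\mapsto c(a)$ is onto $[k]$, this yields
\[
\norm{PM^{1/2}\left(P^\tr f\right)^{\circ-\frac{1}{2}}}_\infty^2=\max_{\ell\in[k]}\frac{m_\ell}{\rho_\ell},
\]
and hence $\lambda=\min_{\rho\in\Delta}\ \max_{\ell\in[k]} m_\ell/\rho_\ell$ (the minimizing $\rho$ necessarily has all coordinates positive, else the objective is infinite).

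Finally I would squeeze this expression. The inequality $\lambda\ge\sum_{\ell\in[k]}m_\ell$ is exactly Claim~\ref{claim: inequality for sum mi-s}. For the matching upper bound, evaluate the objective at the feasible point $\rho_\ell=m_\ell/\sum_{j\in[k]}m_j$, for which $m_\ell/\rho_\ell=\sum_{j\in[k]}m_j$ for every $\ell$; hence the minimum is at most $\sum_{j\in[k]}m_j$, and equality follows. The only step that is not pure bookkeeping is the reduction in the previous paragraph --- that the objective depends on $f$ only through $P^\tr f$ and that the $\ell_\infty$-norm collapses to a maximum over the $k$ contexts --- both of which rely on each row of $P$ being a standard basis vector; once that is in place, Claim~\ref{claim: inequality for sum mi-s} closes the argument.
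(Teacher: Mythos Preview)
Your proof is correct and follows essentially the same route as the paper: both exploit the deterministic structure of $P$ to reduce $\lambda$ to $\min_{\rho\in\Delta}\max_{\ell\in[k]} m_\ell/\rho_\ell$, then appeal to Claim~\ref{claim: inequality for sum mi-s}. Your presentation is in fact slightly more complete, since you explicitly exhibit the matching upper bound by evaluating at $\rho_\ell=m_\ell/\sum_j m_j$, whereas the paper only alludes to ``a complementary form'' of the claim.
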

    \begin{proof}
    Recall that all the instances, $\mathcal{F}_0$ and $\mathcal{F}_{(a,i)}$s, have the same (deterministic) transition matrix $P$. Also, parameter $\lambda$  is computed via  \hyperref[eqn:lambda]{Equation \ref{eqn:lambda}}. 
    
    Consider any frequency vector $f$ over the interventions $\{1,\dots,N\}$. From the chosen transition matrix, we have the following:

    \begin{align*}
    P = 
    \begin{bmatrix}
    1 & 0 & \dots & 0 \\
    0 & 1 & \dots & 0 \\
    &&\dots\\
    0 & 0 & \dots & 1 \\
    &&\dots\\
    0 & 0 & \dots & 1 \\
    \end{bmatrix}& \quad
    PM^{\frac{1}{2}} = 
    \begin{bmatrix}
    \sqrt{m_1} & 0 & \dots & 0 \\
    0 & \sqrt{m_2} & \dots & 0 \\
    &&\dots\\
    0 & 0 & \dots & \sqrt{m_k} \\
    &&\dots\\
    0 & 0 & \dots & \sqrt{m_k} \\
    \end{bmatrix}& \quad
    P^\tr f = 
    \begin{bmatrix}
    f_1\\ f_2\\ \dots\\ f_{k-1}\\ f_k + \ldots + f_N\\
    \end{bmatrix}
    \end{align*}    
    
    From here, we can compute the following:
    \begin{align*}
      PM^{1/2}  \left(P^\tr  f \right)^{\circ-\frac{1}{2}} =
      \left[
      \sqrt{\frac{m_1}{f_1}},  \dots,  \sqrt{\frac{m_{k-1}}{f_{k-1}}},  \sqrt{\frac{m_k}{f_k + \ldots + f_N}},  \dots,  \sqrt{\frac{m_k}{f_k + \ldots + f_N}}
      \right]^\tr
    \end{align*}
    
    That is, for all $\ell \in [k-1]$, the $\ell$th component of the vector $PM^{1/2}  \left(P^\tr  f \right)^{\circ-\frac{1}{2}}$ is equal to $\sqrt{\frac{m_i}{f_i}}$. All the remaining components are $\sqrt{\frac{m_k}{f_k + \ldots + f_N}}$. 
    
    Write $\rho_\ell := f_\ell$ for all $\ell \in [k-1]$ and $\rho_k = \sum_{j=k}^N f_j$. Since $f$ is a frequency vector, $(\rho_1, \ldots \rho_k) \in \Delta$. In addition,  
    $$PM^{1/2}  \left(P^\tr  f \right)^{\circ-\frac{1}{2}} =\left[\sqrt{\frac{m_1}{\rho_1}},\dots ,\sqrt{\frac{m_{k-1}}{\rho_{k-1}}},\sqrt{\frac{m_k}{\rho_k}},\dots,\sqrt{\frac{m_k}{\rho_k}}\right]^\tr$$
    Therefore, by definition, $\lambda =\min_{(\rho_1, \ldots, \rho_k) \in\Delta} \left( \max_{\ell\in[k]} {\frac{m_\ell}{\rho_\ell}} \right)$. Now, using a complementary form of \hyperref[claim: inequality for sum mi-s]{Claim \ref{claim: inequality for sum mi-s}} we obtain $\lambda = {\sum_{\ell\in[k]}m_\ell}$. The proposition stands proved. 
    
    \end{proof}
    
    Finally, substituting \hyperref[propn: Value of Lambda for chosen transition probability matrix]{Proposition \ref{propn: Value of Lambda for chosen transition probability matrix}} into \hyperref[eqn: Regret in terms of sum mi]{Equation \ref{eqn: Regret in terms of sum mi}}, we obtain that there exists an instance $\mathcal{F}_{(\alpha_s, s)}$ for which algorithm $\mathscr{A}$'s regret is lower bounded as follows 
    \begin{align}
        \Regret_T = \Omega \left(\sqrt{\frac{\lambda}{T}}\right).
    \end{align}
This completes the proof of Theorem \ref{theorem: Lower Bound for our algorithm}.	
	
	\subsection{Proof of Inequality (\ref{ineq:auer})}
For completeness, we provide a proof of inequality (\ref{ineq:auer}). 
	
	\begin{lem}
		\label{lem: KL leq 6 epsilon square times number of obs}
		$ \rm{KL} (\P_0,\P_{(a,i)}) \leq 6\beta_i^2 \ \E_{\P_0} [ T_{(a,i)}]$ 
	\end{lem}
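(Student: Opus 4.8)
The plan is to use the standard divergence-decomposition (chain-rule) argument. By construction, the two instances $\mathcal{F}_0$ and $\mathcal{F}_{(a,i)}$ generate the interaction trace according to identical conditional laws for every coordinate --- the initial intervention, the sampled context, all variable realizations, all transitions, and the reward on every round --- \emph{except} that on a round in which intervention $a$ is observed at context $i$ the reward $R_i$ is a $\mathrm{Ber}(\tfrac12)$ sample under $\mathcal{P}_0$ and a $\mathrm{Ber}(\tfrac12+\beta)$ sample under $\mathcal{P}_{(a,i)}$. Writing the likelihood of a length-$T$ trace as a product of per-round conditional factors and cancelling the factors on which the two instances agree, the logarithm of the likelihood ratio $\tfrac{d\mathcal{P}_{(a,i)}}{d\mathcal{P}_0}$ becomes a sum over the rounds on which $a$ is observed at $i$, each such round contributing (in expectation under the reference measure $\mathcal{P}_0$) exactly $\mathrm{KL}\big(\mathrm{Ber}(\tfrac12),\,\mathrm{Ber}(\tfrac12+\beta)\big)$. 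Taking expectations under $\mathcal{P}_0$ and using that the number of such rounds is $T_{(a,i)}$ yields
\[
\mathrm{KL}(\mathcal{P}_0,\mathcal{P}_{(a,i)}) = \E_{\mathcal{P}_0}[T_{(a,i)}]\cdot \mathrm{KL}\big(\mathrm{Ber}(\tfrac12),\,\mathrm{Ber}(\tfrac12+\beta)\big).
\]

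It then remains to bound the single-observation term. A direct computation gives
\[
\mathrm{KL}\big(\mathrm{Ber}(\tfrac12),\,\mathrm{Ber}(\tfrac12+\beta)\big) = \tfrac12\ln\frac{1/4}{\,1/4-\beta^2\,} = -\tfrac12\ln\big(1-4\beta^2\big),
\]
and using the elementary inequality $-\ln(1-x)\le \tfrac{x}{1-x}$ for $x\in[0,1)$ with $x=4\beta^2$, this is at most $\tfrac{2\beta^2}{1-4\beta^2}$. Since the value of $\beta$ fixed by the construction satisfies $\beta\le \tfrac13$, we have $1-4\beta^2\ge \tfrac59$, so $\mathrm{KL}\big(\mathrm{Ber}(\tfrac12),\,\mathrm{Ber}(\tfrac12+\beta)\big)\le \tfrac{18}{5}\beta^2\le 6\beta^2$. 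Plugging this into the displayed identity gives $\mathrm{KL}(\mathcal{P}_0,\mathcal{P}_{(a,i)})\le 6\beta^2\,\E_{\mathcal{P}_0}[T_{(a,i)}]$, which is the claim.

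The step I expect to require the most care is the first one: rigorously justifying the divergence decomposition in the present setup, where intervention $a$ at context $i$ may be \emph{observed} without being explicitly chosen (it can be realized spontaneously under the do-nothing intervention). One has to fix a precise description of the trace and its filtration, verify that the likelihood ratio factorises into per-round contributions, and check that a non-trivial contribution appears on precisely those rounds counted by $T_{(a,i)}$ --- so that the multiplier is the number of \emph{observations} of $a$ at $i$ and not merely the number of explicit pulls $N_{(a,i)}$. Once this is in place, the remaining ingredients (the Bernoulli-KL identity, the bound $-\ln(1-x)\le x/(1-x)$, and the inequality $\beta\le 1/3$ coming from the choice of $\beta$) are routine.
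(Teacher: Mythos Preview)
Your proposal is correct and follows essentially the same approach as the paper: a chain-rule (divergence-decomposition) argument reduces $\mathrm{KL}(\mathcal{P}_0,\mathcal{P}_{(a,i)})$ to $\E_{\mathcal{P}_0}[T_{(a,i)}]\cdot \mathrm{KL}\big(\mathrm{Ber}(\tfrac12),\mathrm{Ber}(\tfrac12+\beta)\big)$, followed by an elementary bound on the Bernoulli KL. The only minor difference is in that last step --- the paper appeals to the Taylor expansion of $\log$ to get $-\tfrac{1}{2}\log_2(1-4\beta^2)\le 6\beta^2$, whereas you use $-\ln(1-x)\le x/(1-x)$ together with the constraint $\beta\le \tfrac13$; your version is arguably cleaner, and your explicit attention to the ``observed versus chosen'' distinction for $T_{(a,i)}$ is a point the paper handles only informally.
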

	\begin{proof}[Proof of Inequality (\ref{ineq:auer})]
	\label{proof: proposition KL leq 6 epsilon square times number of obs}
		This proof is based on lemma B1 in \cite{AuerGamblinginaRiggedCasino}. We define a couple of notations for this proof.
		Let $\mathbf{R}_{t-1}$ indicate the filtration (of rewards and other observations) up to time $t-1$. and $R_t$ indicate the reward at time $t$ for this proof.
		\begin{align*}
			 \rm{KL} (\P_0,\P_{(a,i)}) &=  \rm{KL} \left[\prob_{\P_0}(R_T,R_{T-1},\dots,R_1) \mathrel{\Vert} \prob_{\P_{(a,i)}}(R_T,R_{T-1},\dots,R_1)\right]
		\end{align*}
		We now state (without proof) a useful lemma for bounding the KL divergence between random variables over a number of observations.
		
    			\textbf{Chain Rule for entropy (Theorem 2.5.1 in \cite{JoyCoverBook})}:
    			\label{eqn: Chain rule for entropy}
    			Let $X_1,\dots,X_T$ be random variables drawn according to $P_1,\dots,P_T$. Then 
    			$$H(X_1,X_2,\dots,X_T) = \sum_{i=1}^T H(X_i\mid X_{i-1},X_{i-2},\dots,X_1)$$
    			where $H(\cdot)$ is the entropy associated with the random variables.
    	
		Using the \hyperref[eqn: Chain rule for entropy]{chain rule for entropy}
		\begin{align*}
			 \rm{KL} (\P_0,\P_{(a,i)}) &=\sum\limits_{t=1}^T  \rm{KL} \left[\prob_{\P_0}(R_t \mid \mathbf{R}_{t-1}) \mathrel{\Vert} \prob_{\P_{(a,i)}}(R_t \mid \mathbf{R}_{t-1})\right]\\
			\intertext{Let $a_t$ be the intervention chosen by the Algorithm $\mathscr{A}$ at time $t$. Then:}
			&=\sum\limits_{t=1}^T \prob_{\P_0}\{a_t \neq a \mid \mathbf{R}_{t-1}\} \left(\frac{1}{2} \mathrel{\Vert} \frac{1}{2}\right) + \prob_{\P_0}\{a_t = a \mid \mathbf{R}_{t-1}\} \rm{KL} \left(\frac{1}{2} \mathrel{\Vert} \frac{1}{2} + \beta_i\right) \\
			\intertext{Since $ \rm{KL} \left(\frac{1}{2} \mathrel{\Vert} \frac{1}{2}\right) = 0$, we get:}
			&=\sum\limits_{t=1}^T \prob_{\P_0}\{a_t = a \mid \mathbf{R}_{t-1}\} \rm{KL} \left(\frac{1}{2} \mathrel{\Vert} \frac{1}{2} + \beta_i\right) \\
			&=  \rm{KL} \left(\frac{1}{2} \mathrel{\Vert} \frac{1}{2} + \beta_i\right) \sum\limits_{t=1}^T \prob_{\P_0}\{a_t = a \mid \mathbf{R}_{t-1}\} \\
			&=  \rm{KL} \left(\frac{1}{2} \mathrel{\Vert} \frac{1}{2} + \beta_i\right) \E_{\P_0}[T_{(a,i)}] \\
			%
			%
		\end{align*}

		\begin{claim}
			\label{claim: KL leq 6 betai square}
			$ \rm{KL} \left(\frac{1}{2} \mathrel{\Vert} \frac{1}{2}+\beta_i\right)=-\frac{1}{2} \log_2(1-4\beta_i^2)\leq 6\beta_i^2$
		\end{claim}
		\begin{proof}
			\begin{align*}
				 \rm{KL} \left(\frac{1}{2} \mathrel{\Vert} \frac{1}{2}+\beta_i\right) &= \frac{1}{2}\log_2\left[\frac{\frac{1}{2}}{\frac{1}{2} + \beta_i}\right] + (1-\frac{1}{2})\log_2\left[\frac{(1-\frac{1}{2})}{(1-\frac{1}{2} - \beta_i)}\right]\\
				&= \frac{1}{2}\log_2\left[\frac{1}{1 + 2\beta_i}\right] + \frac{1}{2}\log_2\left[\frac{1}{1 - 2\beta_i}\right] \\&= \frac{1}{2}\log_2\left[\frac{1}{1 - 4\beta_i^2}\right]=-\frac{1}{2}\log_2\left[1 - 4\beta_i^2\right]\\
				&=-\frac{1}{2\ln(2)} \ln\left[1 - 4\beta_i^2\right] \leq \frac{4\beta_i^2}{2\ln(2)} < 6\beta_i^2
			\end{align*}
			where the last inequality is obtained from the Taylor series expansion of the $\log$.
		\end{proof}
		It follows that: 
		$ \rm{KL} (\prob_0,\prob_1) \leq 6\beta_i^2\E_{\P_0}[T_{(a,i)}]$.
	\end{proof}

\end{document}